\declaretheorem[name=Corollary,numberwithin=section]{corollary}
\declaretheorem[name=Definition,numberwithin=section]{definition}
\DeclareMathOperator{\diag}{diag}
\newcommand{\mcalX}{\mathcal{X}}
\newcommand{\mcalA}{\mathcal{A}}
\newcommand{\mcalC}{\mathcal{C}}
\newcommand{\mcalK}{\mathcal{K}}
\definecolor{Gray}{gray}{0.92}
\definecolor{LightBlue}{RGB}{173, 216, 230}
\newcolumntype{?}{!{\vrule width 1.5pt}}
\newcommand{\revision}[1]{{\color{black}#1}}
\colorlet{LightBlue}{LightBlue!40!white}
\icmltitlerunning{Subgraphormer: Unifying Subgraph GNNs and Graph Transformers via Graph Products}
\begin{document}

\twocolumn[
\icmltitle{Subgraphormer: Unifying Subgraph GNNs and Graph Transformers \\ via Graph Products}

% It is OKAY to include author information, even for blind
% submissions: the style file will automatically remove it for you
% unless you've provided the [accepted] option to the icml2024
% package.

% List of affiliations: The first argument should be a (short)
% identifier you will use later to specify author affiliations
% Academic affiliations should list Department, University, City, Region, Country
% Industry affiliations should list Company, City, Region, Country

% You can specify symbols, otherwise they are numbered in order.
% Ideally, you should not use this facility. Affiliations will be numbered
% in order of appearance and this is the preferred way.
\icmlsetsymbol{equal}{*}

\begin{icmlauthorlist}
\icmlauthor{Guy Bar-Shalom}{cstech}
\icmlauthor{Beatrice Bevilacqua}{purdue}
\icmlauthor{Haggai Maron}{ecetech,nvidia}
%\icmlauthor{}{sch}
%\icmlauthor{}{sch}
\end{icmlauthorlist}

\icmlaffiliation{cstech}{Department of Computer Science, Technion - Israel Institute of Technology}
\icmlaffiliation{purdue}{Department of Computer Science, Purdue University}
\icmlaffiliation{ecetech}{Department of Electrical \& Computer Engineering, Technion - Israel Institute of Technology}
\icmlaffiliation{nvidia}{NVIDIA Research}

\icmlcorrespondingauthor{Guy Bar-Shalom}{guy.b@cs.technion.ac.il}

% You may provide any keywords that you
% find helpful for describing your paper; these are used to populate
% the "keywords" metadata in the PDF but will not be shown in the document
\icmlkeywords{Machine Learning, ICML}

\vskip 0.3in
]

% this must go after the closing bracket ] following \twocolumn[ ...

% This command actually creates the footnote in the first column
% listing the affiliations and the copyright notice.
% The command takes one argument, which is text to display at the start of the footnote.
% The \icmlEqualContribution command is standard text for equal contribution.
% Remove it (just {}) if you do not need this facility.

%\printAffiliationsAndNotice{}  % leave blank if no need to mention equal contribution
\printAffiliationsAndNotice{} % otherwise use the standard text.

\begin{abstract}
In the realm of Graph Neural Networks (GNNs), two exciting research directions have recently emerged: Subgraph GNNs and Graph Transformers.
In this paper, we propose an architecture that integrates both approaches, dubbed \texttt{Subgraphormer}, which combines the enhanced expressive power, message-passing mechanisms, and aggregation schemes from Subgraph GNNs with attention and positional encodings, arguably the most important components in Graph Transformers. Our method is based on an intriguing new connection we reveal between Subgraph GNNs and product graphs, suggesting that Subgraph GNNs can be formulated as Message Passing Neural Networks (MPNNs) operating on a product of the graph with itself. We use this formulation to design our architecture: first, we devise an attention mechanism based on the connectivity of the product graph. Following this, we propose a novel and efficient positional encoding scheme for Subgraph GNNs, which we derive as a positional encoding for the product graph. Our experimental results demonstrate significant performance improvements over both Subgraph GNNs and Graph Transformers on a wide range of datasets.
\end{abstract}

\section{Introduction}
% motivation: expressivity of GNNs
Due to their scalability and elegant architectural design, Message Passing Neural Networks (MPNNs) have become the most popular type of Graph Neural Networks (GNNs). Nonetheless, these architectures face significant limitations, particularly in terms of expressive power~\citep{morris2019weisfeiler, xu2018powerful}, resulting in underwhelming performances in certain tasks. Over the past few years, multiple GNN architectures have been proposed to mitigate these problems \cite{morris2021weisfeiler}.

In this paper, we focus on two different enhanced GNN architecture families: Subgraph GNNs and Graph Transformers. In Subgraph GNNs~\cite{zhang2021nested, cotta2021reconstruction, zhao2022from, bevilacqua2021equivariant,frasca2022understanding,zhang2023rethinking}, a GNN is applied to a bag (multiset) of subgraphs, which is generated from the original graph (for example, the multiset of subgraphs obtained by deleting one node in the original graph). Notably, these architectures are provably more expressive than the traditional message passing algorithms applied directly to the original graph. In parallel, Transformers~\cite{vaswani2017attention} have demonstrated outstanding performance across a wide range of applications, including natural language processing~\cite{vaswani2017attention, kalyan2021ammus, kitaev2020reformer}, computer vision~\cite{khan2022transformers, dosovitskiy2020image, li2022mvitv2}, and, more recently, graph-based tasks~\cite{ying2021transformers, zhang2023rethinking, rampavsek2022recipe}. Two of the most important components of Graph Transformers are their attention mechanism and their positional encoding scheme, which captures the graph structure.
Graph Transformers have achieved impressive empirical performance, demonstrated by their state-of-the-art results on molecular   tasks~\cite{li2022kpgt}. 

This work aims to integrate Subgraph GNNs
and Graph Transformers
into a unified architecture, which we call \texttt{Subgraphormer}, to leverage the benefits of both approaches. In order to define this hybrid approach, we develop two main techniques specifically adapted to subgraphs: (1) A subgraph attention mechanism, dubbed Subgraph Attention Block (SAB); and (2) A subgraph positional encoding scheme, which we call product graph PE. We derive these two mechanisms through a key observation: Subgraph GNNs can be naturally interpreted as MPNNs operating on a new product graph. This product graph is defined on 2-tuples of nodes and encodes both the subgraph structure and their inter-subgraph aggregation schemes. 

Our approach builds on the recent maximally expressive (node-based) Subgraph GNN by \citet{zhang2023complete}. The product graph of this architecture has two main types of edges. First, \emph{internal} edges, that connect each vertex $v$ in subgraph $s$ to neighbors of $v$ within $s$. Second, \emph{external} edges, which are similarly constructed to connect $v$ to copies of $v$ itself across different subgraphs $s'$ when the node generating subgraph $s'$ is a neighbor of the node generating subgraph $s$.  Those are illustrated in \Cref{sec: MES-GNN as an MPNN}. These external edges enable the exchange of information between subgraphs, like the aggregation schemes used in existing Subgraph GNNs, which have been shown to improve performance and expressivity~\cite{zhang2023complete}. Our SAB implements both internal and external attention-based aggregation methods, based on the edges defined above, thereby allowing individual nodes to refine their representations by selectively attending to nodes within the same or from different subgraphs. 
Importantly, as Subgraph GNNs are already computationally expensive, we restrict the attention to the product graph connectivity, similarly to sparse transformers~\cite{choromanski2022block,shirzad2022exphormer}.%

Our Positional Encoding (PE) scheme,  product graph PE, is tailored to the new product graph connectivity that emerges from the architecture defined above. Interestingly, this connectivity stems from the Cartesian product of the original graph with itself, which is a well-known type of graph product. We define our positional encoding as the Laplacian eigenvectors \cite{rampavsek2022recipe,dwivedi2023benchmarking,kreuzer2021rethinking} of the adjacency matrix associated with this Cartesian product. Crucially, although the vertex set of the new graph comprises $n^2$ nodes, in contrast to  $n$ nodes in the original graph, the special structure of the Cartesian product graph allows us to compute positional encodings with time complexity equivalent to calculating positional encodings on the original, smaller graph. We demonstrate that those subgraph-positional encodings offer a performance boost for \texttt{Subgraphormer}, particularly in the case of larger graphs where stochastic subgraph sampling is used. We also describe how product graph PE, and \texttt{Subgraphormer} in general, can be generalized to other higher-order GNNs that operate on $k$-tuples of nodes.

An extensive experimental analysis over a variety of eight different datasets confirms that our architecture leads to significant performance improvements over both Subgraph GNNs and Graph Transformers, yielding outstanding results on multiple datasets, including \textsc{ZINC-12k}~\cite{sterling2015zinc, gomez2018automatic, dwivedi2023benchmarking} and the long-range benchmark \textsc{Peptides-struct}~\cite{dwivedi2022long}. Furthermore, we address the potential computational burden of operating on large bags of subgraphs, demonstrating that the performance of stochastic bag sampling improves when using \texttt{Subgraphormer} compared to other Subgraph GNNs. In particular, we show that our positional encodings play a key role in this case.

To summarize, the main contributions of this paper are (1) \texttt{Subgraphormer}, a novel architecture that combines the strengths of both transformer-based and subgraph-based architectures; (2) A novel observation connecting Subgraph GNNs to product graphs; (3) A novel positional encoding scheme tailored to subgraph methods; and (4) An empirical study demonstrating significant improvements of \texttt{Subgraphormer} compared to existing baselines in both full bag and stochastic bag sampling setups.

\section{Related Work}
\label{sec: Previous work and preliminaries}
\begin{figure}[t]
    \centering
    \begin{minipage}{.25\columnwidth}
        \centering
        \includegraphics[scale=0.2]{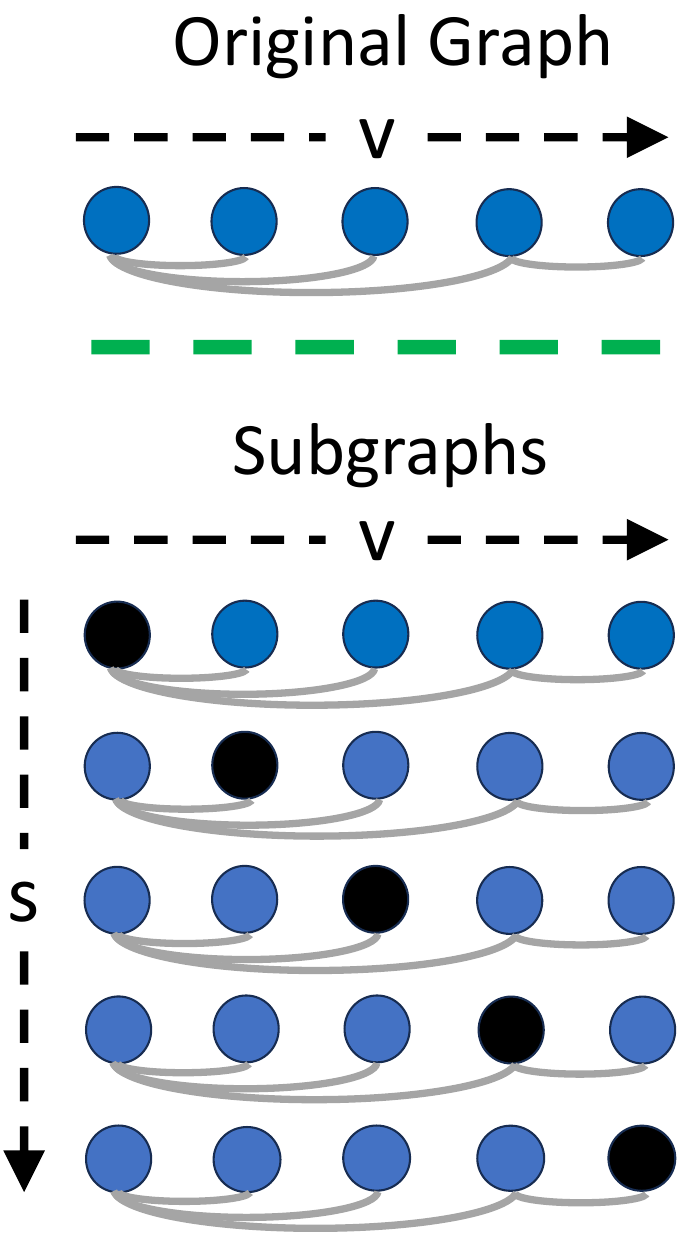}
    \end{minipage}%
    \hfill
    \begin{minipage}{.67\columnwidth}
        \caption{An example of generating subgraphs from the original graph. We denote by $v$ the index used for the node dimension, and by $s$ the one employed for the subgraph dimension. Each subgraph is generated by marking a single node in the original graph, with the marked node represented in black. We refer to the marked node as the root of the corresponding subgraph.
        }
        \label{fig: example of generating subgraphs}
    \end{minipage}
    \vspace{-15pt}
\end{figure}
\textbf{Subgraph GNNs.} 
Subgraph GNNs~\citep{zhang2021nested, cotta2021reconstruction, papp2021dropgnn, bevilacqua2021equivariant, zhao2022from, papp2022theoretical, frasca2022understanding, qian2022ordered, huang2022boosting, zhang2023complete, bevilacqua2023efficient} represent a graph as a collection of subgraphs, obtained by a predefined generation policy. For example, each subgraph can be generated by marking exactly one node in the original graph, an approach commonly referred to as \emph{node marking}~\cite{papp2022theoretical} -- which is also the generation policy on which we focus in this paper. 
Since a node $v$ of the original graph belongs to multiple subgraphs, we use the tuple $(s,v)$ to refer to node $v$ in subgraph $s$. Following the literature, we denote node $s$ in subgraph $s$ as the \emph{root} of the subgraph. An example of this concept is illustrated in~\Cref{fig: example of generating subgraphs}. The success of Subgraph GNNs can be attributed to the enhanced expressive capabilities of these models, which surpass those of MPNNs working directly on the original graph.

\textbf{Graph Transformers.} 
Transformers have achieved notable success in natural language processing~\cite{vaswani2017attention, kalyan2021ammus, kitaev2020reformer} and computer vision~\cite{khan2022transformers, dosovitskiy2020image, li2022mvitv2}, largely owing to their key component, the attention mechanism~\cite{vaswani2017attention}, often coupled with positional encoding schemes. Building on this, the GNN community has introduced Graph Transformers \cite{ying2021transformers,zhang2023rethinking, rampavsek2022recipe, park2022grpe, chen2022structure}, a specialized variant for graph-structured data. Graph Transformers incorporate either global attention~\cite{kreuzer2021rethinking,mialon2021graphit,ying2021transformers}, which operates on fully-connected data, or make use of sparse attention~\cite{shirzad2022exphormer,choromanski2022block,brody2021attentive}, tailored to leverage the graph structure. Given the quadratic nature of Subgraph GNNs, we focus on sparse attention methods.

\section{From Subgraph GNNs to Product Graphs}
\label{sec: From Subgraph GNNs to graph products}
This section serves a dual purpose. Firstly, we demonstrate that the most expressive version of Subgraph GNNs \cite{zhang2023complete}, which serves as a main motivation in our paper, can be implemented by applying an MPNN to a newly constructed graph. 
Second, we explore the relationship between this new graph and the graph Cartesian product operation.
Both of those aspects will be used in the following sections to develop our \texttt{Subgraphormer} architecture. 

\subsection{Subgraph GNNs as MPNNs}
\label{sec: MES-GNN as an MPNN}

\textbf{Notation.} 
Let $G=(A,X)$ denote an undirected graph with node features.\footnote{The consideration of edge features is omitted for simplicity.} The adjacency matrix $A \in \mathbb{R}^{n \times n}$ represents the graph connectivity while the feature matrix, $X \in \mathbb{R}^{n \times d}$, represents the node features. We denote the sets of nodes and edges as $V$ and $E$, respectively, with $\vert V \vert = n$. Additionally, we denote neighbors in $G$ by $\sim_G$, that is $v_1 \sim_G v_2$ denotes that $v_1$ and $v_2$ are neighbors in $G$. We denote the (usually highly sparse) adjacency and feature matrices of all subgraphs of a graph with calligraphic letters as, $\mcalA \in \mathbb{R}^{n^2 \times n^2}$ and $\mcalX \in \mathbb{R}^{n^2 \times d}$, respectively. Notably, we index $\mcalX$ and $\mcalA$ using the tuple $(s,v)$, where $\mcalX(s,v)$ is the feature of node $v$ in subgraph $s$, and $\mcalA( (s,v), (s',v') )$ denotes the edge between node $v$ in subgraph $s$ and node $v'$ in subgraph $s'$. 

\textbf{GNN-SSWL+.} We build upon the construction of the maximally expressive Subgraph GNN proposed in \citet{zhang2023complete}, which employs the \emph{Node Marking} generation policy for generating the subgraphs, and updates the representation of node $(s,v)$ according to the following formula:
\begin{align} 
\label{eq:bohang}
    \mcalX(s,v)^{t+1} &= f^t \Big( \mcalX(s,v)^{t}, \mcalX(v,v)^{t},  \nonumber \\
    & \{\mcalX(s,v')^{t}\}_{v'\sim_G v}, \{\mcalX(s',v)^{t}\}_{s'\sim_G s}, \Big),
\end{align}
for an appropriate parameterized learnable function $f^t$, where the superscript $t$ represents the layer number. Essentially, in each step, a node is updated using its representation, $\mcalX(s,v)^t$, its representation in the subgraph it is root of, $\mcalX(v,v)^{t}$, and two multisets of its neighbors' representations. The first multiset encompasses horizontal neighbors within the same subgraph, represented as $\{\mcalX(s,v')^{t}\}_{v'\sim_G v}$, and the second multiset consists of vertical neighbors between the subgraphs, denoted by $\{\mcalX(s',v)^{t}\}_{s'\sim_G s}$.

\textbf{Product Graph Definition.} In what follows, we construct an adjacency matrix of the form $\mathcal{A}\in \mathbb{R}^{n^2\times n^2}$ to represent each one of the updates in \cref{eq:bohang}; we note that in all cases, this matrix, although in $\mathbb{R}^{n^2\times n^2}$, is extremely sparse. We define the \emph{product graph} to be the heterogeneous graph\footnote{ With a single node type and multiple edge types.} defined by these adjacency matrices, together with a node feature matrix $\mcalX \in \mathbb{R}^{n^2 \times d}$. Each adjacency we construct is visualized for the specific case where the original graph is the one given in~\Cref{fig: example of generating subgraphs}.

\begin{wrapfigure}[6]{l}{0.25\linewidth}
    \vspace{-10pt}
    \centering
    \centering
    \includegraphics[scale=0.2]{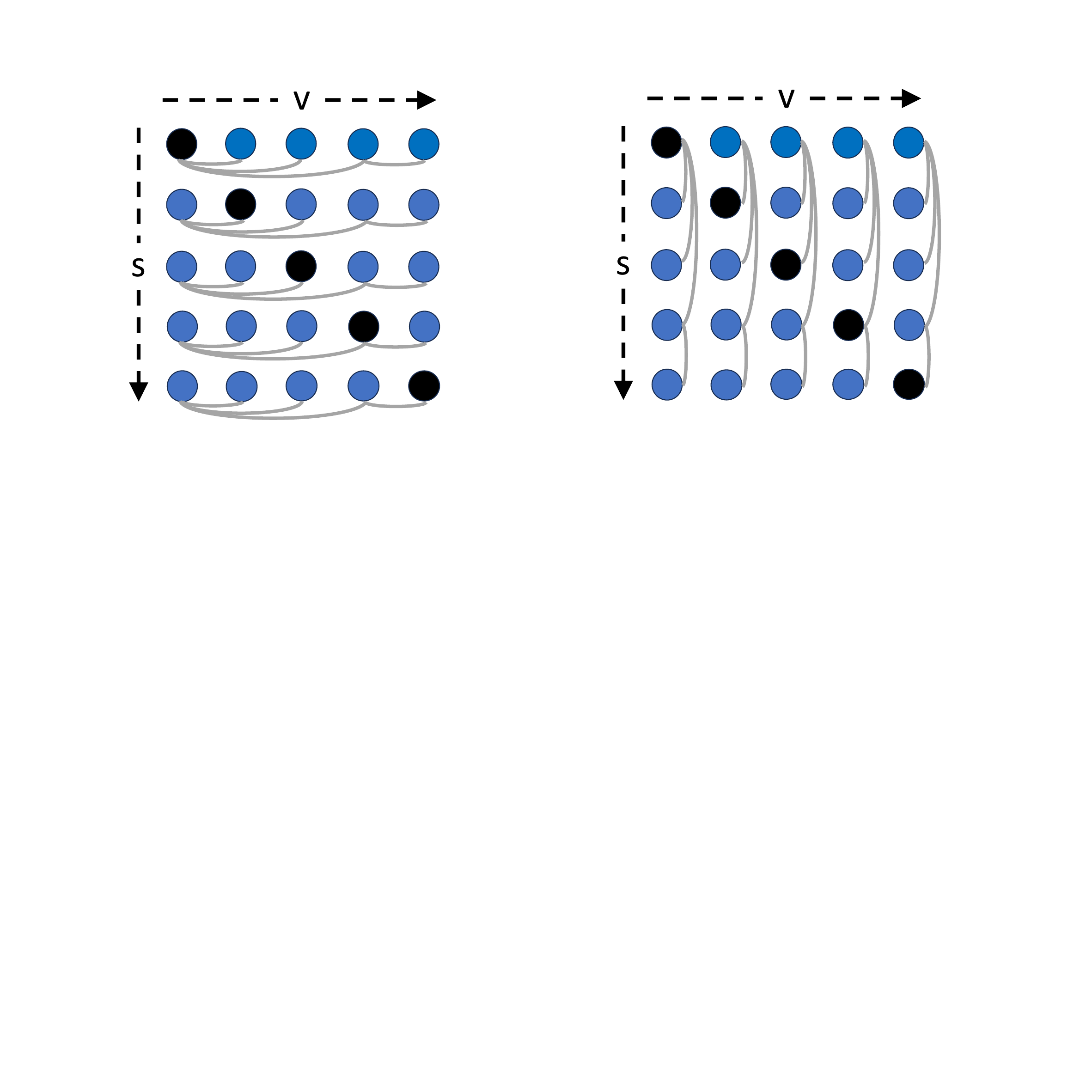} 
    \captionsetup{type=figure}
    %\captionof{figure}{}
    \label{fig: A_G}
\end{wrapfigure}
\textbf{Internal subgraph connectivity.} We start by introducing $\mcalA_G \in \mathbb{R}^{n^2 \times n^2}$ (see inset on the left), which corresponds to $\{\mcalX(s,v')^{t}\}_{v'\sim_G v}$ and maintains the connectivity of each subgraph separately. 
Formally,
\begin{align}
\label{eq: A_G}
\mcalA_G\Big((s, v), (s', v')\Big) = 
    \begin{cases} 
        \delta_{ss'} & \text{if } v \sim_G v'; \\
        0 & \text{otherwise},
    \end{cases} 
\end{align}
where $\delta$ is the Kronecker delta.

\begin{wrapfigure}[7]{l}{0.25\linewidth}
    \vspace{-10pt}
    \centering
    \centering
    \includegraphics[scale=0.2]{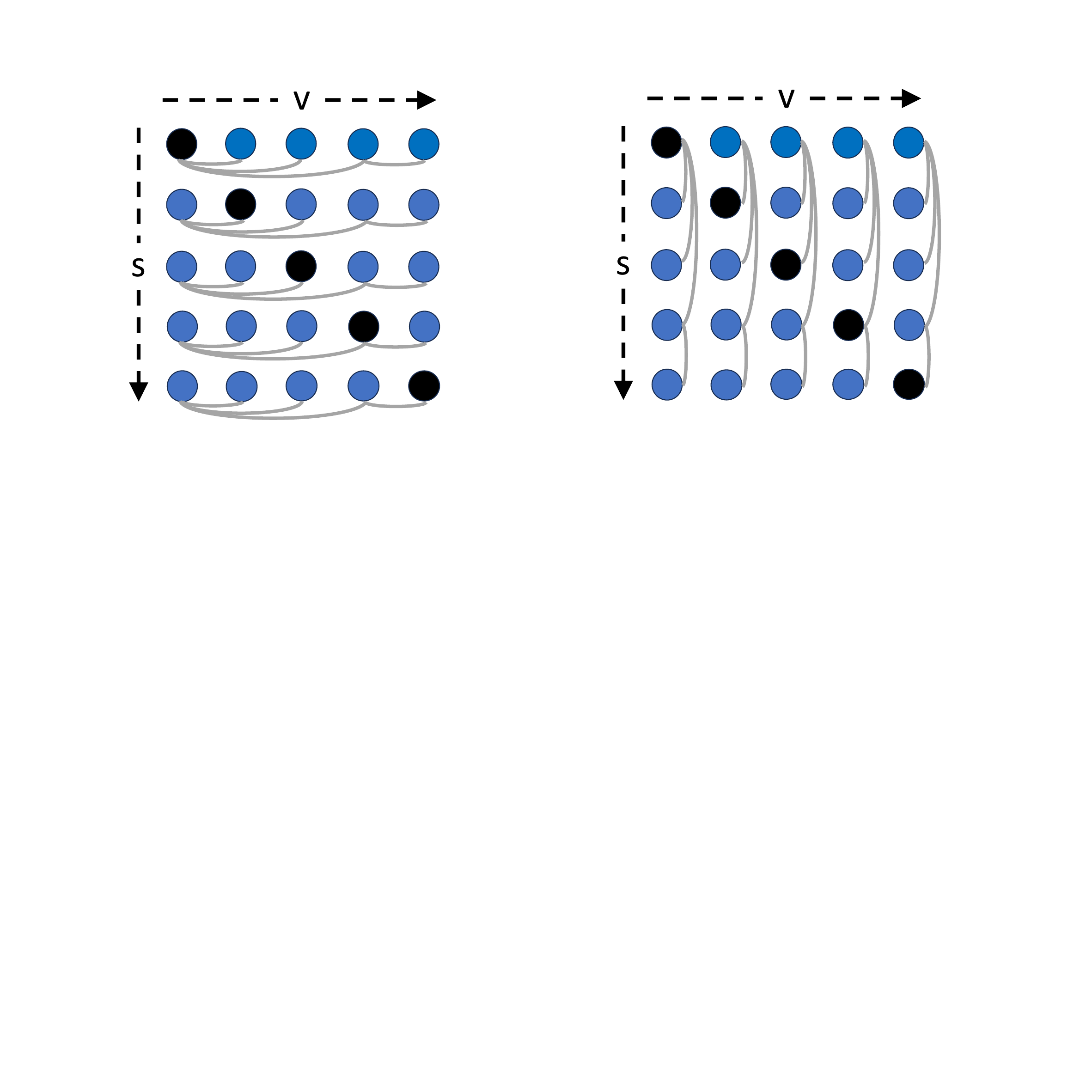} 
    \captionsetup{type=figure}
    \label{fig: A_G^S}
\end{wrapfigure}%
\textbf{External subgraph connectivity.} The adjacency $\mcalA_{G^S} \in \mathbb{R}^{n^2 \times n^2}$ corresponds to $\{\mcalX(s',v)^{t}\}_{s'\sim_G s}$ (see inset) and contains edges across different subgraphs; hence denoted by the superscript $S$ for $G$. In particular, these edges connect the same node in different subgraphs whenever the root nodes of these subgraphs are neighbors in the original graph. 
Formally,
\begin{align}
\label{eq: A_G^S}
 \mcalA_{G^S}\Big((s, v), (s', v')\Big) =
    \begin{cases} 
        \delta_{vv'} & \text{if } s \sim_G s'; \\ 
        0 & \text{otherwise}.
    \end{cases} 
\end{align}

\begin{wrapfigure}[7]{l}{0.25\linewidth}
    \vspace{-10pt}
    \centering
    \centering
    \includegraphics[scale=0.2]{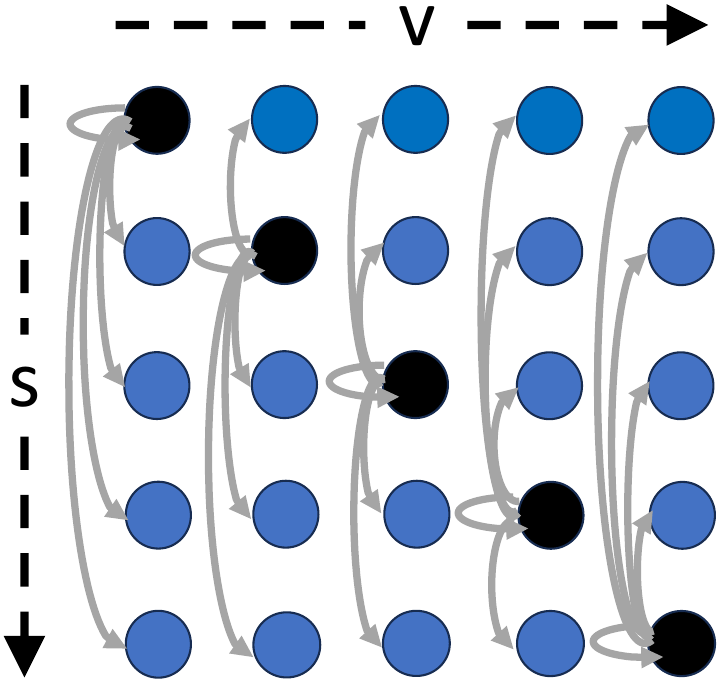} 
    \captionsetup{type=figure}
    %\captionof{figure}{}
    \label{fig: A_point}
\end{wrapfigure}%

\paragraph{Point-wise updates.} The adjacency corresponding to $\mcalX(v,v)^{t}$, denoted as $\mcalA_\text{point}$, allows a node $v$ in subgraph $s$ to receive its root representation.
Those updates are visualized inset by the grey 
arrows pointing from root nodes to other nodes in the same column.
This adjacency matrix can be written as,
\begin{align}
\label{eq: A_point}
     \mcalA_\text{point}\Big((s, v), (s', v')\Big) = 
     \begin{cases} 
        1 & \text{if } s'=v'=v;  \\ % s'=v' \land s'=v;  \\
        0 & \text{otherwise}.
    \end{cases} 
\end{align}
To conclude our discussion on the relationship between GNN-SSWL+ and the product graph we constructed, we state and prove the following proposition showing that GNN-SSWL+ can be simulated by running an MPNN on the product graph:

\begin{restatable}[GNN-SSWL+ as an MPNN on the product graph]{proposition}{SubgraphGNNsasMPNNs}
\label{prop:RGCN}
    Consider a graph $G=(A,X)$. 
    Applying a stacking of RGCN layers \cite{schlichtkrull2018modeling}, interleaved with ReLU activations, on the product graph, as defined via the adjacencies in \Cref{eq: A_G,eq: A_G^S,eq: A_point}, can implement the GNN-SSWL$+$ update in \Cref{eq:bohang}.
\end{restatable}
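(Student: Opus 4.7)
The plan is to unroll a single RGCN layer on the heterogeneous product graph, identify the four summands (one per relation plus the self-loop), match them one-to-one with the four arguments of $f^t$ in \Cref{eq:bohang}, and then stack such layers to upgrade the resulting ``sum-of-linear-then-ReLU'' form to an arbitrary learnable $f^t$ via a DeepSets-style universal approximation argument.

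\textbf{Step 1: one layer matches the four summands.} I would write out the RGCN update on the product graph with relation-specific weights $W_{\mathrm{pt}}, W_H, W_V$ (one per adjacency) and a self-loop weight $W_0$:
\begin{align*}
\mathcal{X}(s,v)^{t+1} = \mathrm{ReLU}\Big(& W_0 \mathcal{X}(s,v)^t + W_{\mathrm{pt}} \sum_{(s',v')} \mathcal{A}_{\mathrm{point}}((s,v),(s',v'))\, \mathcal{X}(s',v')^t \\
&+ W_H \sum_{(s',v')} \mathcal{A}_G((s,v),(s',v'))\, \mathcal{X}(s',v')^t \\
&+ W_V \sum_{(s',v')} \mathcal{A}_{G^S}((s,v),(s',v'))\, \mathcal{X}(s',v')^t \Big).
\end{align*}
Substituting the sparsity patterns of \Cref{eq: A_G,eq: A_G^S,eq: A_point} collapses the three sums to $W_{\mathrm{pt}}\mathcal{X}(v,v)^t$ (the only nonzero entry in row $(s,v)$ sits at column $(v,v)$), $W_H \sum_{v' \sim_G v}\mathcal{X}(s,v')^t$, and $W_V \sum_{s' \sim_G s}\mathcal{X}(s',v)^t$. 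These are precisely the four inputs of $f^t$, with each multiset aggregated by sum.

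\textbf{Step 2: stacking realizes an arbitrary $f^t$.} A single layer only yields the special case where $f^t$ is a ReLU applied to a weighted sum. To realize a generic continuous $f^t$, I would use the DeepSets decomposition $f^t(a,b,M_H,M_V) = \rho(a, b, \sum_{u \in M_H} \phi_H(u), \sum_{u \in M_V} \phi_V(u))$ and implement $\phi_H, \phi_V$ and $\rho$ as sub-stacks of RGCN layers whose only active weight is the self-loop $W_0$ (all relation weights set to zero), so they act as independent ReLU MLPs per product-graph node. A single interposed layer with relation weights active executes the two sums, and universal approximation for ReLU MLPs then closes the argument.

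\textbf{Main obstacle.} The main subtlety is bookkeeping in Step 2: the per-node preprocessing $\phi$ is applied uniformly to every product-graph node, so I have to ensure that the aggregation step sees the correct $\phi(\mathcal{X}(\cdot))$ values, and that the raw self- and root-features $\mathcal{X}(s,v)^t, \mathcal{X}(v,v)^t$ remain recoverable when computing $\rho$. This requires carrying identity channels through the preprocessing layers and only combining them with the aggregated sums at the output layer. This is routine but it is where essentially all the technical work outside the clean Step-1 identity sits.
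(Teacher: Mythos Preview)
Your proposal is correct and your Step~1 identification of the four summands matches the paper exactly. Step~2, however, takes a genuinely different route from the paper. You reach for a DeepSets decomposition $f^t(a,b,M_H,M_V)=\rho\bigl(a,b,\sum\phi_H(u),\sum\phi_V(u)\bigr)$ and then invoke universal approximation to realize $\phi$ and $\rho$ as self-loop-only RGCN sub-stacks; this yields an \emph{approximation} result valid for continuous $f^t$ on compact domains, and forces the identity-channel bookkeeping you flag as the main obstacle. The paper instead assumes one-hot node features and a finite graph family, so that the raw sums $\mathcal{A}_G\mathcal{X}^t$ and $\mathcal{A}_{G^S}\mathcal{X}^t$ are already injective multiset encodings with no preprocessing $\phi$ needed. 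A single RGCN layer with block weight matrices $W_i\in\mathbb{R}^{d\times 4d}$ then produces the concatenation $[\,\mathcal{X}^t\mid\mathcal{A}_{\text{point}}\mathcal{X}^t\mid\mathcal{A}_G\mathcal{X}^t\mid\mathcal{A}_{G^S}\mathcal{X}^t\,]$ directly, and the paper finishes not with UAT but with a finite-sample \emph{memorization} theorem for ReLU networks (Yun et al.), giving exact implementation rather than approximation. In short: your route is more general in the feature domain but only approximate and carries the channel-tracking overhead; the paper's route is exact and avoids the bookkeeping entirely, at the price of the one-hot/finite assumption.
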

The proof is in \Cref{app: proofs}.
This idea can be easily extended to other Subgraph GNN types. We note that the general idea of viewing GNNs as MPNNs operating on a new graph was recently proposed by \citet{bause2023maximally,jogl2022weisfeiler,jogl2023expressivity}, and a preliminary discussion on simulating Subgraph GNNs with MPNNs also appeared in \citet{velivckovic2022message}. 

\iffalse
\begin{proof}
This summary provides the essential concept. 

Assuming the input features, $\mcalX \in \mathbb{R}^{n^2 \times d}$, are given as 1-hot vectors, we can uniquely encode the input in \Cref{eq:bohang} by using the adjacencies in \Cref{eq: A_G,eq: A_G^S,eq: A_point}, and multiplying them by the feature matrix $\mcalX$. This is a unique encoding of them given that the feature matrix $\mcalX$ indeed consist of 1-hot vectors. 

Next, given that an RGCN is defined,
    \begin{equation}
            \text{RGCN}^t\Big(\mcalX^t, \{ \mcalA_i \}_{i=1}^M \Big) = \mcalX^t \mathbf{W}_0^t + \sum_{i=1}^M \mcalA_i \mcalX^t \mathbf{W}_i^t,
    \end{equation}
we can use the adjacencies in \Cref{eq: A_G,eq: A_G^S,eq: A_point}, and design the weight matrices as block matrices of dimensions $d \times 4d$, such that they effectively implement the concatenation of those encodings. This concatenation provides a unique descriptor of the input for $f$.

As a final step, we apply two \texttt{MLP}'s, the first implements the parameterized function $f$, Recall \Cref{eq:bohang}, and the second maps back inputs (of a finite dimension) to 1-hot vectors. This is possible due to the Universal Approximation Theorem~\citep{cybenko1989approximation, hornik1991approximation}. We note that those two \texttt{MLP}'s can be combined into a single \texttt{MLP}; this completes the proof.
\end{proof}
\fi

\subsection{Connection to the Graph Cartesian Product}
\label{sec: MES-GNN Main Connectivity: Just a Simple Graph Cartesian Product?}
In the previous section, we described how to build a product graph for simulating a maximally expressive Subgraph GNN. Next we demonstrate that the connectivity of this particular product graph is tightly related to the concept of Graph Cartesian Product \cite{vizing1963cartesian,harary2018graph}.

\textbf{Graph Cartesian Product.} In simple terms, the Cartesian product of two graphs $G_1$ and $G_2$, denoted $G_1 \square G_2$, is a graph whose vertex set is the Cartesian product of $V(G_1)$ and $V(G_2)$, with two vertices $ (u_1, u_2) $ and $ (v_1, v_2) $ being adjacent if either $ u_1 = v_1 $ and $ u_2 $ is adjacent to $ v_2 $ in $ G_2 $, or $ u_2 = v_2 $ and $ u_1 $ is adjacent to $ v_1 $ in $ G_1 $; we denote the adjacency matrix of this new product graph as $\mcalA_{G_1 \square G_2}$. In this paper, we mainly focus on a specific scenario where $G_1 = G_2$, namely, the Cartesian product of a graph with itself.
For a formal definition, we refer the reader to \Cref{def: Graph Cartesian Product} in \Cref{app: Subgraph GNNs as Graph Cartesian Products}. 

Given a graph $G= (A, X)$, the adjacency matrix corresponding to $G \square G$  can be expressed as:
\begin{equation}
\label{eq: A_G_G}
    \mcalA_{G \square G} \triangleq A \otimes I + I \otimes A,
\end{equation}
with $\otimes$ the Kronecker product and $I$ the identity matrix.
We can now establish a direct connection between the product graph we have built for GNN-SSWL+ in the previous subsection and the Cartesian product graph.
\begin{restatable}[Internal and External Connectivities give rise to the Cartesian Product Graph]{proposition}{AdjacenciesAsProducts}
\label{cor: cartesian subgraph equiva}
The edges induced by the internal and external subgraph connectivities, represented by $\mathcal{A}_{G}$ and $\mathcal{A}_{G^S}$ (\Cref{eq: A_G,eq: A_G^S}) represent the connectivity of$\mathcal{A}_{G \square G}$. This implies the relationship:
\begin{equation}
\mathcal{A}_{G \square G} = \mathcal{A}_{G^S} + \mathcal{A}_G.
\end{equation}
In particular, we have $\mathcal{A}_{G^S} = A \otimes I, \mathcal{A}_G = I \otimes A$.
\end{restatable}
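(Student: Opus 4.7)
The plan is to verify the two claimed identities $\mathcal{A}_G = I \otimes A$ and $\mathcal{A}_{G^S} = A \otimes I$ directly at the level of matrix entries, and then combine them using the already-stated formula $\mathcal{A}_{G \square G} = A \otimes I + I \otimes A$ (\Cref{eq: A_G_G}). Since the matrices live in $\mathbb{R}^{n^2 \times n^2}$ and are indexed by tuples $(s,v)$, the natural tool is the Kronecker product identity
\begin{equation*}
    (M_1 \otimes M_2)\bigl((s,v),(s',v')\bigr) = (M_1)_{s,s'}\,(M_2)_{v,v'},
\end{equation*}
applied with the convention that the first tensor factor is indexed by the subgraph coordinate $s$ and the second by the node coordinate $v$. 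Fixing this convention up front is the only thing that requires care: with the opposite convention, the roles of $A \otimes I$ and $I \otimes A$ swap, so I would explicitly declare it before beginning.

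First I would handle $\mathcal{A}_G$. Using the above Kronecker identity, $(I \otimes A)((s,v),(s',v')) = I_{s,s'} A_{v,v'} = \delta_{ss'} A_{v,v'}$. Since $A_{v,v'} = 1$ exactly when $v \sim_G v'$ and is $0$ otherwise, this matches the case distinction defining $\mathcal{A}_G$ in \Cref{eq: A_G} entry-by-entry. Next I would repeat the same computation for $\mathcal{A}_{G^S}$: $(A \otimes I)((s,v),(s',v')) = A_{s,s'} I_{v,v'} = A_{s,s'} \delta_{vv'}$, which equals $\delta_{vv'}$ when $s \sim_G s'$ and $0$ otherwise, recovering \Cref{eq: A_G^S}.

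Having established the two identities, the decomposition $\mathcal{A}_{G \square G} = \mathcal{A}_{G^S} + \mathcal{A}_G$ follows immediately by substituting into \Cref{eq: A_G_G}. As a sanity check, I would also argue combinatorially: the two summands have disjoint supports (an internal edge requires $s=s'$ and $v \neq v'$, while an external edge requires $v=v'$ and $s \neq s'$), so no double-counting occurs, and every Cartesian-product edge $\{(s,v),(s',v')\}$, which by definition either keeps $s=s'$ and moves along an edge of $G$ in the node coordinate, or keeps $v=v'$ and moves along an edge of $G$ in the subgraph coordinate, is covered by exactly one of the two adjacencies.

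There is no real obstacle here: the proposition is essentially a bookkeeping statement about how two Kronecker products decompose the Cartesian product adjacency. The only place where a reader could get confused is the indexing convention for the Kronecker product, so the main writing task is to state that convention explicitly and then carry out the two entry-wise checks cleanly.
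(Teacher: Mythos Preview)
Your proposal is correct and follows essentially the same approach as the paper: an entry-wise verification that $(I \otimes A)\big((s,v),(s',v')\big) = I_{s,s'}A_{v,v'}$ matches the definition of $\mathcal{A}_G$, with the analogous check for $\mathcal{A}_{G^S} = A \otimes I$. The paper's proof is even terser (it does one case and says the other is similar), so your extra care with the indexing convention and the disjoint-support sanity check only improves readability without changing the underlying argument.
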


The proof is given in \Cref{app: proofs}.

\textbf{General Subgraph GNNs as Product Graphs.}
Different Subgraph GNNs differ in their aggregation schemes. While the aggregations defined by the adjacencies in \Cref{eq: A_G,eq: A_G^S,eq: A_point} are sufficient for maximal expressivity among known subgraph architectures, additional aggregation schemes, that may be important for certain tasks, have been proposed~\citep{frasca2022understanding}. For instance, global updates enable nodes to refine their representations by incorporating information from other nodes, irrespective of their connectivity in the original graph. For example, a node $(s,v)$, can aggregate information from all nodes within the same subgraph, $(s, v')$, for each $v' \in V$. Similarly, it can refine its representation by considering its copies in all subgraphs, $(s', v)$, for each $s' \in V$.
Both of these aggregations can be derived from the adjacency matrices of the Cartesian product $G_c \square G_c$. Here,  $G_c$ denotes the \emph{clique} graph, whose adjacency matrix is $\vec{1}\vec{1}^T - I$. For a comprehensive discussion on the connection between Subgraph GNNs and Cartesian product graphs, refer to \Cref{app: Subgraph GNNs as Graph Cartesian Products}.

\section{\texttt{Subgraphormer}}
\begin{figure*}[t]
    \centering
    \includegraphics[width=0.9\textwidth]{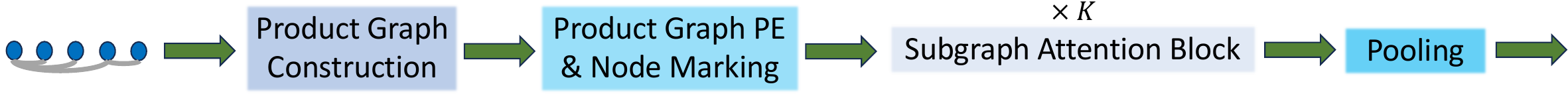}
    % \vspace{-0.5cm}
    \caption{An overview of \texttt{Subgraphormer}. Given an input graph, we first construct the product graph and compute a subgraph-specific positional encoding. Then, we apply a stacking of Subgraph Attention Blocks, followed by a pooling layer to obtain a graph representation. A more comprehensive depiction of this figure can be found in \Cref{fig: architecture_full} of \Cref{app: Subgraphormer Figure}.
    }
    \label{fig: architecture}
    \vspace{-0.2cm}
\end{figure*}

Having established the relationship between Subgraph GNNs and product graphs, we can now use it to define our architecture: \texttt{Subgraphormer}.
\texttt{Subgraphormer} is composed of a subgraph-specific positional encoding layer, node marking, a stacking of Subgraph Attention Blocks (SABs), and a final pooling layer, as depicted in~\Cref{fig: architecture}. In this section, we delve into the specifics of these components. 

\subsection{Subgraph Attention Block}
\label{sec: Subgraph Attention Block}
In this subsection, we introduce the structure of the Subgraph Attention Block (SAB), designed to update the representation of each node $v$ in subgraph $s$, denoted by $\mathcal{X}(s,v)$. In the SAB, the adjacency matrices defined above serve as sparsification functions, effectively biasing the attention mechanism to focus on the neighbors of each node in the product graph.

\textbf{Implementing internal and external subgraph attentions.} We implement two attention mechanisms - dubbed internal and external subgraph attention. These mechanisms are governed by the adjacency matrices $\mathcal{A}_G$ and $\mathcal{A}_{G^S}$, respectively. In general, we employ a sparse attention approach that computes Query ($\mathcal{Q}^t$), Key ($\mathcal{K}^t$), and Value ($\mathcal{V}^t$) transformations from the node features $\mathcal{X}^t \in \mathbb{R}^{n^2 \times d_1}$, according to the following equations:
\begin{equation}
    \mathcal{Q}^t = \mathcal{X}^t\mathbf{W}^t_\mathcal{Q}, \ \mathcal{K}^t = \mathcal{X}^t\mathbf{W}^t_\mathcal{K}, \ \mathcal{V}^t = \mathcal{X}^t\mathbf{W}^t_\mathcal{V},
\end{equation}
where $\mathbf{W}^t_\mathcal{Q}, \mathbf{W}^t_\mathcal{K}, \mathbf{W}^t_\mathcal{V}  \in \mathbb{R}^{d_1 \times d_2}$ are  learned linear projections. 
The attention weights $\alpha^t(\mathcal{Q}^t, \mathcal{K}^t | \mcalA) \in \mathbb{R}^{n^2 \times n^2}$ are computed from $\mathcal{Q}^t$ and $\mathcal{K}^t$ (e.g., by computing their outer product) \emph{only} for the non-zero entries of the adjacency matrix $\mathcal{A} \in \{\mathcal{A}_G, \mathcal{A}_{G^S}\}$.
The resulting node representation are then computed employing the following equations: $$\alpha^t_{\mathcal{A}_G}\!(\mathcal{Q}^t_{\mathcal{A}_G}\!, \mathcal{K}^t_{\mathcal{A}_G}\! | \mathcal{A}_G\!) \mathcal{V}_{\mathcal{A}_G}^t, \alpha^t_{\mathcal{A}_{G^S}}\!(\mathcal{Q}^t_{\mathcal{A}_{G^S}}\!, \mathcal{K}^t_{\mathcal{A}_{G^S}}\! | \mathcal{A}_{G^S}\!) \mathcal{V}_{\mathcal{A}_{G^S}}^t\!,$$
where we use different parameters (denoted by subscripts) to compute the attention coefficients of $\mathcal{A}_{G},\mathcal{A}_{G^S}$. Importantly, our formulation of Subgraph GNNs as MPNNs on a product graph (\Cref{sec: From Subgraph GNNs to graph products}) allows us to leverage existing optimized attention-based MPNNs to implement the \texttt{Subgraphormer} architecture. This avoids the need to build complex custom Subgraph GNN models from scratch. In particular, our implementation follows the Graph Attention Network (GAT) approach proposed by \citet{velivckovic2017graph}, which is implemented efficiently by accounting only for the non-zero entries in $\alpha(Q^t, K^t | \mathcal{A})$. We note that our approach can leverage any sparse attention model. 

\textbf{Implementing pointwise updates.} For pointwise updates we do not use any attention mechanism, since every node has only a single neighbor.
To additionally include the ``self'' (self loop) component from \Cref{eq:bohang} in this update, namely, $\mcalX(s, v)$, we follow~\citet{zhang2023complete} and employ a GIN encoder~\cite{xu2018powerful}, that is,
\begin{equation}
    \label{eq: point agg layer}
    \text{point}^t (\mcalX^t, \mathcal{A}_{\text{Point}}) = \texttt{MLP}^t( (1+\epsilon^t)\mcalX^t + (\mcalA_{\text{point}}\mcalX^t) ).
\end{equation}

\textbf{Final update step.} Putting it all together, a single SAB is defined as an application of an \texttt{MLP} (applied to the feature dimension) to the concatenation of all the updates: 
\begin{align*}
    \text{SAB}^t&(\mathcal{X}^t, \mathcal{A}_G, \mathcal{A}_{G^S}, \mathcal{A}_{\text{Point}}) = \\
     \texttt{MLP}^t &( 
    \alpha^t_{\mathcal{A}_{G}}(\mathcal{Q}^t_{\mathcal{A}_{G}}, \mathcal{K}^t_{\mathcal{A}_{G}} | \mathcal{A}_G) \mathcal{V}^t_{\mathcal{A}_{G}}  \oplus \\
     & \alpha^t_{\mathcal{A}_{G^S}}\!(\mathcal{Q}^t_{\mathcal{A}_{G^S}}\!, \mathcal{K}^t_{\mathcal{A}_{G^S}}\! | \mathcal{A}_{G^S}) \mathcal{V}^t_{\mathcal{A}_{G^S}} \!\!
     \oplus \! \text{point}^t(\mathcal{X}^t\!, \mathcal{A}_{\text{Point}})),
\end{align*}
where $\oplus$ represents feature concatenation.

We remark that one way of looking at a SAB update is by taking inspiration from \Cref{prop:RGCN} and extending the message passing within RGCN to further include different attention weights for each edge type, enabling nodes to attend differently to nodes connected via various edge types.

\textbf{Pooling.} After a stacking of SAB layers, we employ a pooling layer to obtain a graph representation, that is,
\begin{align}
\label{eq: pooling}
    \rho(\mcalX^T) = \texttt{MLP}^T \Big( \sum_{s=1}^n \Big( \sum_{v=1}^n  \mcalX^T(s,v) \Big) \Big),
\end{align}
where $T$ denotes the final layer.

We note that a stacking of SABs followed by an invariant pooling layer as described above, guarantees invariance to node permutations. This is because each block in our model maintains equivariance to node permutations, while the final pooling layer ensures invariance to such permutations. 

\textbf{Complexity.} SAB matches GNN-SSWL+ in time and space complexity, with both at $ \mathcal{O}(\vert V \vert^2 + \vert V \vert \vert E \vert)$ for an original graph of $ \vert V \vert $ nodes and $ \vert E \vert$ edges. See \Cref{app: Complexity} for a detailed complexity analysis of each aggregation.

\subsection{Subgraph Positional Encodings}
\label{sec: Subgraph Positional Encoding}
Positional encodings for graphs have been well studied~\cite{dwivedi2021graph, wang2022equivariant, lim2022sign}, and were shown to provide valuable information for both message-passing based GNNs and graph transformers.

A prominent approach that we follow in this work makes use of the eigendecomposition of the graph Laplacian matrix. More specifically, given a graph $G = (A, X)$, the Laplacian is defined as $L = D - A$, where $D=\text{diag}(A \vec{1})$ is the degree matrix. The positional encoding is calculated by an eigendecomposition of the Laplacian, $L = U^T \Lambda U$, as
\begin{equation}
\label{eq: PE}
\textbf{p}_{v}^{:k} \triangleq [U_{v1}, \ldots, U_{vk}],    
\end{equation}
as the encoding for node $v$, where $k$ is a hyperparameter denoting the number of eigenvectors we consider, $k \leq n$. We note that the eigenvectors are sorted (in ascending order) according to their eigenvalues.

Unfortunately, directly applying this approach to Subgraph GNNs is not straightforward for two reasons: \emph{(1) Unclear adjacency structure:} For the Laplacian to be computed, it is necessary to determine the relevant symmetric adjacency matrix. Specifically, Subgraph GNNs, in general, operate over a collection of subgraphs rather than on a graph structure. It is not clear what type of connectivity we should use for the Laplacian of this structure.
\emph{(2) Efficiency concerns:} The adjacency matrix for Subgraph GNNs is of size ${n^2 \times n^2}$. Consequently, computing its Laplacian's $k$ eigenvectors results in a computational complexity of $\mathcal{O}(k \cdot n^4)$, as opposed to $\mathcal{O}(k \cdot n^2)$\cite{lanczos1950iteration,lehoucq1998arpack,lee2009k} for a graph with $n$ nodes, rendering it impractical in many scenarios.

In the following, we design a positional encoding scheme that addresses these two challenges. 
We start with the adjacency challenge. A natural solution to (1) is to employ the adjacencies $\mathcal{A}_G$, $\mathcal{A}_{G^S}$ in \Cref{eq: A_G,eq: A_G^S}, excluding $\mathcal{A}_{\text{point}}$ since it is asymmetrical and the connectivity it represents is not related to the original graph. These adjacencies correspond to the main connectivity employed by \texttt{Subgraphormer}. Consequently, our proposed positional encoding method will primarily focus on the symmetric connectivity defined by $\mathcal{A}_G$ and $\mathcal{A}_{G^S}$ only. This choice brings us to the challenge (2).

To address point (2), we utilize \Cref{cor: cartesian subgraph equiva} that established a relationship between the adjacency matrices $\mathcal{A}_{G^S}$, $\mathcal{A}_G$, and the Cartesian product graph, expressed as, 
\begin{equation}
\label{eq: A Cartesian product}
\mathcal{A}_{G \square G} =  \mathcal{A}_{G^S} + \mathcal{A}_G = A \otimes I + I \otimes A.
\end{equation}
Surprisingly, \cref{eq: A Cartesian product} simplifies the computation of eigenvectors for the Laplacian $\mathcal{L}_{G \square G}$, by leveraging the eigenvectors and eigenvalues of the original graph $G$, as detailed in the following proposition~\cite{barik2015laplacian}.

\begin{restatable}[Product Graph eigendecomposition]{proposition}{SubgraphormerPE}
\label{prop: main prop}
Consider a graph $G = (A, X)$.\footnote{We assume $A$ has no self loops.}  
The eigenvectors and eigenvalues of the Laplacian matrix of $G \square G$, namely, $\mathcal{L}_{G \square G}$, are $\{(v_i \otimes v_j, \lambda_i + \lambda_j) \}_{i,j = 1}^{n^2}$, where $\{(v_i, \lambda_i) \}_{i=1}^{n}$ are the eigenvectors and eigenvalues of the Laplacian matrix of $G$.
\end{restatable}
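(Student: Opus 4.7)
The plan is to reduce the eigendecomposition of $\mathcal{L}_{G \square G}$ to that of $L$ by exploiting the Kronecker structure inherited from \Cref{cor: cartesian subgraph equiva}, then applying the mixed-product property of Kronecker products.

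First, I would derive a clean Kronecker expression for $\mathcal{L}_{G \square G}$. From \Cref{cor: cartesian subgraph equiva} we already have $\mathcal{A}_{G \square G} = A \otimes I + I \otimes A$. For the degree matrix, note that the degree of vertex $(u,v)$ in $G \square G$ is $\deg_G(u) + \deg_G(v)$ (since the Cartesian edges out of $(u,v)$ are either to $(u',v)$ with $u' \sim_G u$ or to $(u,v')$ with $v' \sim_G v$, and the assumption that $A$ has no self-loops guarantees that these two families are disjoint). Consequently $\mathcal{D}_{G \square G} = D \otimes I + I \otimes D$, and subtracting gives
\begin{equation*}
\mathcal{L}_{G \square G} = (D \otimes I + I \otimes D) - (A \otimes I + I \otimes A) = L \otimes I + I \otimes L.
\end{equation*}

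Next, I would verify the claimed eigenpairs by direct computation. Using the mixed-product property $(M_1 \otimes M_2)(x \otimes y) = (M_1 x) \otimes (M_2 y)$, for each pair $(v_i, \lambda_i), (v_j, \lambda_j)$ of eigenpairs of $L$ one has
\begin{align*}
\mathcal{L}_{G \square G}(v_i \otimes v_j) &= (L v_i) \otimes v_j + v_i \otimes (L v_j) \\
&= \lambda_i (v_i \otimes v_j) + \lambda_j (v_i \otimes v_j) \\
&= (\lambda_i + \lambda_j)(v_i \otimes v_j).
\end{align*}

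Finally, I would argue completeness: since $L$ is symmetric, the $\{v_i\}_{i=1}^n$ can be taken as an orthonormal basis of $\mathbb{R}^n$, and a standard fact about Kronecker products then gives that $\{v_i \otimes v_j\}_{i,j=1}^n$ is an orthonormal basis of $\mathbb{R}^{n^2}$. Thus the $n^2$ pairs $(v_i \otimes v_j,\ \lambda_i + \lambda_j)$ exhaust the spectrum of $\mathcal{L}_{G \square G}$, establishing the proposition.

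There is really no hard step here; the only mild subtlety is to justify that the diagonal degree formula $\mathcal{D}_{G \square G} = D \otimes I + I \otimes D$ holds, which is where the footnote assumption that $A$ has no self-loops is used (otherwise a self-loop on $u$ would produce a ``diagonal'' Cartesian edge at $(u,v)$ that would be double-counted). Everything else is a direct application of the mixed-product identity for Kronecker products.
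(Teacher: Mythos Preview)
Your proof is correct and follows essentially the same route as the paper: both derive $\mathcal{L}_{G \square G} = L \otimes I + I \otimes L$ (the paper obtains $\mathcal{D}_{G \square G}$ algebraically via $\diag\big((A \otimes I + I \otimes A)\vec{1}_{n^2}\big)$ and Kronecker-product identities rather than your combinatorial degree count) and then read off the Kronecker-sum eigenstructure. Your explicit verification of the eigenpairs via the mixed-product identity and the completeness argument are in fact more detailed than the paper's proof, which simply asserts the final conclusion once the Kronecker-sum form of $\mathcal{L}_{G \square G}$ is established.
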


\iffalse
\begin{proof} [Proof idea]
As a corollary of the definition of the Cartesian product graph, we have that the adjacency matrix of $G \square G$ is given by $\mcalA_{G \square G} = A \otimes I + I \otimes A$.
Recalling \Cref{eq: A_G,eq: A_G^S,eq: A_{G;G^S}}, we obtain $\mcalA_{G;G^S} = \mcalA_{G \square G}$.

As a result, the graph Laplacian of $G \square G$ is given by 
\begin{equation}
    L_{G \square G} = L \otimes I + I \otimes L,    
\end{equation}
where $L$ is the Laplacian of $G$. This implies that the eigenvectors and eigenvalues of the Laplacian matrix of $G \square G$  are given by $\{(v_i \otimes v_j, \lambda_i + \lambda_j) \}_{i,j = 1}^{n^2}$, where $\{(v_i, \lambda_i) \}_{i=1}^{n}$ are the eigenvectors and eigenvalues of the Laplacian matrix of $G$.
\end{proof}
\fi
We refer to~\Cref{app: proofs} for the proof. 
The implications of this observation are profound. It reveals that, despite the fact that \texttt{Subgraphormer} processes product graphs with $n^2$ nodes, computing the positional encodings only requires an eigendecomposition of the original smaller graph. Specifically, for an undirected graph $G=(A, X)$ with $n$ vertices and its Cartesian product graph $G \square G$, calculating the first $k$ eigenvectors, $k \leq n$, has a time complexity of $\mathcal{O}(k \cdot n^2)$ ~\citep{lanczos1950iteration,lee2009k} -- the same as computing $k$ eigenvectors for the Laplacian of $G$. A proof for this claim  is also given~\Cref{app: proofs}. We refer to our subgraph-positional encodings as \emph{product graph PE}, or simply \emph{PE}.

\textbf{Visualizing product graph PE.} To illustrate the benefits of our product graph PE, we visualize in different colors the entries of the first non-trivial eigenvector of both the original graph (upper) and the product graph (lower). 
\begin{wrapfigure}[10]{l}{0.4\linewidth}
    \vspace{-9pt}
    \centering
        ~\includegraphics[scale=0.3]{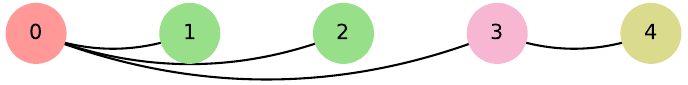}
        \vspace{0.05cm}
        
        \includegraphics[scale=0.33]{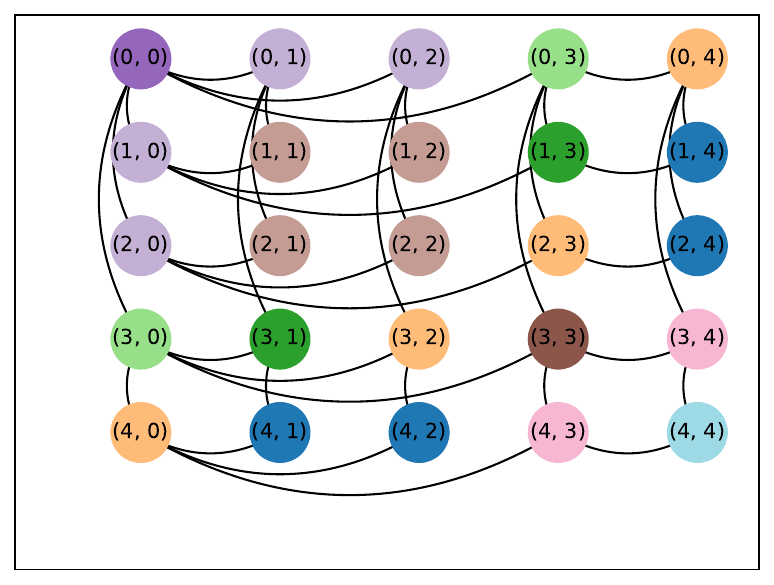}
\end{wrapfigure}
It is readily apparent that the eigenvector on the product graph conveys %significantly
more information than the one on the original graph. This distinction is particularly evident by observing the color diversity, with the product graph featuring ten distinct colors as opposed to the four in the original graph.

\textbf{Concatenation PE.} We briefly introduce another subgraph positional encoding method, \emph{concatenation PE}. Given the Laplacian matrix $L = U^T \Lambda U$, of a graph $G$, the concatenation PE for a node $(s,v)$ is given by the output of an \texttt{MLP} on the concatenated vectors $\mathbf{p}_{s}^{:k} = [U_{s1}, \ldots, U_{sk}]$ and $\mathbf{p}_{v}^{:k} = [U_{v1}, \ldots, U_{vk}]$; we highlight that those two vectors corresponds to the eigenvectors when considering each of the connectivities, $\mcalA_G, \mcalA_{G^S}$, independently.
Notably, concatenation PE can approximate (up to an ordering) the product graph PE. This is because the product graph PE multiplies specific elements from the concatenation PE, a process that can be approximated using a \texttt{MLP} by using the  Universal Approximation Theorem~\cite{hornik1991approximation,cybenko1989approximation}.
However, we use the product graph PE in this study due to its superior performance, as shown in the comparison over the \textsc{ZINC-12k} dataset in \Cref{app: Graph Product PE vs Concatenation PE}.

\textbf{Node Marking.} Motivated by existing Subgraph GNNs, we leverage an additional ``special mark'', defined as \emph{node marking}, which we concatenate to each node's positional encoding. Following the approach of \citet{zhang2023complete}, we hold a lookup table, and assign a learnable embedding to each node, represented as $z_{\text{dist}(s, v)} \in \mathbb{R}^d$.
The embedding assigned to node $(s,v)$ is indexed by the shortest path distance between them in the original graph $G$ -- $\text{dist}(s, v)$.\footnote{We assign a unique mark if the two nodes are unreachable from each other, i.e., $\text{dist}(s, v) = \infty$.}

\subsection{Scaling up \texttt{Subgraphormer}}
\label{sec: Scaling up Subgraphormer}

Given a graph $G$ with $n$ nodes, \texttt{Subgraphormer} operates on a transformed graph with $n^2$ nodes. 
Even though this transformed graph is extremely sparse, this is a well-known drawback of Subgraph GNNs~\citep{qian2022ordered,kong2023mag,bevilacqua2023efficient}
when considering large graphs, since processing the new graph might be infeasible.
\begin{wrapfigure}[13]{l}{0.35\linewidth}
    \vspace{-15pt}
    \centering
    % \vspace{-0.5cm}
    \centering
    \includegraphics[scale=0.2]{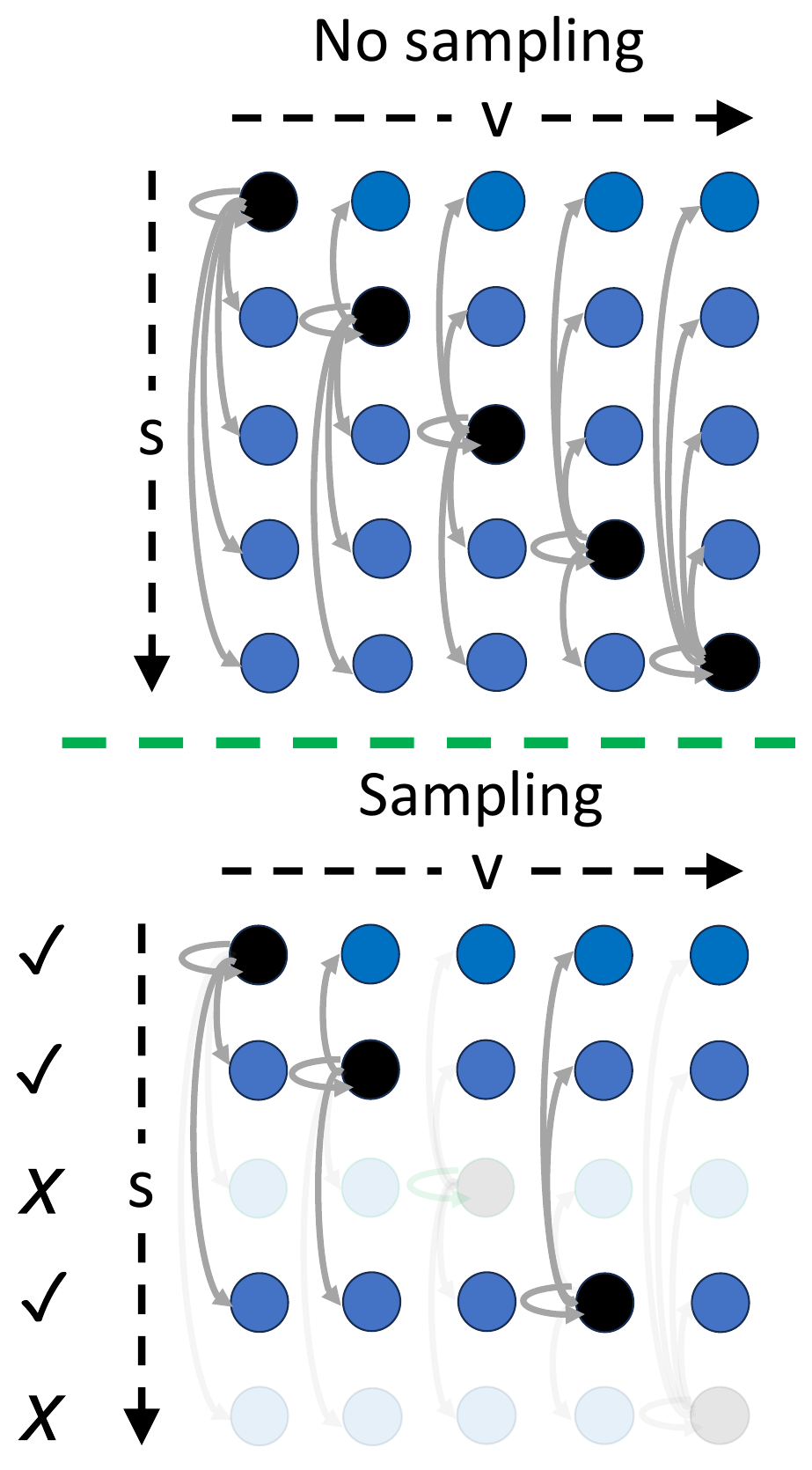} % Adjust the path and the width as needed
    \captionsetup{type=figure}
    % \captionof{figure}{}
    \label{fig: sampling vis}
\end{wrapfigure}%
Thus, in this section we adopt the idea of stochastic sampling~\citep{bevilacqua2021equivariant,cotta2021reconstruction,zhao2022from}. Specifically,  we adapt \texttt{Subgraphormer} to stochastic sampling by adding sparsifications to the adjacencies based on the sampling (see inset). We sample rows (subgraphs) uniformly at random and include an edge if and only if both of its endpoints are part of the sampled graph.

Formally, we introduce a sampling mask, $\mcalA_\text{mask}$, defined as,
\begin{align}
     \mcalA_\text{mask}\Big((s, v), (s', v')\Big) = 
     \begin{cases} 
        1 & \text{if } s~\text{and}~s' ~\text{are sampled}; \nonumber \\
        0 & \text{otherwise}.
    \end{cases}
\end{align}
Given an adjacency matrix $\mcalA$ and a generated sampling mask $\mcalA_{\text{mask}}$, we perform an element-wise logical AND operation between them. Consequently, $\mcalA$ is adjusted to preserve only edges that connect node pairs in sampled subgraphs.

Importantly, our \emph{product graph PE} (\Cref{sec: Subgraph Positional Encoding}) can be computed independently from the sampling process. This feature allows them to retain information from the complete, unsampled graph. As demonstrated in the experimental section, this attribute of \emph{product graph PE} proves advantageous, as it appears to effectively compensate for the information loss incurred due to sampling.

\subsection{Generalization to Higher-Order Graph Networks}
While our current derivation considers product graphs containing 2-tuples of nodes, both the SAB and the product graph PE can be naturally extended to define similar high-order architectures operating on  $k$-tuples of nodes \cite{morris2019weisfeiler,maron2018invariant}. This extension is grounded in our understanding that Subgraph GNNs are connected to the Cartesian product of graphs. Accordingly, this product can be generalized to be applied $k$ times, rather than being limited to $k=2$. This generalization leads to the formation of $k$ distinct adjacency matrices, each representing different ``Internal/External'' aggregations.  To elaborate, the $i$-th adjacency matrix is defined as $\underbrace{I \otimes I \otimes \cdots \otimes I \otimes A \otimes I \otimes \cdots \otimes I \otimes I}_{\text{with } A \text{ exclusively positioned in the } i\text{-th slot}}$. Similarly, the positional encodings can also be extended to any $k$-tuple, with the eigenvectors obtained by the tensor product of $k$ eigenvectors of $G$, and the eigenvalues by the sum of the respective eigenvalues. More specifically, for each sequence of indices $\{i_1, i_2, \ldots, i_k\}$, where $i_j \in \{1, 2, \ldots, n\}$ for each $j$, there exists an eigenvector-eigenvalue pair given by $(v_{i_1} \otimes v_{i_2} \otimes \ldots \otimes v_{i_k}, \lambda_{i_1} + \lambda_{i_2} + \ldots + \lambda_{i_k})$. Here, $\{(v_i, \lambda_i) \}_{i=1}^{n}$ represent the eigenvectors and eigenvalues of the Laplacian matrix of $G$. \Cref{app: Extending Subgraphormer to Any k-tuple} presents a detailed derivation of SABs and PEs for $k$-tuples, for any given $k$. 

Additionally, we note that the high-order positional encoding discussed here may be useful for other high-order GNNs.

\section{Experiments}
We conducted an extensive set of experiments over eight different datasets to answer the following questions:
\begin{enumerate*}[label=(Q\arabic*)]
    \item \emph{Can \texttt{Subgraphormer} outperform Subgraph GNNs and Graph Transformers in real-world benchmarks?}
    \item \emph{Does the attention mechanism prove advantageous?}
    \item \emph{Does our product graph positional encoding scheme boost performance, and in which contexts?} 
    \item \emph{How well does \texttt{Subgraphormer} perform when using stochastic sampling?}
    \item \emph{Can \texttt{Subgraphormer} offer competitive performance on long-range benchmarks when compared to Graph Transformers?}
\end{enumerate*}

All experiments were conducted using PyTorch, and the corresponding code is publicly available for replicating our results\footnote{\url{https://github.com/BarSGuy/Subgraphormer}}.

In the following, we present our main results and refer to \Cref{app: Extended Experimental Section} for additional experiments.

\begin{table}[t]
    \centering
    \scriptsize
    \caption{On the \textsc{ZINC} datasets, \texttt{Subgraphormer} outperforms \colorbox{Gray}{Graph Transformers} and \colorbox{LightBlue}{Subgraph GNNs}. The top three results are reported as \textcolor{blue}{\textbf{First}}, \textcolor{red}{\textbf{Second}}, and \textcolor{orange}{\textbf{Third}}.}
    \label{tab: ZINC}
    \resizebox{\columnwidth}{!}{%
    \addtolength{\tabcolsep}{-0.4em}
    \begin{tabular}{l|c|c|c}
    \toprule
    \multirow{2}*{Model $\downarrow$ / Dataset $\rightarrow$} & \multirow{2}*{Param.}& \textsc{ZINC-12k} & \textsc{ZINC-Full} \\
    & &  (MAE $\downarrow$) &  (MAE $\downarrow$) \\
    \midrule
    GSN~\cite{bouritsas2022improving} & $500k$ & $0.101$\tiny $\pm 0.010$ & - \\
    CIN (small)~\cite{bodnar2021weisfeiler} & $100k$ & $0.094$\tiny $\pm 0.004$& $0.044$\tiny $\pm 0.003$ \\
    GIN~\cite{xu2018powerful} & $500k$ & $0.163$\tiny $\pm 0.004$ & -  \\
    PPGN++(6)~\cite{puny2023equivariant} & $500k$ & $0.071$\tiny $\pm 0.001$& $\textcolor{blue}{\mathbf{0.020}}$\tiny $\pm 0.001$ \\
    \midrule
    \rowcolor{Gray} SAN~\cite{kreuzer2021rethinking} & $509k$ & $0.139$\tiny $\pm 0.006$ & - \\
    \rowcolor{Gray} URPE~\cite{luo2022your} & $492k$ & $0.086$\tiny $\pm 0.007$ & $0.028$\tiny $\pm 0.002$ \\
    \rowcolor{Gray} GPS~\cite{rampavsek2022recipe} &  $424k$ & $\textcolor{orange}{\mathbf{0.070}}$\tiny $\pm 0.004$ & - \\
    \rowcolor{Gray} Graphormer~\cite{ying2021transformers} & $489k$ & $0.122$\tiny $\pm 0.006$ & $0.052$\tiny $\pm 0.005$ \\
    \rowcolor{Gray} Graphormer-GD~\cite{zhang2023rethinking} & $503k$ & $0.081$\tiny $\pm 0.009$ & $0.025$\tiny $\pm 0.004$ \\
    \rowcolor{Gray} K-Subgraph SAT~\cite{chen2022structure} & $523k$ & $0.094$\tiny $\pm 0.008$ & -  \\
    \midrule 
    \rowcolor{LightBlue} NGNN~\cite{zhang2021nested} & $500k$ & $0.111$\tiny $\pm 0.003$ & $0.029$\tiny $\pm 0.001$ \\
    \rowcolor{LightBlue} DS-GNN~\cite{bevilacqua2021equivariant} & $100k$ & $0.116$\tiny $\pm 0.009$ & -  \\
    \rowcolor{LightBlue} DSS-GNN~\cite{bevilacqua2021equivariant} & $100k$ & $0.102$\tiny $\pm 0.003$ & $0.029$\tiny $\pm 0.003$ \\
    \rowcolor{LightBlue} GNN-AK~\cite{zhao2022from} & $500k$ & $0.105$\tiny $\pm 0.010$ & -  \\
    \rowcolor{LightBlue} GNN-AK+~\cite{zhao2022from} & $500k$ & $0.091$\tiny $\pm 0.002$ & -  \\
    \rowcolor{LightBlue} SUN~\cite{frasca2022understanding} & $526k$ & $0.083$\tiny $\pm 0.003$ & $0.024$\tiny $\pm 0.003$  \\
    \rowcolor{LightBlue} OSAN~\cite{qian2022ordered} & $500k$ & $0.154$\tiny $\pm 0.008$ & -  \\
    \rowcolor{LightBlue} DS-GNN~\cite{bevilacqua2023efficient} & $500k$ & $0.087$\tiny $\pm 0.003$ & - \\
    \rowcolor{LightBlue} GNN-SSWL~\cite{zhang2023complete} & $274k$ & $0.082$\tiny $\pm 0.003$ & $0.026$\tiny $\pm 0.001$ \\
    \rowcolor{LightBlue} GNN-SSWL+~\cite{zhang2023complete} & $387k$ & $\textcolor{orange}{\mathbf{0.070}}$\tiny $\pm 0.005$ & $\textcolor{red}{\mathbf{0.022}}$\tiny $\pm 0.001$ \\
    \midrule
    \texttt{Subgraphormer} & $293k$ & $\textcolor{red}{\mathbf{0.067}}$\tiny $\pm 0.007$ & $\textcolor{blue}{\mathbf{0.020}}$\tiny $\pm 0.002$ \\
    \texttt{Subgraphormer + PE} & $293k$ & $\textcolor{blue}{\mathbf{0.063}}$\tiny $\pm 0.001$ & $\textcolor{orange}{\mathbf{0.023}}$\tiny $\pm 0.001$ \\
    \bottomrule
    \end{tabular}
    }
\end{table}

\begin{table}[t!]
\centering
\scriptsize
\caption{On the \textsc{OGB} datasets, \texttt{Subgraphormer} outperforms \colorbox{Gray}{Graph Transformers} and \colorbox{LightBlue}{Subgraph GNNs}. The top three results are reported as \textcolor{blue}{\textbf{First}}, \textcolor{red}{\textbf{Second}}, and \textcolor{orange}{\textbf{Third}}.}
\label{tab: ogbg}
\resizebox{\columnwidth}{!}{%
\addtolength{\tabcolsep}{-0.55em}
\begin{tabular}{l|c|c|c}
\toprule
\multirow{2}*{Model $\downarrow$ / Dataset $\rightarrow$} & \textsc{molhiv} & \textsc{molbace}  & \textsc{molesol}  \\
& (ROC-AUC $\uparrow$) & (ROC-AUC $\uparrow$) & (RMSE $\downarrow$) \\
\midrule
GCN~\cite{kipf2016semi} & $76.06$\tiny $\pm 0.97$ & $\textcolor{orange}{\mathbf{79.15}}$\tiny $\pm 1.44$ & $1.114$\tiny $\pm 0.036$ \\
GIN~\cite{xu2018powerful} & $75.58$\tiny $\pm 1.40$ & $72.97$\tiny $\pm 4.00$ & $1.173$\tiny $\pm 0.057$ \\
PNA~\cite{corso2020principal} & $79.05$\tiny $\pm 1.32$ & - & - \\
GSN~\cite{bouritsas2022improving} & $\textcolor{red}{\mathbf{80.39}}$\tiny $\pm 0.90$ & - & - \\
CIN~\cite{bodnar2021weisfeiler} & $\textcolor{blue}{\mathbf{80.94}}$\tiny $\pm 0.57$ & - & - \\
% ... (Continue for other rows)
\midrule
\rowcolor{Gray} SAN~\cite{kreuzer2021rethinking} & $77.85$\tiny $\pm 0.24$ & - &  - \\
\rowcolor{Gray} GPS~\cite{rampavsek2022recipe} & $78.80$\tiny $\pm 1.01$  & - &  - \\
% ... (Continue for other light blue rows)
\midrule
\rowcolor{LightBlue} NGNN~\cite{zhang2021nested} & $78.34$\tiny $\pm 1.86$ & - &  - \\
%\rowcolor{LightBlue} Policy-Learn~\cite{bevilacqua2023efficient} & $79.13$\tiny $\pm 0.60$ & $78.41$\tiny $\pm 1.94$ &  $\textcolor{red}{\mathbf{0.847$\tiny $\pm 0.015}}$ \\
\rowcolor{LightBlue} Recon. GNN~\cite{cotta2021reconstruction} & $76.32$\tiny $\pm 1.40$ & $1.026$\tiny $\pm 0.033$ &  - \\
\rowcolor{LightBlue} DS-GNN~\cite{bevilacqua2021equivariant} & $77.40$\tiny $\pm 2.19$ & - &  - \\
\rowcolor{LightBlue} DSS-GNN~\cite{bevilacqua2021equivariant} & $76.78$\tiny $\pm 1.66$ & - &  - \\
\rowcolor{LightBlue} GNN-AK+~\cite{zhao2022from} & $79.61$\tiny $\pm 1.19$ & - &  - \\
\rowcolor{LightBlue} SUN~\cite{frasca2022understanding} & $80.03$\tiny $\pm 0.55$ & - &  - \\
\rowcolor{LightBlue} OSAN~\cite{qian2022ordered} & - & $72.30$\tiny $\pm 6.60$ &  $0.980$\tiny $\pm 0.086$ \\
\rowcolor{LightBlue} DS-GNN~\cite{bevilacqua2023efficient} & $76.54$\tiny $\pm 1.37$ & $78.41$\tiny $\pm 1.94$ & $0.847$\tiny $\pm 0.015$ \\
% ... (Continue for other gray rows)
\rowcolor{LightBlue} GNN-SSWL+~\cite{zhang2023complete} & $79.58$\tiny $\pm 0.35$ & $\textcolor{red}{\mathbf{82.70}}$\tiny $\pm 1.80$ &  $\textcolor{orange}{\mathbf{0.837}}$\tiny $\pm 0.019$ \\
\midrule
\texttt{Subgraphormer} & $\textcolor{orange}{\mathbf{80.38}}$\tiny $\pm 1.92$ & $\textcolor{orange}{\mathbf{81.62}}$\tiny $\pm 3.55$ &  $\textcolor{red}{\mathbf{0.832}}$\tiny $\pm 0.043$ \\
\texttt{Subgraphormer + PE} &  $79.48$\tiny $\pm 1.28$ & $\textcolor{blue}{\mathbf{84.35}}$\tiny $\pm 0.65$ &  $\textcolor{blue}{\mathbf{0.826}}$\tiny $\pm 0.010$ \\
\bottomrule
\end{tabular}
}
\end{table}

\begin{table*}[t]
\centering
\scriptsize
\caption{Ablation Study over our product graph PE. For each dataset, the best result for each sampling ratio is in \textbf{Bold}. }
\label{tab:transposed_sampling_ablation_study}
% \resizebox{\textwidth}{!}{%
\addtolength{\tabcolsep}{-0.4em}
\begin{tabular}{lr|cc|cc|cc|cc}
\toprule
\multicolumn{2}{c|}{\multirow{2}{*}{Dataset  $\downarrow$ / Sampling Ratio  $\rightarrow$}} & \multicolumn{8}{c}{\texttt{Subgraphormer} / \texttt{Subgraphormer + PE}} \\ 
\cmidrule(lr){3-10}
& & \multicolumn{2}{c|}{100\%} & \multicolumn{2}{c|}{50\%} & \multicolumn{2}{c|}{20\%} & \multicolumn{2}{c}{5\%}  \\ 
\midrule
\textsc{ZINC-12k} & (MAE $\downarrow$)       & $0.067$\tiny $\pm0.007$ & $\mathbf{0.063}$\tiny $\pm0.001$  & $0.084$\tiny $\pm0.002$ & $\mathbf{0.084}$\tiny $\pm0.002$                    & $0.121$\tiny $\pm0.007$ & $\mathbf{0.120}$\tiny $\pm0.002$ & $0.200$\tiny $\pm0.017$ & $\mathbf{0.175}$\tiny $\pm0.006$ \\
% \revision{\textsc{ZINC-Full}} & \revision{(MAE $\downarrow$)} \\
\textsc{molhiv} & (ROC-AUC $\uparrow$)  & $\mathbf{80.38}$\tiny $\pm1.92\phantom{0}$ & $79.48$\tiny $\pm1.28$ & $\mathbf{79.66}$\tiny $\pm0.79\phantom{0}$ & $79.61$\tiny $\pm1.30$           & $76.48$\tiny $\pm2.38\phantom{0}$ & $\mathbf{76.68}$\tiny $\pm1.07\phantom{0}$ & $70.87$\tiny $\pm0.90\phantom{0}$ & $\mathbf{71.02}$\tiny $\pm0.79\phantom{0}$ \\
\textsc{molbace} & (ROC-AUC $\uparrow$) & $81.62$\tiny $\pm3.55\phantom{0} $ & $\mathbf{84.35}$\tiny $\pm0.65\phantom{0}$ & $79.49$\tiny $\pm2.38\phantom{0}$ & $\mathbf{83.82}$\tiny $\pm2.62\phantom{0}$  & $75.27$\tiny $\pm5.63\phantom{0}$ & $\mathbf{78.77}$\tiny $\pm4.10\phantom{0}$ & $63.05$\tiny $\pm8.70\phantom{0}$ & $\mathbf{67.73}$\tiny $\pm5.50\phantom{0}$ \\ 
\textsc{molesol} & (RMSE $\downarrow$) & $0.832$\tiny $\pm0.043$ & $\mathbf{0.826}$\tiny $\pm0.010$ & $0.829$\tiny $\pm0.013$ & $\mathbf{0.812}$\tiny $\pm0.001$  & $1.093$\tiny $\pm0.009$ & $\mathbf{1.041}$\tiny $\pm0.030$ & $\mathbf{1.266}$\tiny $\pm0.019$ & $1.270$\tiny $\pm0.007$ \\
\bottomrule
\end{tabular}
%}
\end{table*}

\begin{table}[ht]
    \centering
    \caption{Results on the \textsc{Peptides} datasets demonstrate the effectiveness of the stochastic variant of \texttt{Subgraphormer} in tasks where Graph Transformers excel. 
    \colorbox{Gray}{Graph Transformers} and \colorbox{LightBlue}{Subgraph GNNs} are highlighted in gray and light blue. The top three results are reported as  \textcolor{blue}{\textbf{First}}, \textcolor{red}{\textbf{Second}}, and \textcolor{orange}{\textbf{Third}}.}
    \label{tab: LRGB}
    \scriptsize
    \resizebox{\columnwidth}{!}{%
    \addtolength{\tabcolsep}{-0.55em}
    \begin{tabular}{l r |c|c} 
    \toprule
    \multicolumn{2}{l|}{\multirow{2}*{Model $\downarrow$ / Dataset $\rightarrow$}} & \textsc{Peptides-func} & \textsc{Peptides-struct} \\
    & &(AP $\uparrow$) & (MAE $\downarrow$) \\ 
    \midrule
   \multicolumn{2}{l|}{GCN~\cite{kipf2016semi}} & $0.5930$\tiny $\pm0.0023$ & $0.3496$\tiny $\pm0.0013$ \\
    \multicolumn{2}{l|}{GIN~\cite{xu2018powerful}} & $0.5498$\tiny $\pm0.0079$ & $0.3547$\tiny $\pm0.0045$ \\
    \multicolumn{2}{l|}{GatedGCN~\cite{bresson2017residual}} & $0.5864$\tiny $\pm0.0077$ & $0.3420$\tiny $\pm0.0013$ \\
    \multicolumn{2}{l|}{GatedGCN+RWSE~\cite{dwivedi2022long}} & $0.6069$\tiny $\pm0.0035$ & $0.3357$\tiny $\pm0.0006$ \\
    \midrule
    \rowcolor{Gray} \multicolumn{2}{l|}{Transf.+LapPE~\cite{dwivedi2022long}} & $0.6326$\tiny $\pm0.0126$ & $0.2529$\tiny $\pm0.0016$ \\
    \rowcolor{Gray} \multicolumn{2}{l|}{SAN+LapPE~\cite{kreuzer2021rethinking}} & $0.6384$\tiny $\pm0.0121$ & $0.2683$\tiny $\pm0.0043$ \\
    \rowcolor{Gray} \multicolumn{2}{l|}{SAN+RWSE~\cite{kreuzer2021rethinking}} & $\textcolor{red}{\mathbf{0.6439}}$\tiny $\pm0.0075$ & $0.2545$\tiny $\pm0.0012$ \\
    \rowcolor{Gray} \multicolumn{2}{l|}{GPS~\cite{rampavsek2022recipe}} & $\textcolor{blue}{\mathbf{0.6535}}$\tiny $\pm0.0041$ & $\textcolor{orange}{\mathbf{0.2500}}$\tiny $\pm0.0005$ \\
    \midrule
    \rowcolor{LightBlue} GNN-SSWL+~\cite{zhang2023complete} & 30\%& $0.5847$\tiny $\pm0.0050$ & $0.2570$\tiny $\pm0.0006$ \\
    \midrule
    \texttt{Subgraphormer} & 30\%& $\textcolor{orange}{\mathbf{0.6415}}$\tiny $\pm0.0052$ & $\textcolor{red}{\mathbf{0.2494}}$\tiny $\pm0.0020$ \\
    \texttt{Subgraphormer + PE} &30\%& $0.6373$\tiny $\pm0.0110$ & $\textcolor{blue}{\mathbf{0.2475}}$\tiny $\pm0.0007$ \\
    \bottomrule
    \end{tabular}
    }
\end{table}
\paragraph{Real-world data.} On the \textsc{ZINC-12k} and \textsc{ZINC-Full} datasets~\citep{sterling2015zinc, gomez2018automatic, dwivedi2023benchmarking}, \texttt{Subgraphormer} outperforms both Subgraph GNNs and Graph Transformers by a significant margin, while using a smaller number of parameters (\Cref{tab: ZINC}). 
Importantly, the attention mechanism proves advantageous, as \texttt{Subgraphormer} always outperforms GNN-SSWL+~\cite{zhang2023complete}, on which it builds on by integrating attention blocks.
Remarkably, our product graph PE boosts the performance of \texttt{Subgraphormer} on \textsc{ZINC-12k}, where it is the top-performing method. A similar trend is observed on the OGB datasets~\citep{hu2020open} in \Cref{tab: ogbg}. Notably, on the \textsc{molbace} dataset, the inclusion of the PE improves the performance by $3\%$. Additional results on \textsc{Alchemy-12k} are reported in \Cref{app: Extended Results}.

\paragraph{Efficacy of stochastic sampling on long-range data.} We experimented on the \textsc{Peptides-func} and \textsc{Peptides-struct} datasets~\citep{dwivedi2022long}, evaluating the ability of \texttt{Subgraphormer} to scale to larger graphs and to capture long-range dependencies, a notoriously hard task for MPNNs~\citep{dwivedi2022long}. We considered a sampling ratio of $30\%$, which allows us to run \texttt{Subgraphormer} though typical full-bag Subgraph GNNs cannot be applied. The results are summarized in \Cref{tab: LRGB}. Despite the reduced number of subgraphs, \texttt{Subgraphormer} achieves the best performance on \textsc{Peptides-struct}, outperforming all Graph Transformers, which excel on tasks of this kind, and offer comparable results over \textsc{Peptides-func}.

\paragraph{Ablation Study: Product Graph PE.}
We assess the impact of our product graph PE on the performance of the stochastic variant of \texttt{Subgraphormer} under common sampling rates~\cite{bevilacqua2021equivariant} ($100\%$, $50\%$, $20\%$, $5\%$) across four datasets. \Cref{tab:transposed_sampling_ablation_study} shows that the inclusion of our product graph PE improves the results of \texttt{Subgraphormer} in 13 out of 16 dataset-sampling combinations.
Its effectiveness is especially noticeable when combined with $20\%$ and $5\%$ ratios, indicating its role in compensating information loss due to low sampling rates.

\paragraph{Discussion.}
In what follows, we address research questions \textbf{Q1} to \textbf{Q5}.

\textbf{A1.} \Cref{tab: ZINC,tab: ogbg} show \texttt{Subgraphormer} outperforming all transformer-based and subgraph-based baselines. A similar trend is observed on the \textsc{Alchemy-12k} dataset in \Cref{tab: alchemy} in \Cref{app: Extended Results}.

\textbf{A2.} \Cref{tab: ZINC,tab: LRGB} clearly demonstrate the importance of the attention, as \texttt{Subgraphormer} always outperforms GNN-SSWL+, from which it is built by adding SABs. On the OGB (\Cref{tab: ogbg}) the attention mechanism proves particularly beneficial when coupled with the product graph PE. 

\textbf{A3.} \Cref{tab:transposed_sampling_ablation_study} demonstrates the effectiveness of the \texttt{PE}, especially when considering lower sampling regimes.

\textbf{A4.} The stochastic variant of \texttt{Subgraphormer} proves particularly advantageous in settings where full-bag Subgraph GNNs cannot be otherwise applied, such as in \cref{tab: LRGB}. Additional results can be found in \Cref{tab: zinc-sampling,tab: molbace-sampling,tab: molesol-sampling} in \Cref{app: Extended Experimental Section}.

\textbf{A5.} \Cref{tab: LRGB} demonstrates the ability of \texttt{Subgraphormer} to capture long-range dependencies, as it outperforms all baselines on the \textsc{Peptides-struct} dataset, and performs comparably over \textsc{Peptides-func}.

\section{Conclusions}
In this work, we introduce \texttt{Subgraphormer}, a novel architecture that merges the capabilities of Subgraph GNNs and Graph Transformers. Building upon our observation that views Subgraph GNNs as MPNNs operating on product graphs, we propose: (1) a subgraph-based attention mechanism, which can be implemented by leveraging optimized sparse attention blocks operating on the product graph; and (2) a positional encoding scheme capturing the connectivity of the product graph computed in a time complexity equivalent to computing positional encodings on the original, small graph. Empirically, we demonstrate that \texttt{Subgraphormer} outperforms Subgraph GNNs and Graph Transformers over a wide range of datasets. We also investigated the capabilities of the stochastic variant of our approach, demonstrating impressive performance on long-range datasets, where Graph Transformers excel, and full-bag Subgraph GNNs cannot be applied.

\section*{Acknowledgements}
HM is the Robert J. Shillman Fellow and is supported by the Israel Science Foundation through a personal grant (ISF 264/23) and an equipment grant (ISF 532/23).

\section*{Impact Statement} This paper presents work whose goal is to advance the field of Machine Learning. There are many potential societal consequences of our work, none of which we feel must be specifically highlighted here.

%\clearpage
\bibliography{icml_2024}
\bibliographystyle{plainnat}

\newpage
\appendix
\onecolumn

\section{Subgraph GNNs as Cartesian Product Graphs}
\label{app: Subgraph GNNs as Graph Cartesian Products}

In this section, we delve into the relationship between the Cartesian product graph and \texttt{Subgraphormer} (Appendix \ref{app: The Graph Cartesian Product}). Additionally, we explore the broader connection between Cartesian product graphs and Subgraph GNNs in general (detailed in \Cref{sec: Establishing the General Relationship Between Cartesian Product Graphs and Subgraph GNNs}).

\subsection{The Cartesian Product Graph and its Application for \texttt{Subgraphormer}}
\label{app: The Graph Cartesian Product}

\begin{restatable}[Cartesian Product Graph]{definition}{CartesianProductGraphDef}
\label{def: Graph Cartesian Product}
Given two graphs $G_1$ and $G_2$, their Cartesian product $G_1 \square G_2$ is defined as:
\begin{itemize}
    \item The vertex set $ V(G_1 \square G_2) = V(G_1) \times V(G_2) $.
    \item Vertices $ (u_1, u_2) $ and $ (v_1, v_2) $ in $ G_1 \square G_2 $ are adjacent if:
    \begin{itemize}
        \item $ u_1 = v_1 $ and $ u_2 $ is adjacent to $ v_2 $ in $ G_2 $, or
        \item $ u_2 = v_2 $ and $ u_1 $ is adjacent to $ v_1 $ in $ G_1 $.
    \end{itemize}
\end{itemize}
\end{restatable}

Thus, we observe the following,
\begin{corollary}
\label{cor: adjacency of cartesian product}
    Let $G$ be a given graph, defined by an adjacency $A$, and assume $A$ doesn't include self loops. The adjacency matrix of the Cartesian product of $G$ with itself; namely, $G \square G$, is given by 
    \begin{equation}
    \label{eq: A_GG}
        \mcalA_{G \square G} \triangleq A \otimes I + I \otimes A,
    \end{equation}
    where by $\otimes$ we denote the Kronecker product.
\end{corollary}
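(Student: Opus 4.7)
The plan is to verify the claimed formula entrywise by exploiting the block structure of the Kronecker product, and then match the resulting condition to the adjacency rule in Definition A.1. Using the indexing convention $(s,v)$ for vertices of $G \square G$ (with $s, v \in V(G)$), I would first recall that for matrices $M, N \in \mathbb{R}^{n \times n}$ one has $(M \otimes N)_{((s,v),(s',v'))} = M_{ss'} \cdot N_{vv'}$. Applied to the two summands on the right-hand side of \Cref{eq: A_GG}, this gives
\begin{equation*}
(A \otimes I)_{((s,v),(s',v'))} = A_{ss'}\,\delta_{vv'}, \qquad (I \otimes A)_{((s,v),(s',v'))} = \delta_{ss'}\,A_{vv'}.
\end{equation*}

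Next I would compare this with Definition A.1. The sum of the two entries above equals $1$ precisely when exactly one of the following two disjoint cases holds: (i) $v = v'$ and $s \sim_G s'$, or (ii) $s = s'$ and $v \sim_G v'$. These are exactly the two conditions appearing in the adjacency rule for $G \square G$. Thus the support of $A \otimes I + I \otimes A$ coincides with the edge set of $G \square G$, and the entries are indicators (i.e., $0$ or $1$), which is the claim.

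The only place where care is needed is to ensure that cases (i) and (ii) do not overlap and produce an entry of $2$. Overlap would require $s = s'$, $v = v'$, $A_{ss'} = 1$, and $A_{vv'} = 1$ simultaneously, i.e., a self-loop in $A$. This is precisely ruled out by the assumption that $A$ has no self-loops, so the sum is always in $\{0,1\}$ and equals $1$ exactly on the edges of $G \square G$. This is the one subtle point in an otherwise purely definitional calculation, and I would state the no-self-loops hypothesis explicitly at the start of the proof to make this step transparent.
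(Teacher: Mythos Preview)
Your proof is correct and follows essentially the same approach as the paper. The paper treats this corollary as an immediate observation following Definition~A.1 (it is introduced with ``Thus, we observe the following'' and given no separate proof), and the closest explicit argument the paper provides---the proof of \Cref{cor: cartesian subgraph equiva}---proceeds by exactly the same entrywise computation $(I \otimes A)\big((s,v),(s',v')\big) = I(s,s')\cdot A(v,v')$ that you use; your additional remark about the no-self-loops hypothesis ruling out overlapping contributions is a nice touch that the paper leaves implicit.
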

Therefore, we claim,
\AdjacenciesAsProducts*
The proof is available in \Cref{app: proofs}.

Thus, given~\Cref{cor: cartesian subgraph equiva}, our objective is to diagonalize the Laplacian of $G \square G$, which is defined by,
\begin{align}
    \mathcal{L}_{G \square G} &= \diag\Big(\mcalA_{G \square G}\vec{1}_{n^2}\Big) - \mcalA_{G \square G} \\
    &=  \diag\Big((A \otimes I + I \otimes A)\vec{1}_{n^2}\Big) - A \otimes I + I \otimes A, \nonumber 
\end{align}
where by $\vec{1}_{n^2}$ we refer to a vector of $n^2$ ones.

The eigendecomposition of $\mathcal{L}_{G \square G}$, is given in the following proposition,
\SubgraphormerPE*

The proof is available in \Cref{app: proofs}.

We conclude with the following result, stating the complexity of our product graph PE.
\begin{restatable}[Product Graph PE Complexity]{proposition}{ComplexityPE}
\label{thm: complexity}
    Let $G=(A, X)$ be an undirected graph with $n$ vertices, and consider its Cartesian product graph $G \square G$.
    \begin{enumerate}
        \item The time complexity for diagonalizing the Laplacian matrix of $G \square G$ is $\mathcal{O}(n^4)$.

        \item For calculating $k$ eigenvectors, where $k \leq n$, the time complexity is $\mathcal{O}(k \cdot n^2)$. This is equivalent to the complexity of computing $k$ eigenvectors for the original graph $G$.
    \end{enumerate}
\end{restatable}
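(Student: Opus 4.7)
The plan is to reduce both parts to the eigendecomposition of the original Laplacian $L_G$, leveraging \Cref{prop: main prop}, which identifies the eigenpairs of $\mathcal{L}_{G \square G}$ as $\{(v_i \otimes v_j,\, \lambda_i + \lambda_j)\}_{i,j=1}^{n}$ in terms of the eigenpairs $\{(v_i, \lambda_i)\}_{i=1}^n$ of $L_G$. Thus we never actually form or operate on the $n^2 \times n^2$ matrix $\mathcal{L}_{G \square G}$; we only pay for work on $L_G$ plus a final bookkeeping step of outer-producting the relevant eigenvectors.

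For Part 1, I would first diagonalize $L_G$ using a standard dense eigensolver in $\mathcal{O}(n^3)$ time, then assemble all $n^2$ Kronecker products $v_i \otimes v_j$. Each product is a vector of length $n^2$ and costs $\mathcal{O}(n^2)$ to form, giving $\mathcal{O}(n^4)$ for the assembly. Adding the $\mathcal{O}(n^3)$ diagonalization yields the claimed $\mathcal{O}(n^4)$ overall cost.

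For Part 2, I would compute the $k$ smallest eigenpairs of $L_G$ via a Krylov-subspace iteration such as Lanczos/ARPACK at cost $\mathcal{O}(k \cdot n^2)$ \citep{lanczos1950iteration,lee2009k}, and then construct $k$ eigenvectors of $\mathcal{L}_{G \square G}$ as Kronecker products of the resulting vectors, each taking $\mathcal{O}(n^2)$, for another $\mathcal{O}(k \cdot n^2)$. Summing yields the claimed $\mathcal{O}(k \cdot n^2)$, which matches the cost of computing $k$ eigenvectors of $L_G$ itself.

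The only nontrivial step, and what I expect to be the main obstacle, is justifying that the $k$ smallest sums $\lambda_i + \lambda_j$ only involve indices $i, j \in \{1, \ldots, k\}$, so that computing $k$ (rather than all $n$) eigenpairs of $L_G$ really suffices. I would argue this by a short monotonicity argument: assuming $\lambda_1 \leq \cdots \leq \lambda_n$, any sum with $\max(i,j) > k$ is bounded below by $\lambda_1 + \lambda_{k+1}$, while the $k$ sums $\{\lambda_1 + \lambda_m\}_{m=1}^{k}$ are each at most $\lambda_1 + \lambda_k \leq \lambda_1 + \lambda_{k+1}$. Hence a valid choice of the $k$ smallest eigenpairs of $\mathcal{L}_{G \square G}$ can always be drawn from $\{(i,j) : i,j \leq k\}$ (ties broken in our favor), and the subsequent $k$-smallest selection over the $k^2$ candidate sums costs only $\mathcal{O}(k^2)$, a lower-order term that does not affect the final bound.
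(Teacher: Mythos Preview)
Your proposal is correct and follows essentially the same approach as the paper: invoke \Cref{prop: main prop} to reduce everything to the eigendecomposition of $L_G$, then account for the cost of forming the Kronecker-product eigenvectors. Your treatment is in fact more careful than the paper's, which simply asserts the $\mathcal{O}(k\cdot n^2)$ bound without spelling out the monotonicity argument you supply to guarantee that the $k$ smallest eigenpairs of $\mathcal{L}_{G\square G}$ can be assembled from only the $k$ smallest eigenpairs of $L_G$.
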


The proof is available in \Cref{app: proofs}.

\subsection{Establishing the General Relationship Between Cartesian Product Graphs and Subgraph GNNs}
\label{sec: Establishing the General Relationship Between Cartesian Product Graphs and Subgraph GNNs}
In this section we develop the connection between Subgraph GNNs and the Graph Cartesian Product (GCP); the results are summarized in \Cref{tab: Subgraph GNNs as Graph Cartesian Products}. We focus specifically on the node-based (Node-Marking) generation policy, over an original graph $G=(A, X)$, and specify the formulation of the following four main connectivities: Internal Subgraph connectivity, External Subgraph connectivity, Global Internal Subgraph connectivity, Global External Subgraph connectivity. 

Before delving into the derivation details, we define two graphs, $G_s$ and $G_c$, corresponding to the ``set graph'' and the ``clique graph'', respectively. The ``set graph'' corresponds to a graph with no edges, thus:
\begin{align}
\label{eq: A set}
    A_{G_s} = \vec{0}\vec{0}^T,
\end{align}
and the ``clique graph'' corresponds to a graph where every pair of nodes is connected by an edge, hence:
\begin{align}
\label{eq: A clique}
    A_{G_c} = \vec{1}\vec{1}^T - I.
\end{align}
Recalling the definition of the Cartesian product graph (\Cref{{def: Graph Cartesian Product}}), we introduce a relevant corollary:

\begin{restatable}[Adjacency of Cartesian Product Graph]{corollary}{CartesianProductGraphAdj}
    Consider two distinct graphs, $G_1 = (A_1, X_1)$ and $G_2 = (A_2, X_2)$, with $n$ and $m$ vertices, respectively. The adjacency matrix of their Cartesian product, denoted as $G_1 \square G_2$, is given by:
    \begin{equation}
    \label{eq: A G1 G2}
        \mathcal{A}_{G_1 \square G_2} = A_{G_1} \otimes I_m + I_n \otimes A_{G_2},
    \end{equation}
    where $\otimes$ denotes the Kronecker product, and $I_n$ and $I_m$ are identity matrices in $\mathbb{R}^{n \times n}$ and $\mathbb{R}^{m \times m}$, respectively.
\end{restatable}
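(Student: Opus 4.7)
The statement is a direct index-level computation on the Kronecker product, so the approach is to expand both sides entry-by-entry and match them against the adjacency condition from \Cref{def: Graph Cartesian Product}. First I would index the $nm \times nm$ matrix $\mathcal{A}_{G_1 \square G_2}$ by pairs $(u_1, u_2), (v_1, v_2)$ with $u_1, v_1 \in V(G_1)$ and $u_2, v_2 \in V(G_2)$. By the definition of the Cartesian product, the $((u_1,u_2),(v_1,v_2))$ entry equals $1$ iff exactly one of the following holds: (i) $u_1 = v_1$ and $u_2 \sim_{G_2} v_2$, or (ii) $u_2 = v_2$ and $u_1 \sim_{G_1} v_1$.

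Next I would expand the right-hand side of \Cref{eq: A G1 G2} using the Kronecker identity $(P \otimes Q)_{(i,k),(j,\ell)} = P_{ij} Q_{k\ell}$. This yields
\begin{equation*}
(A_{G_1} \otimes I_m)_{(u_1,u_2),(v_1,v_2)} = (A_{G_1})_{u_1 v_1}\,\delta_{u_2 v_2},
\qquad
(I_n \otimes A_{G_2})_{(u_1,u_2),(v_1,v_2)} = \delta_{u_1 v_1}\,(A_{G_2})_{u_2 v_2}.
\end{equation*}
The first term is $1$ precisely in case (ii) and the second is $1$ precisely in case (i), so their sum is $1$ exactly when the Cartesian product condition is satisfied and $0$ otherwise, matching $\mathcal{A}_{G_1 \square G_2}$ entrywise.

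The only point that requires any care is ensuring that the two indicator terms are never simultaneously $1$, which would produce a spurious entry of $2$. This disjointness holds because cases (i) and (ii) together would force $u_1 = v_1$ with $u_1 \sim_{G_1} v_1$, i.e.\ a self-loop in $G_1$, and symmetrically a self-loop in $G_2$; since we work with simple graphs (no self-loops), the two cases are mutually exclusive. Hence no obstacle beyond this bookkeeping remark arises, and the identity follows.
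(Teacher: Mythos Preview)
Your proposal is correct and follows the same entrywise Kronecker-expansion approach the paper uses. The paper itself does not spell out a proof for this corollary (it is stated as an immediate consequence of \Cref{def: Graph Cartesian Product}), but the identical index-level computation appears in the paper's proof of \Cref{cor: cartesian subgraph equiva}, where $(I \otimes A)\big((s,v),(s',v')\big) = I(s,s')\cdot A(v,v')$ is matched against the internal-connectivity condition. Your extra remark on disjointness (ruling out a simultaneous contribution of $1$ from both summands via the no-self-loops assumption) is a welcome bit of care that the paper leaves implicit.
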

In this subsection, as our focus is on graphs each containing $n$ nodes, we will omit the subscripts $n/m$ from the identity matrix for simplicity.

\textbf{Internal Subgraph Connectivity.} The adjacency matrix corresponding to the internal subgraph connectivity, $\mathcal{A}_G$, arises from the Cartesian product of the graphs $G_s$ and $G$, that is,
\begin{align}
    \mathcal{A}_G \triangleq \mathcal{A}_{G_s \square G} = A_{G_s} \otimes I + I \otimes A.
\end{align}
By substituting~\Cref{eq: A set}, we obtain:
\begin{align}
    \mathcal{A}_{G_s \square G} = I \otimes A.
\end{align}
This results is in line with \Cref{cor: cartesian subgraph equiva}.

\textbf{External Subgraph Connectivity.} Similarly, the adjacency matrix for external subgraph connectivity, $\mathcal{A}_{G^S}$, results from the Cartesian product of the graphs $G$ and $G_s$, that is,
\begin{align}
    \mathcal{A}_{G^S} \triangleq \mathcal{A}_{G \square G_s} = A \otimes I + I \otimes A_{G_s}.
\end{align}
After substituting~\Cref{eq: A set}, we find:
\begin{align}
    \mathcal{A}_{G \square G_s} = A \otimes I.
\end{align}
We note that this is also in line with \Cref{cor: cartesian subgraph equiva}. 

As derived in \Cref{sec: MES-GNN Main Connectivity: Just a Simple Graph Cartesian Product?} in the main paper, the connectivity that combines both Internal and External updates can be given by the Cartesian product of $G$ with itself, as given in \Cref{eq: A_G_G}.

\textbf{Global Internal Subgraph Connectivity.} Following the literature~\cite{zhang2023complete,frasca2022understanding}, assuming no self loops, the adjacency matrix corresponding to the global internal subgraph connectivity is defined as follows:
\begin{align}
 \mathcal{A}^{\text{Global}}_{G}\Big((s, v), (s', v')\Big) = \delta_{ss'},
\end{align}
where $\delta$ denotes the Kronecker delta. 

This matrix results from the Cartesian product of $G_s$ and $G_c$:
\begin{align}
     \mathcal{A}^{\text{Global}}_{G} \triangleq \mathcal{A}_{G_s \square G_c} = A_{G_s} \otimes I + I \otimes A_{G_c}.
\end{align}
Substituting~\Cref{eq: A clique,eq: A set}, we obtain:
\begin{align}
    \mathcal{A}_{G_s \square G_c} =  \vec{0}\vec{0}^T  \otimes I  + I \otimes (\vec{1}\vec{1}^T - I) = I \otimes (\vec{1}\vec{1}^T - I).
\end{align}

\textbf{Global External Subgraph Connectivity.} Again, assuming no self loops, the adjacency matrix for global external subgraph connectivity is defined by:
\begin{align}
\mathcal{A}^{\text{Global}}_{G^S}\Big((s, v), (s', v')\Big) = \delta_{vv'}.
\end{align}

This matrix can be obtained from the Cartesian product of $G_c$ and $G_s$:
\begin{align}
     \mathcal{A}^{\text{Global}}_{G^S} \triangleq \mathcal{A}_{G_c \square G_s} = A_{G_c} \otimes I + I \otimes A_{G_s}.
\end{align}
Upon substituting~\Cref{eq: A clique,eq: A set}, we get:
\begin{align}
    \mathcal{A}_{G_s \square G_c} = (\vec{1}\vec{1}^T - I) \otimes I + I \otimes \vec{0}\vec{0}^T =  (\vec{1}\vec{1}^T - I) \otimes I.
\end{align}

Analogously to the adjacency that unifies the internal and external aggregations, we also show that the following Cartesian Graph product results in a unifying adjacency for the global internal connectivity and the global external connectivity. This Cartesian product graph is given by $G_c \square G_c$, and therefore, its corresponding adjacency is,
\begin{align}
    \mcalA_{G_c \square G_c} =(\vec{1}\vec{1}^T - I) \otimes I + I \otimes (\vec{1}\vec{1}^T - I).
\end{align}

\begin{table}[ht]
    \centering
    \scriptsize
    \caption{Table summarizing the connection between Cartesian Product Graphs and Subgraph GNNs.}
    \label{tab: Subgraph GNNs as Graph Cartesian Products}
    % \addtolength{\tabcolsep}{-0.4em}
    \resizebox{\textwidth}{!}{%
    \begin{tabular}{l?c|c|>{\raggedright\arraybackslash}m{2.5cm}?c|c|c}
    \toprule
    Connectivity type  & GCP & Adjacency & Visualization & GCP -- unified & Adjacency -- unified & Visualization -- unified \\
    \midrule
    Internal Subgraph Connectivity   & $G_s \square G$ & $\mcalA_{G_s \square G} \triangleq \mcalA_G = I \otimes A$ & \includegraphics[scale=0.2]{Figures/A_G.pdf} & \multirow{10}{*}{$G \square G$}  & \multirow{10}{*}{$\mcalA_{G \square G} = A \otimes I + I \otimes A$} & \multirow{2}{*}{\includegraphics[scale=0.2]{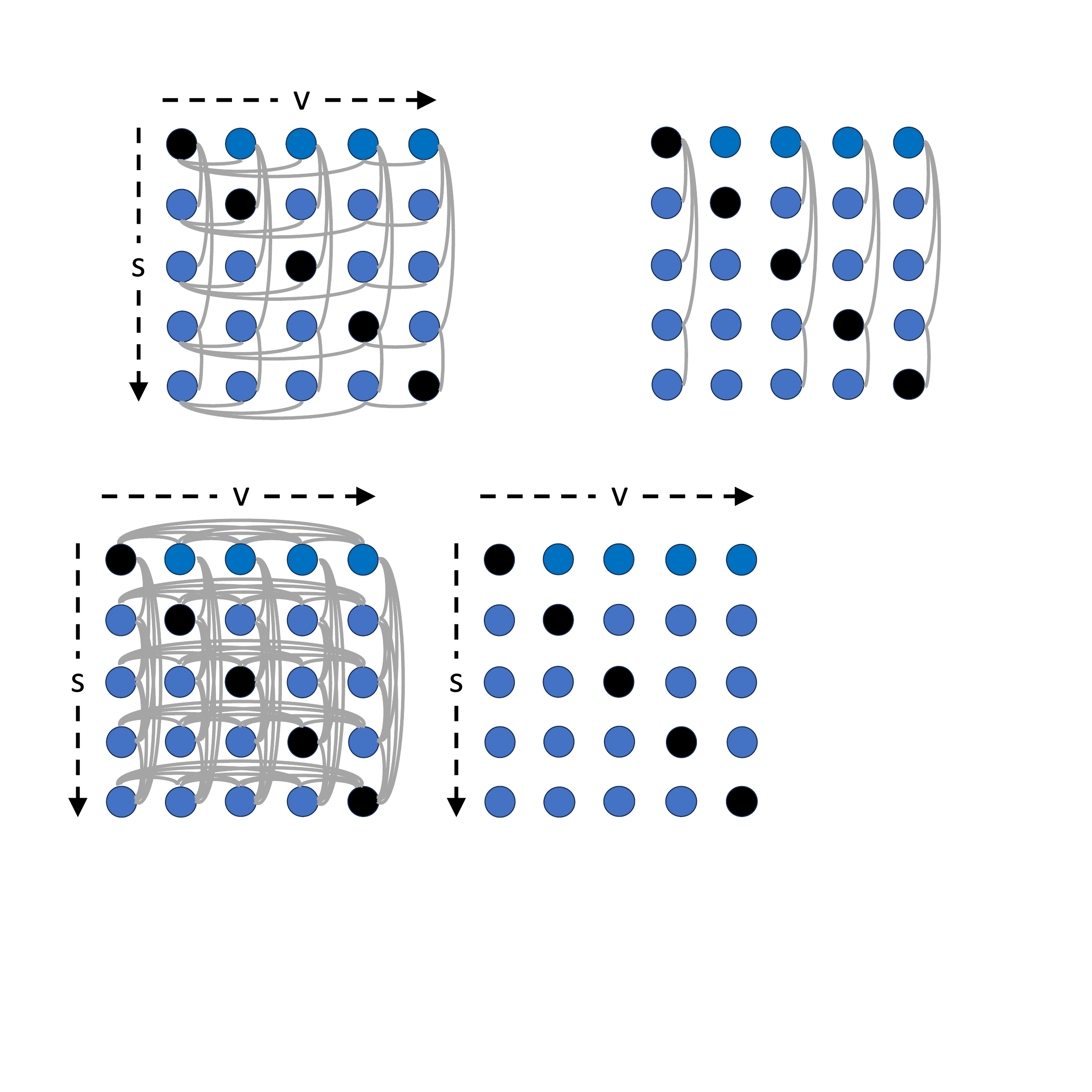}}\\
    % \cline{1-4}
    External Subgraph Connectivity & $G \square G_s$ & $\mcalA_{G \square G_s} \triangleq \mcalA_{G^S} = A \otimes I$ & \includegraphics[scale=0.2]{Figures/A_G_T.pdf} & & \\
    \specialrule{.2em}{.1em}{.1em} 
    Global Internal Subgraph Connectivity  & $G_s \square G_c$ & $\mcalA_{G_s \square G_c} = I \otimes (\vec{1}\vec{1}^T - I)$ & \includegraphics[scale=0.2]{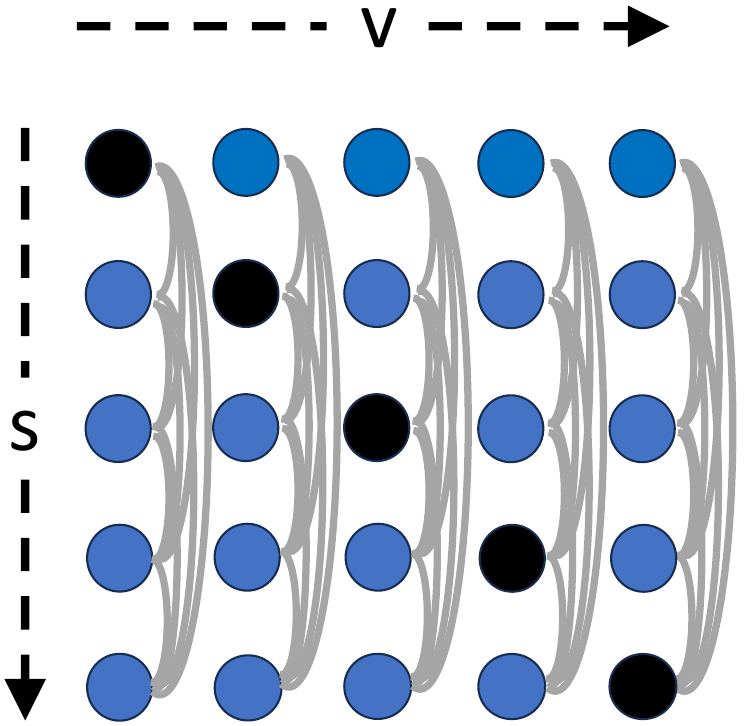}  & \multirow{10}{*}{$G_c \square G_c$}  & \multirow{10}{*}{$\mcalA_{G_c \square G_c} = (\vec{1}\vec{1}^T - I) \otimes I + I \otimes (\vec{1}\vec{1}^T - I)$} & \multirow{2}{*}{\includegraphics[scale=0.2]{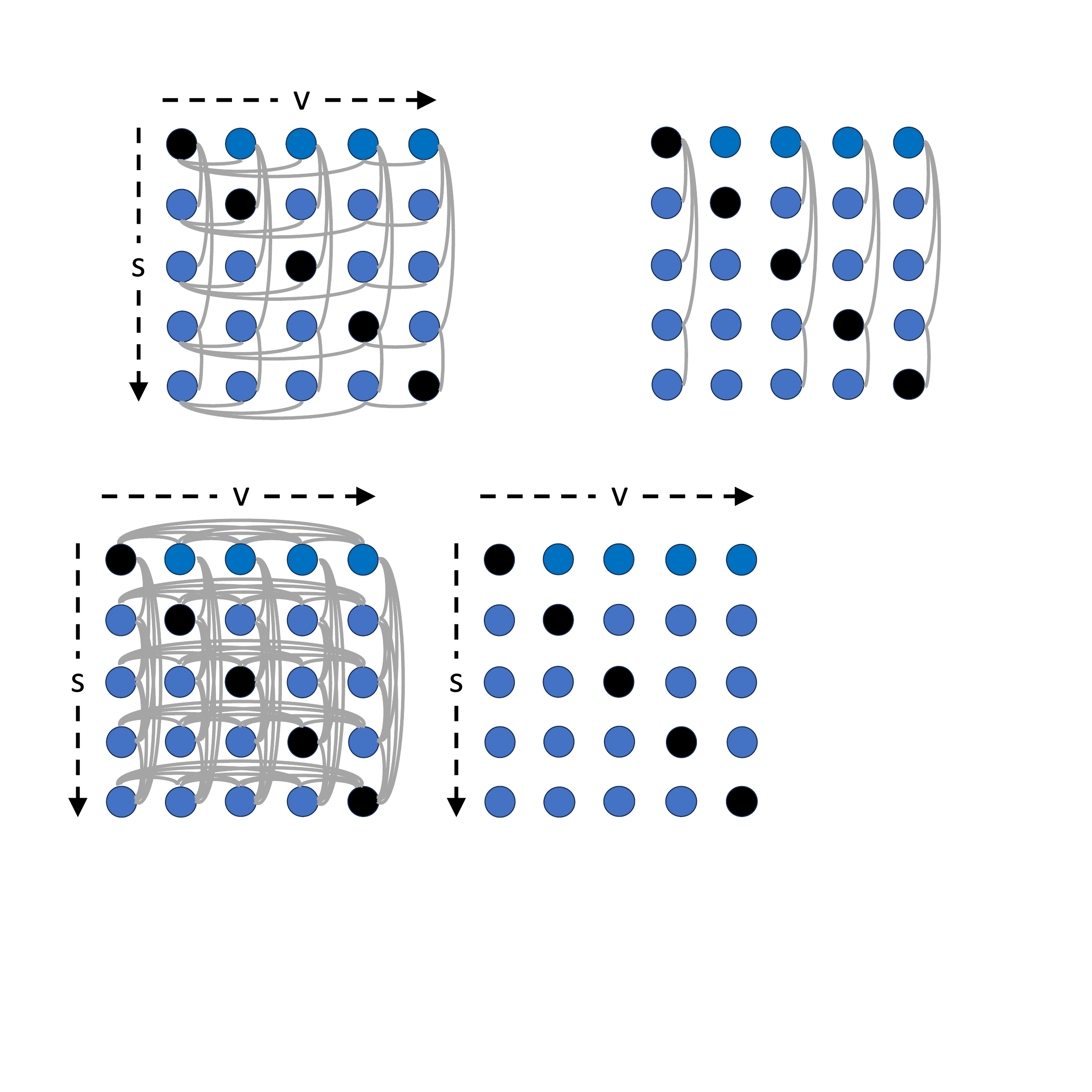}} \\
    % \cline{1-4}
    Global External Subgraph Connectivity &  $G_c \square G_s$ & $\mcalA_{G_c \square G_s} = (\vec{1}\vec{1}^T - I) \otimes I$ & \includegraphics[scale=0.2]{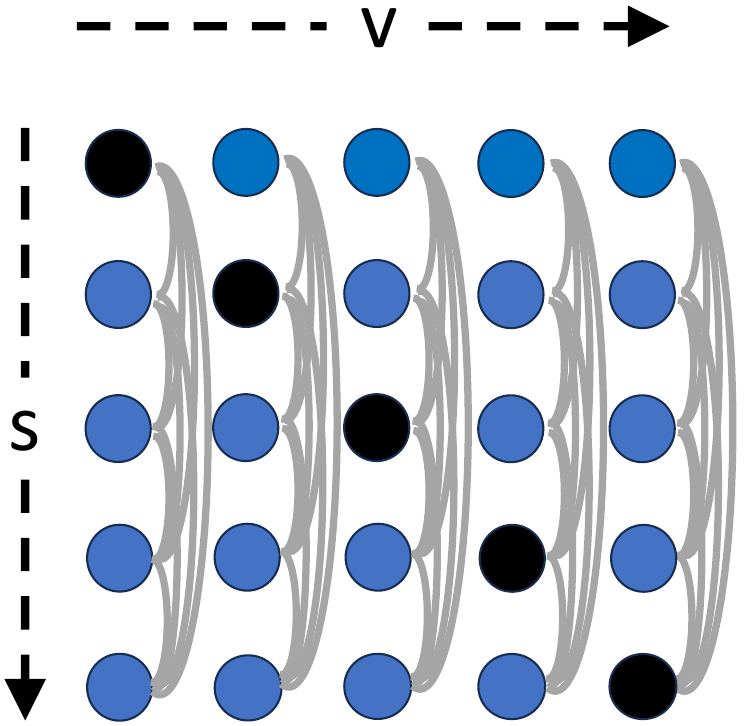}  & &  \\
    \bottomrule
    \end{tabular}
    }
\end{table}

% \begin{table}[ht]
%     \centering
%     \scriptsize
%     % \caption{On the \textsc{ZINC} datasets, \texttt{Subgraphormer} outperforms \colorbox{Gray}{Graph Transformers} and \colorbox{LightBlue}{Subgraph GNNs}. The top three results are reported as \textcolor{blue}{\textbf{First}}, \textcolor{red}{\textbf{Second}}, and \textcolor{orange}{\textbf{Third}}.}
%     \label{tab: Subgraph GNNs as Graph Cartesian Products}
%     %\resizebox{\columnwidth}{!}{%
%     \addtolength{\tabcolsep}{-0.4em}
%     \begin{tabular}{l|c|c|c|c}
%     \toprule
%     Connectivity type & Visualization & Corresponding Graph Cartesian Product & Adjacency & Adjacency unified \\
%     \midrule
%     Internal Subgraph Connectivity & - & $G \square G^s$ & $\mcalA_{G \square G^s} \triangleq \mcal_G = A \otimes I$  & asd \\
%     \midrule
%     External Subgraph Connectivity & - & $G^s \square G$ & $\mcalA_{G^s \square G} \triangleq \mcal_G = I \otimes A$  & asd \\
%         \midrule
%     Global Internal Subgraph Connectivity & - & $G \square G^q$ & $\mcalA_{G \square G^q} = (\vec{1}\vec{1}^T - I) \otimes I$  & asd \\
%         \midrule
%     Global External Subgraph Connectivity & - & $G^q \square G$ & $\mcalA_{G^q \square G} = I \otimes (\vec{1}\vec{1}^T - I) $  & asd \\
%     \bottomrule
%     \end{tabular}
%     %}
% \end{table}

\clearpage

\section{Extending Subgraphormer to any k-tuple}
\label{app: Extending Subgraphormer to Any k-tuple}

In this section, we extend the formulation of \texttt{Subgraphormer}, which was given for 2-tuples of nodes, to any $k$-tuple. We start in \Cref{app: k-tuple Product Graph PE}, by formulating the product graph PE for any $k$-tuple. This sets the required exposition for \Cref{app: k-tuple SAB}, where we formulate a SAB update to by applied to $k$-tuples.

\subsection{k-tuple Product Graph PE}
\label{app: k-tuple Product Graph PE}

In this section, we extend the concept of product graph PE to encompass any $k$-tuple for arbitrary values of $k$. This generalization stems from an essential observation that arises from the definition of the Cartesian Product Graph (\Cref{def: Graph Cartesian Product}).

\CartesianProductGraphAdj*

In this paper, our primary focus lies on a specific scenario where the Cartesian Product Graph is applied iteratively to the same graph. For this purpose, we introduce the Cartesian product operator. This operator, when applied to an adjacency matrix $A$ of a graph $G$, yields the adjacency matrix corresponding to $G^{\square^k}$ - the graph resulting from $k$-fold recursive Cartesian product of $G$ with itself.

\begin{definition}[Cartesian Product Operator]
\label{def: Cartesian operator}
    The Cartesian product operator is defined as:
    \begin{equation}
    \label{eq: Cartesian operator}
    \mcalC^k : \mathbb{R}^{n^k \times n^k} \rightarrow \mathbb{R}^{n^{k+1} \times n^{k+1}},
    \end{equation}
    applied to an adjacency matrix $A \in \mathbb{R}^{n \times n}$, recursively defined as:
    \begin{equation}
        \label{eq: Recursive Cartesian}
        \mcalC^k(A) = \mcalC^{k-1}(A) \otimes I_n + I_{n^{k-1}} \otimes A,
    \end{equation}
    with the base case being:
    \begin{equation}
        \label{eq: Cartesian Base Case}
        \mcalC^1(A) = A.
    \end{equation}
\end{definition}

It is apparent that for a graph $G$ with adjacency matrix $A$, applying the operator $\mathcal{C}^2$ to $A$ results in $\mathcal{C}^2(A) = A \otimes I_n + I_n \otimes A$, which is the adjacency matrix corresponding to the Cartesian Product of the graph with itself -- \Cref{eq: A Cartesian product}.

We propose the following proposition regarding the applicability of the Cartesian operator $\mcalC$ across any power $k$:

\begin{restatable}[Adjacency matrix of $G^{\square^k}$]{proposition}{AdjacencyOfGK}
\label{prop: A_G^k = C^k(A)}
    For a graph $G = (A, X)$ with adjacency matrix $A$, the adjacency matrix of $G^{\square^k}$—the graph formed by $k$-fold Cartesian product of $G$ with itself—is given by:
    \begin{equation}
        \mathcal{A}_{G^{\square^k}} = \mathcal{C}^k(A).    
    \end{equation}
\end{restatable}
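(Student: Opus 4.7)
The plan is to prove this by induction on $k$, leveraging the corollary on the adjacency of the Cartesian product of two (possibly distinct) graphs, which gives $\mathcal{A}_{G_1 \square G_2} = A_{G_1} \otimes I_m + I_n \otimes A_{G_2}$ for graphs with $n$ and $m$ vertices respectively. The recursive definition of $\mathcal{C}^k$ in \Cref{def: Cartesian operator} is essentially tailored to match this rule, so the induction should go through essentially verbatim.

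For the base case $k=1$, we have $G^{\square^1} = G$ whose adjacency is $A$, and by \Cref{eq: Cartesian Base Case}, $\mathcal{C}^1(A) = A$, so equality holds trivially.

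For the inductive step, assume $\mathcal{A}_{G^{\square^k}} = \mathcal{C}^k(A)$, where $G^{\square^k}$ has $n^k$ vertices. I would then observe that $G^{\square^{k+1}} = G^{\square^k} \square G$, and apply the two-graph Cartesian product adjacency formula with $G_1 = G^{\square^k}$ (on $n^k$ vertices) and $G_2 = G$ (on $n$ vertices), obtaining
\begin{equation*}
\mathcal{A}_{G^{\square^{k+1}}} = \mathcal{A}_{G^{\square^k}} \otimes I_n + I_{n^k} \otimes A.
\end{equation*}
Substituting the inductive hypothesis and matching against the recursion in \Cref{eq: Recursive Cartesian}, this equals $\mathcal{C}^{k+1}(A)$, closing the induction.

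There is no genuinely hard step here; the only thing to be careful about is bookkeeping the sizes of the identity matrices so that the Kronecker products are well-defined and agree with the recursive definition of $\mathcal{C}^k$. One minor formal point worth flagging is that the associativity (up to isomorphism) of the graph Cartesian product, $G^{\square^k} \square G \cong G \square G^{\square^k} \cong G^{\square^{k-1}} \square (G \square G)$, is needed to justify writing $G^{\square^{k+1}} = G^{\square^k} \square G$ unambiguously; this follows directly from the definition of adjacency in the Cartesian product and the corresponding mixed-product property of Kronecker products, and can be stated as a one-line remark before the induction.
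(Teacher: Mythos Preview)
Your proposal is correct and matches the paper's own proof essentially line for line: both argue by induction, invoking the two-graph Cartesian product adjacency formula with $G_1 = G^{\square^k}$ and $G_2 = G$ in the inductive step and then substituting the hypothesis to recover the recursion for $\mathcal{C}^{k+1}(A)$. The only cosmetic difference is that the paper takes $k=2$ as the base case (verifying it via \Cref{cor: adjacency of cartesian product}) whereas you start at $k=1$; your choice is arguably cleaner, and your remark on associativity is a fair point the paper leaves implicit.
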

The proof is given in \Cref{app: proofs}.

Utilizing \Cref{prop: A_G^k = C^k(A)}, we establish that $\mcalC^{k}$ effectively constructs the adjacency matrix corresponding to the graph $G^{\square^k}$. 

Notably, this operator functions recursively, therefore, to extend positional encoding to any $k$-tuple, our goal is to formulate a closed-form expression for the adjacency matrix of $G^{\square^k}$; we first introduce the necessary definition.

\begin{definition}[$\mcalA^k_{\mcalK}$]
\label{def: A_k^K}
    Given an adjacency matrix $A \in \mathbb{R}^{n \times n}$ and an index $k \in [\mcalK]$, define $\mcalA^k_{\mcalK}$ to be the tensor resulting from the Cartesian product of $\mcalK$ matrices, where each matrix is the identity matrix $I_n \in \mathbb{R}^{n \times n}$, except for the $k$-th factor which is $A$. That is, 
    \begin{equation}
        \label{eq: A^k_K}
        \mcalA^k_{\mcalK} = I_n \otimes I_n \otimes \cdots \otimes A \otimes \cdots \otimes I_n,
    \end{equation}

    where $A$ occupies the $k$-th position in the Cartesian product. %the positions are in $k \in \{0, 1, \ldots \mcalK-1\}$.
\end{definition}

Thus, we propose the following,

\begin{restatable}[Adjacency matrix of $G^{\square^k}$ -- closed form]{proposition}{AdjacencyOfGKClosedForm}
\label{prop: A_G^k = C^k}
    Given a graph $G = (A, X)$, with an adjacency matrix $A$, it holds that,
    \begin{equation}
    \label{eq: C^k closed form}
        \mcalC^{\mcalK}(A) = \sum_{k = 0}^{\mcalK-1} \mcalA^k_{\mcalK}.
    \end{equation}
\end{restatable}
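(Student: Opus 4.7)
The plan is to prove the identity by induction on $\mathcal{K}$, leveraging the recursive definition of $\mathcal{C}^{\mathcal{K}}$ (\Cref{def: Cartesian operator}) together with the explicit form of the summands $\mathcal{A}^k_{\mathcal{K}}$ (\Cref{def: A_k^K}). The base case $\mathcal{K}=1$ follows at once: the recursion gives $\mathcal{C}^{1}(A) = A$, which equals the single-term sum on the right-hand side, since with only one factor the only tensor of the form \Cref{eq: A^k_K} is $A$ itself.

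For the inductive step, assume the identity holds at level $\mathcal{K}-1$, i.e., $\mathcal{C}^{\mathcal{K}-1}(A) = \sum_{k} \mathcal{A}^k_{\mathcal{K}-1}$, where the sum ranges over the $\mathcal{K}-1$ possible positions for the factor $A$ in a $(\mathcal{K}-1)$-fold Kronecker product. Applying the recursion \Cref{eq: Recursive Cartesian} and using bilinearity of the Kronecker product with respect to addition, we obtain
\begin{equation*}
\mathcal{C}^{\mathcal{K}}(A) \;=\; \mathcal{C}^{\mathcal{K}-1}(A) \otimes I_n \,+\, I_{n^{\mathcal{K}-1}} \otimes A \;=\; \sum_{k} \bigl( \mathcal{A}^k_{\mathcal{K}-1} \otimes I_n \bigr) \,+\, I_{n^{\mathcal{K}-1}} \otimes A.
\end{equation*}

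The key step is to reinterpret each of the resulting $\mathcal{K}$ terms as a summand of the form $\mathcal{A}^k_{\mathcal{K}}$. By associativity of the Kronecker product, $\mathcal{A}^k_{\mathcal{K}-1} \otimes I_n$ is the $\mathcal{K}$-fold tensor product with $A$ sitting in the $k$-th slot and identities in the remaining $\mathcal{K}-1$ slots (the newly appended slot at position $\mathcal{K}$ is an identity), so it equals $\mathcal{A}^k_{\mathcal{K}}$. Similarly, unfolding $I_{n^{\mathcal{K}-1}} = I_n \otimes \cdots \otimes I_n$ ($\mathcal{K}-1$ factors) and again using associativity, $I_{n^{\mathcal{K}-1}} \otimes A$ is the $\mathcal{K}$-fold tensor product with $A$ in the last slot, which is precisely $\mathcal{A}^{\mathcal{K}}_{\mathcal{K}}$. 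Substituting these identifications yields $\mathcal{C}^{\mathcal{K}}(A) = \sum_{k=1}^{\mathcal{K}} \mathcal{A}^k_{\mathcal{K}}$, completing the induction.

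The proof is almost mechanical once the right bookkeeping is in place; the only mild obstacle is the index-shift subtlety (the paper writes the sum as $\sum_{k=0}^{\mathcal{K}-1}$ while $\mathcal{A}^k_{\mathcal{K}}$ is indexed over $k \in [\mathcal{K}]$), so one must pin down a consistent convention (there are exactly $\mathcal{K}$ possible positions for the $A$-factor) before identifying terms. Beyond that, the argument is essentially a careful invocation of associativity and bilinearity of $\otimes$, and no expressivity or spectral machinery is needed.
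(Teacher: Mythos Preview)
Your proposal is correct and follows essentially the same approach as the paper: induction on $\mathcal{K}$ using the recursion \Cref{eq: Recursive Cartesian}, bilinearity of the Kronecker product, and the identification $\mathcal{A}^k_{\mathcal{K}-1}\otimes I_n=\mathcal{A}^k_{\mathcal{K}}$ and $I_{n^{\mathcal{K}-1}}\otimes A=\mathcal{A}^{\mathcal{K}}_{\mathcal{K}}$ (up to the indexing convention you rightly flag). The only cosmetic difference is that the paper starts the base case at $\mathcal{K}=2$ and inducts from $\mathcal{K}$ to $\mathcal{K}+1$, whereas you start at $\mathcal{K}=1$ and induct from $\mathcal{K}-1$ to $\mathcal{K}$.
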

The proof is given in \Cref{app: proofs}.

Having \Cref{prop: A_G^k = C^k(A)} and \Cref{prop: A_G^k = C^k}, we straight forwardly infer the following Corollary,

\begin{corollary}
    For a graph $G = (A, X)$, the adjacency matrix of $G^{\square^k}$—the graph formed by $k$-fold Cartesian product of $G$ with itself—is given by:
    \begin{equation}
    \label{eq: A_G^box k}
        \mathcal{A}_{G^{\square^\mcalK}} = \sum_{k = 0}^{\mathcal{K}-1} \mathcal{A}^k_{\mathcal{K}}.    
    \end{equation}
\end{corollary}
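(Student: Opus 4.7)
The plan is to obtain the corollary as an immediate consequence of the two preceding propositions. Proposition \emph{Adjacency matrix of $G^{\square^k}$} states that $\mcalA_{G^{\square^{\mcalK}}} = \mcalC^{\mcalK}(A)$, while Proposition \emph{Adjacency matrix of $G^{\square^k}$ -- closed form} rewrites the recursively defined operator as $\mcalC^{\mcalK}(A) = \sum_{k=0}^{\mcalK-1} \mcalA^k_{\mcalK}$. Substituting the closed-form expression into the first equality yields
\begin{equation*}
\mcalA_{G^{\square^{\mcalK}}} \;=\; \mcalC^{\mcalK}(A) \;=\; \sum_{k=0}^{\mcalK-1} \mcalA^k_{\mcalK},
\end{equation*}
which is precisely the claim of the corollary.

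The only substantive task is to make sure that the two statements are being chained consistently. Both propositions use the same graph $G=(A,X)$ and the same operator $\mcalC^{\mcalK}$ from Definition \emph{Cartesian Product Operator}, so no translation of notation is required. The indexing convention of $\mcalA^k_{\mcalK}$, namely $A$ placed in the $k$-th slot of an otherwise all-identity Kronecker product, is fixed once in Definition \ref{def: A_k^K} and used identically in both the summand and the conclusion, so there is no hidden off-by-one issue to chase down.

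Because the corollary is a direct splice of two already-established results, I do not anticipate a genuine obstacle here; the real work sits upstream, in the two propositions being combined. If one wanted a self-contained derivation instead, the natural strategy would be to prove both auxiliary results by induction on $\mcalK$: the first using $G^{\square^{\mcalK+1}} = G^{\square^{\mcalK}} \square G$ together with the adjacency formula for a Cartesian product, and the second by distributing $\otimes I_n$ across the inductive sum and observing that each $\mcalA^k_{\mcalK} \otimes I_n$ equals $\mcalA^k_{\mcalK+1}$, while the leftover $I_{n^{\mcalK}} \otimes A$ supplies the new top-index summand $\mcalA^{\mcalK}_{\mcalK+1}$, completing the sum over $k=0,\ldots,\mcalK$.
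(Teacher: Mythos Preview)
Your proposal is correct and matches the paper's approach exactly: the paper states that ``Having \Cref{prop: A_G^k = C^k(A)} and \Cref{prop: A_G^k = C^k}, we straight forwardly infer the following Corollary,'' i.e., it also obtains the result by chaining $\mcalA_{G^{\square^{\mcalK}}} = \mcalC^{\mcalK}(A)$ with the closed-form expression $\mcalC^{\mcalK}(A) = \sum_{k=0}^{\mcalK-1} \mcalA^k_{\mcalK}$. Your additional sketch of the self-contained inductive route also mirrors precisely how the paper proves the two upstream propositions.
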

In what follows, we prove that the matrix $\mathcal{A}_{G^{\square^k}}$ is a valid, binary adjacency matrix.

\begin{restatable}[$\mathcal{A}_{G^{\square^k}}$ -- Valid binary adjacency matrix]{proposition}{AdjacencyOfGKIsValid}
    Given a graph $G= (A, X)$ with an adjacency matrix $A$, with no self loops, the matrix given by,
    \begin{equation}
        \mcalA_{G^{\square^{\mcalK}}} = \sum_{k = 0}^{\mcalK-1}  \mcalA^{k}_{\mcalK},
    \end{equation}
    is a binary adjacency matrix.
\end{restatable}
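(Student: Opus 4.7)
The goal is to show three things: (i) every entry of $\mathcal{A}_{G^{\square^{\mathcal{K}}}}$ lies in $\{0,1\}$, (ii) the matrix is symmetric, and (iii) the diagonal is zero (no self-loops). Since the adjacency is defined as a sum of tensor products, the plan is to reduce each property to an elementary combinatorial condition on tuples of indices using the explicit formula for Kronecker products.

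The first step is to write down a clean index formula for each summand $\mathcal{A}^k_{\mathcal{K}}$. By the definition of the Kronecker product applied to \Cref{eq: A^k_K}, the entry of $\mathcal{A}^k_{\mathcal{K}}$ at position $((v_1,\ldots,v_{\mathcal{K}}), (v'_1,\ldots,v'_{\mathcal{K}}))$ equals
\[
A_{v_k v'_k}\cdot\prod_{j\neq k}\delta_{v_j v'_j}.
\]
Thus the corresponding entry of $\mathcal{A}_{G^{\square^{\mathcal{K}}}}$ counts the number of coordinates $k\in\{1,\ldots,\mathcal{K}\}$ such that the two tuples agree outside the $k$-th slot and are adjacent in $G$ in the $k$-th slot.

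The key step, and the only real content of the proof, is the binary property. Suppose for contradiction that two distinct indices $k_1\neq k_2$ both contribute to the same entry. Then from the summand indexed by $k_1$ we must have $v_{k_2} = v'_{k_2}$ (since $k_2\neq k_1$), while from the summand indexed by $k_2$ we need $A_{v_{k_2} v'_{k_2}} = 1$. Combined, this forces $A_{v_{k_2} v_{k_2}} = 1$, contradicting the no self-loops assumption on $A$. Hence every entry is either $0$ or $1$.

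The remaining two properties are immediate. Symmetry follows because each $\mathcal{A}^k_{\mathcal{K}}$ is a Kronecker product of symmetric matrices ($A$ is symmetric since $G$ is undirected, and identity matrices are trivially symmetric), and Kronecker products and sums preserve symmetry. The no self-loops property follows because, on the diagonal, each summand contributes $A_{v_k v_k}\prod_{j\neq k}\delta_{v_j v_j} = A_{v_k v_k} = 0$. I expect no real obstacle beyond being careful with the contradiction argument for the binary claim, which is where the no self-loops hypothesis is genuinely used.
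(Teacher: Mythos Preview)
Your proof is correct. The paper takes a different but equivalent route to the key binary property: rather than arguing entry-wise, it shows that for $k \neq k'$ the Frobenius inner product $\langle \mathcal{A}^{k}_{\mathcal{K}}, \mathcal{A}^{k'}_{\mathcal{K}} \rangle_F$ vanishes, by expanding via the mixed-product rule for Kronecker products together with $\mathrm{Tr}(A \otimes B) = \mathrm{Tr}(A)\,\mathrm{Tr}(B)$ and the observation that $\mathrm{Tr}(AI^T) = \mathrm{Tr}(A) = 0$ under the no self-loops assumption. Since all entries are nonnegative, a zero Frobenius inner product forces disjoint supports, and binariness follows. Your entry-wise contradiction argument is the combinatorial unpacking of exactly the same disjoint-support idea and is arguably more transparent: it isolates precisely where the no self-loops hypothesis is used without the detour through traces and tensor identities. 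You also go slightly further than the paper by explicitly verifying symmetry and the vanishing diagonal; the paper's proof does not address these, effectively reading ``binary adjacency matrix'' as ``$\{0,1\}$-valued.''
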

The proof is given in \Cref{app: proofs}.

Having developed this closed form expression for the (binary) adjacency matrix of $G^{\square^k}$, we have the following Proposition, which generalizes \Cref{prop: main prop} to any $k$-tuple,

\begin{restatable}[$k$-tuple -- Product graph eigendecomposition]{proposition}{KTupleSubgraphormerPE}
\label{prop: gen main prop}
Consider a graph $G = (A, X)$ without self-loops. Define $G^{\square^k}$ as the Cartesian product of $G$ repeated $k$ times, with its adjacency matrix denoted by $\mathcal{A}_{G^{\square^k}}$. 
The eigenvectors and eigenvalues of the Laplacian matrix for $G^{\square^k}$ can be characterized as follows: for each set of indices $\{i_1, i_2, \ldots, i_k\}$, where $i_j \in \{1, 2, \ldots, n\}$ for each $j$, there exists an eigenvector-eigenvalue pair given by $(v_{i_1} \otimes v_{i_2} \otimes \ldots \otimes v_{i_k}, \lambda_{i_1} + \lambda_{i_2} + \ldots + \lambda_{i_k})$. 
Here, $\{(v_i, \lambda_i) \}_{i=1}^{n}$ represent the eigenvectors and eigenvalues of the Laplacian matrix of the original graph $G$.
\end{restatable}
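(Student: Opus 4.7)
The plan is to compute the Laplacian of $G^{\square^{\mcalK}}$ explicitly as a sum of $\mcalK$ terms, each a Kronecker product with $L$ in a single slot, and then to read off its spectrum via the mixed-product property of the Kronecker product. This turns the statement into a short algebraic consequence once the Laplacian is in the right form, and it generalizes the base case $\mcalK = 2$ from \Cref{prop: main prop} without requiring induction.

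First, I would invoke \Cref{prop: A_G^k = C^k} of the excerpt to write
\begin{equation*}
\mcalA_{G^{\square^{\mcalK}}} \;=\; \sum_{j=1}^{\mcalK} \mcalA_{\mcalK}^{j},
\end{equation*}
where each $\mcalA_{\mcalK}^{j} = I_n \otimes \cdots \otimes A \otimes \cdots \otimes I_n$ has $A$ in the $j$-th slot. The row-sum vector of $\mcalA_{\mcalK}^{j}$ is $\vec{1} \otimes \cdots \otimes (A\vec{1}) \otimes \cdots \otimes \vec{1}$, so its diagonal degree matrix is $D_{\mcalK}^{j} = I_n \otimes \cdots \otimes D \otimes \cdots \otimes I_n$, with $D = \diag(A\vec{1})$. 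Because the preceding proposition of the excerpt guarantees that $\mcalA_{G^{\square^{\mcalK}}}$ is a binary adjacency matrix (using the no-self-loops hypothesis), the edges encoded by distinct $\mcalA_{\mcalK}^{j}$ are disjoint and the degree matrix of $G^{\square^{\mcalK}}$ is exactly $\sum_{j=1}^{\mcalK} D_{\mcalK}^{j}$. Subtracting yields the Kronecker-sum form
\begin{equation*}
\mathcal{L}_{G^{\square^{\mcalK}}} \;=\; \sum_{j=1}^{\mcalK} L_{\mcalK}^{j}, \qquad L_{\mcalK}^{j} \;=\; I_n \otimes \cdots \otimes L \otimes \cdots \otimes I_n,
\end{equation*}
where $L = D - A$ is the Laplacian of $G$ and $L$ occupies the $j$-th slot.

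Next, I would apply the mixed-product rule $(B_1 \otimes \cdots \otimes B_{\mcalK})(w_1 \otimes \cdots \otimes w_{\mcalK}) = (B_1 w_1) \otimes \cdots \otimes (B_{\mcalK} w_{\mcalK})$. For eigenpairs $(v_{i_j}, \lambda_{i_j})$ of $L$, each summand acts as $L_{\mcalK}^{j}(v_{i_1} \otimes \cdots \otimes v_{i_{\mcalK}}) = \lambda_{i_j}(v_{i_1} \otimes \cdots \otimes v_{i_{\mcalK}})$, so by linearity $v_{i_1} \otimes \cdots \otimes v_{i_{\mcalK}}$ is an eigenvector of $\mathcal{L}_{G^{\square^{\mcalK}}}$ with eigenvalue $\sum_{j=1}^{\mcalK} \lambda_{i_j}$. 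Since $L$ is symmetric, its eigenvectors $\{v_i\}_{i=1}^{n}$ form an orthonormal basis of $\mathbb{R}^n$, hence the $n^{\mcalK}$ tensor products $\{v_{i_1} \otimes \cdots \otimes v_{i_{\mcalK}}\}$ form an orthonormal basis of $\mathbb{R}^{n^{\mcalK}}$, accounting for the entire spectrum.

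The main obstacle I anticipate is not the algebra but justifying the clean Kronecker-sum form of the Laplacian: naively summing the per-slot degree matrices $D_{\mcalK}^{j}$ would overcount whenever the $\mcalA_{\mcalK}^{j}$ share edges, and this is precisely where the no-self-loops hypothesis enters, via the binary-adjacency proposition of the excerpt. Once that is invoked, everything else reduces to standard Kronecker-product manipulations; no induction on $\mcalK$ is required.
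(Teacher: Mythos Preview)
Your proposal is correct and follows essentially the same route as the paper: express $\mcalA_{G^{\square^{\mcalK}}}$ as a Kronecker sum via \Cref{prop: A_G^k = C^k}, compute the degree matrix term-by-term to obtain $\mathcal{L}_{G^{\square^{\mcalK}}} = \sum_j \mathcal{L}^j_{\mcalK}$, and read off the spectrum from the mixed-product rule. The only minor difference is that the paper justifies the degree-matrix sum directly via linearity of $\diag(\cdot\,\vec{1})$ rather than through edge-disjointness, so the obstacle you anticipate is in fact a non-issue.
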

The proof is given in \Cref{app: proofs}.

The above Proposition enables the computation of positional encodings for any $k$-tuple, offering significantly more efficiency compared to the conventional method of directly diagonalizing a matrix in $ \mathbb{R}^{n^k \times n^k} $.

\begin{restatable}[$k$-tuple PE efficiency]{theorem}{KTupleSubgraphormerPEEfficiency}
    Consider a graph $G = (A, X)$ without self-loops. Define $ G^{\square^k} $ as the Cartesian product of $ G $ repeated $ k \geq 2 $ times.

    The time complexity of computing the eigendecomposition of  the Laplacian matrix of $ G^{\square^k} $ is $ \mathcal{O}(n^{2k}) $.
\end{restatable}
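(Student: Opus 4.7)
The plan is to leverage \Cref{prop: gen main prop}, which gives an explicit closed-form description of the full spectrum of $\mathcal{L}_{G^{\square^k}}$ in terms of the spectrum of $\mathcal{L}_G$. Since the expensive part of any eigendecomposition is producing the eigenvectors themselves (the eigenvalues are just scalars), the whole argument reduces to (i) diagonalizing the small Laplacian $\mathcal{L}_G\in\mathbb{R}^{n\times n}$ once, and (ii) bounding the cost of writing down the $n^k$ resulting Kronecker-product eigenvectors of $\mathcal{L}_{G^{\square^k}}$.

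First I would diagonalize $\mathcal{L}_G$ using any standard algorithm; this yields all $n$ pairs $\{(v_i,\lambda_i)\}_{i=1}^{n}$ in time $\mathcal{O}(n^3)$. Next, by \Cref{prop: gen main prop}, each eigenpair of $\mathcal{L}_{G^{\square^k}}$ has the form $\bigl(v_{i_1}\otimes\cdots\otimes v_{i_k},\ \lambda_{i_1}+\cdots+\lambda_{i_k}\bigr)$ indexed by $(i_1,\ldots,i_k)\in[n]^k$. Computing a single such eigenvalue takes $\mathcal{O}(k)$ additions, so listing all $n^k$ eigenvalues costs $\mathcal{O}(k\cdot n^k)$, which is dominated by $\mathcal{O}(n^{2k})$ for $k\ge 2$. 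For the eigenvectors, the Kronecker product $v_{i_1}\otimes\cdots\otimes v_{i_k}$ is a vector in $\mathbb{R}^{n^k}$, so merely writing it down requires $\Theta(n^k)$ operations, and doing so for each of the $n^k$ index tuples yields a total cost of $\mathcal{O}(n^k\cdot n^k)=\mathcal{O}(n^{2k})$. Summing the three contributions $\mathcal{O}(n^3)+\mathcal{O}(k\cdot n^k)+\mathcal{O}(n^{2k})=\mathcal{O}(n^{2k})$ completes the complexity bound.

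The only subtle point is justifying that this really gives a complete eigendecomposition (and not merely $n$ of the $n^k$ pairs). For this I would invoke \Cref{prop: gen main prop} in full: it produces exactly $n^k$ eigenpairs, which matches the dimension of $\mathcal{L}_{G^{\square^k}}\in\mathbb{R}^{n^k\times n^k}$, so the listed pairs do constitute a full spectral decomposition; no further computation is needed. The main ``obstacle''—really just a sanity check—is making sure the dominant term is the output-size cost $n^{2k}$ of materializing all eigenvectors, which is unavoidable since the output itself has $\Theta(n^{2k})$ entries, so the bound is in fact tight.
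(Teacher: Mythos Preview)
Your proposal is correct and follows essentially the same approach as the paper: diagonalize $\mathcal{L}_G$ in $\mathcal{O}(n^3)$, then invoke \Cref{prop: gen main prop} to build all $n^k$ eigenpairs at cost $\mathcal{O}(n^{2k})$, with the latter dominating since $k\ge 2$. Your version is slightly more detailed (separating eigenvalue from eigenvector costs and noting the output-size lower bound), but the argument is the same.
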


The proof is given in \Cref{app: proofs}.

We note that diagonalizing directly the Laplacian matrix of $G^{\square^k}$ would instead take $\mathcal{O}(n^{3k})$.

%\clearpage

\subsection{k-tuple SAB}
\label{app: k-tuple SAB}

Building upon the derivation of the adjacency matrix obtained from the Cartesian product of graph $G$ applied recursively $k$ times, as outlined in \Cref{eq: A_G^box k}, we can extend the node update mechanisms, initially conceptualized for 2-tuples in \Cref{sec: Subgraph Attention Block}, to support $k$-tuples. 

\textbf{Generalization of Internal/External updates.} We recall the $2$-tuple case, where the two matrices corresponding the Internal and External aggregations of \texttt{Subgraphormer} are given in \Cref{eq: A_G_G}. In our new formalism, using the Cartesian Product Operator, we can write,
\begin{equation}
    \mcalA_{G \square G} = \mcalA^0_2 + \mcalA^1_2,
\end{equation}
where $\mcalA^0_2 = \mcalA_{G^S}$ and $\mcalA^1_2 = \mcalA_G$. Thus the Internal and External updates are given as $\{ \mcalA^0_2, \mcalA^1_2 \}$.

Building on \Cref{eq: A_G^box k}, we can extend to a $\mcalK$-tuple scenario where the ``Internal/External'' updates correspond to a series of adjacencies, denoted by $\{ \mcalA^i_\mcalK \}_{i=0}^{\mcalK-1}$.

\textbf{Generalization of Point-wise update.}
As defined in \Cref{sec: From Subgraph GNNs to graph products}, the point-wise update should allow a node $v$ in subgraph $s$ to have access to the root representation, where the roots in the case of a $2$-tuple are defined to be, $\{ (v,v); v \in V \}$.

While the root nodes in the case of $k$-tuples can be defined as $\{(\overbrace{v, \ldots, v}^{\text{$k$ elements}}); v \in V \}$, the 
correspondence between nodes and subgraphs is not clear. 

Consequently, for a $k$-tuple scenario, we define the point-wise update by mapping the ``subgraph'' to a subset of $k-1$ nodes within the tuple, and designating the ``node'' as the remaining element in the tuple. Take, for instance, the tuple $(s_1, s_2, \ldots, s_i, v, s_{i+1}, \ldots, s_k)$: here, the ``subgraph'' comprises the indices $(s_1, s_2, \ldots, s_i, s_{i+1}, \ldots, s_k)$, while the ``node'' is represented by $v$.

Accordingly, we establish $k$ such point-wise updates. To be precise, for each $i \in \{ 1, 2, \ldots, k \}$, we define the update as follows:

\begin{align}
     \mcalA^{i}_{\text{point}}\big((v_1, \ldots, v_k), (v'_1, \ldots, v'_k)\big) = 
     \begin{cases} 
        1 & \text{if } v'_1 = \ldots = v'_k \text{ and } v'_1 = v_j \text{ for all } j \neq i; \\
        0 & \text{otherwise}.
    \end{cases} 
\end{align}

This equation defines the function $\mcalA^{i}_{\text{point}}$, which evaluates the adjacency between two $k$-tuples of vertices, $(v_1, \ldots, v_k)$ and $(v'_1, \ldots, v'_k)$. The function returns $1$, indicating a specific type of adjacency, if the second tuple is a root node -- all vertices in the second tuple are identical ($v'_1 = v'_2 = \ldots = v'_k$), and simultaneously, all vertices in the first tuple match this root node for all indices except $i$, denoted as $v'_1 = v_j$ for all $j \neq i$. In other instances, where these conditions are not satisfied, the function returns $0$, signifying the absence of such adjacency.

%\clearpage

\section{Extended Experimental Section}
\label{app: Extended Experimental Section}

\subsection{Dataset Description}
\label{app: Dataset Description}
In this section we overview the eight different datasets considered; this is summarized in \Cref{tab: dataset-overview}.
\begin{table*}[t]
    \centering
        \caption{Overview of the graph learning datasets.}
    \resizebox{\textwidth}{!}{%
        \begin{tabular}{l|c|c|c|c|c|c}
            \toprule
            \textbf{Dataset} & \textbf{\# Graphs} & \textbf{Avg. \# nodes} & \textbf{Avg. \# edges} & \textbf{Directed} &  \textbf{Prediction task} & \textbf{Metric} \\
            \midrule
            \textsc{ZINC-12k}~\cite{sterling2015zinc} & 12,000 & 23.2 & 24.9 & No  & Regression & Mean Abs. Error \\
            \textsc{ZINC-Full}~\cite{sterling2015zinc} & 249,456 & 23.2 & 49.8 & No  & Regression & Mean Abs. Error \\
            \textsc{Alchemy-12k}~\cite{chen2019alchemy} & 12,000 & 10.12 & 20.9 & No & 12-task Regression &  Mean Abs. Error \\
            \textsc{ogbg-molhiv}~\cite{hu2020open}  & 41,127 & 25.5 & 27.5 & No  & Binary Classification & AUROC \\
            \textsc{ogbg-molbace}~\cite{hu2020open}  & 1513 & 34.1 & 36.9  & No & Binary Classification & AUROC \\
            \textsc{ogbg-molesol}~\cite{hu2020open}  & 1,128 & 13.3 & 13.7  & No & Regression & Root Mean Squ. Error \\
            \midrule
            \textsc{Peptides-func}~\cite{dwivedi2022long} & 15,535 & 150.9 & 307.3 & No & 10-task Classification & Avg. Precision \\
            \textsc{Peptides-struct}~\cite{dwivedi2022long} & 15,535 & 150.9 & 307.3 & No & 11-task Regression & Mean Abs. Error \\
            \bottomrule
        \end{tabular}
    }
    \label{tab: dataset-overview}
\end{table*}

\textbf{\textsc{ZINC-12k} and \textsc{ZINC-Full} Datasets~\cite{sterling2015zinc, gomez2018automatic, dwivedi2023benchmarking}.} The \textsc{ZINC-12k} dataset comprises 12,000 molecular graphs, extracted from the ZINC database, which is a collection of commercially available chemical compounds. These molecular graphs vary in size, ranging from 9 to 37 nodes each. In these graphs, nodes correspond to heavy atoms, encompassing 28 distinct atom types. Edges in the graphs represent chemical bonds, with three possible bond types. The primary objective when using this dataset is to perform regression analysis on the constrained solubility (logP) of the molecules. The dataset is pre-partitioned into training, validation, and test sets, containing 10,000, 1,000, and 1,000 molecular graphs, respectively. In its full form, namely, \textsc{ZINC-Full}, it contains approximately 250,000 molecular graphs. These graphs vary in complexity, with each graph containing again between 9 to 37 nodes, and 16 to 84 edges. The nodes in these graphs also represent heavy atoms, and the dataset includes 28 different types of atoms. The edges, on the other hand, represent bonds between these atoms, and there are 4 distinct types of bonds in the dataset.

\textbf{\textsc{ogbg-molhiv}, \textsc{ogbg-molbace}, \textsc{ogbg-molesol} Datasets~\cite{hu2020open}.} Those datasets are molecular property prediction datasets, adopted by the Open Graph Benchmark (OGB) from MoleculeNet. These datasets employ a unified featurization for nodes (atoms) and edges (bonds), encapsulating various chemophysical properties.

\textbf{\textsc{Alchemy-12k} Dataset~\cite{chen2019alchemy}.} The \textsc{Alchemy-12k} dataset 
includes quantum mechanical properties of 12,000 organic molecules containing up to 14 heavy atoms like Carbon (C), Nitrogen (N), Oxygen (O), Fluorine (F), Sulfur (S), and Chlorine (Cl). These properties were computed using the Python-based Simulations of Chemistry Framework (PySCF).

\textbf{\textsc{Peptides-func} and \textsc{Peptides-struct} Datasets~\cite{dwivedi2022long}.} The \textsc{Peptides-func} and \textsc{Peptides-struct} datasets consist of atomic graphs representing peptides. In \textsc{Peptides-func}, the task involves multi-label graph classification into ten non-exclusive peptide functional classes. Conversely, \textsc{Peptides-struct} focuses on graph regression to predict eleven three-dimensional structural properties of the peptides.

\subsection{Experimental Details}
\label{app: Experimental Details}
Our experiments were conducted using the PyTorch \cite{paszke2019pytorch} and PyTorch Geometric \cite{fey2019fast} frameworks, using a single NVIDIA L40 GPU, and for every considered experiment, we show the mean $\pm$ std. of 3 runs with different random seeds. Hyperparameter tuning was performed utilizing the Weight and Biases framework \cite{wandb} -- see \Cref{app: HyperParameters}. All our \texttt{MLP}s feature a single hidden layer equipped with a ReLU non-linearity function. For the encoding of atom numbers and bonds, we utilized learnable embeddings indexed by their respective numbers. An exception was made for the \textsc{Alchemy-12k} dataset, where instead we employed a linear layer, as done in \citet{morris2022speqnets}. For all our datasets, we used a fixed value of $H=4$ attention heads.

In the case of the \textsc{ogbg-molhiv}, \textsc{ogbg-molesol}, \textsc{ogbg-molbace}, \textsc{Alchemy-12k} datasets, we follow \citet{frasca2022understanding}, therefore adding a residual connection between different layers. Additionally, for those datasets, we used linear layers instead of \texttt{MLP}s inside the GIN layers. Moreover, for these four datasets, and for the \textsc{Peptides} datasets, the following pooling mechanism was employed, instead of the one mentioned in~\Cref{eq: pooling} (which was used for \textsc{ZINC-12k} and \textsc{ZINC-Full}),

\begin{equation}
    \rho(\mathcal{X}) = \texttt{MLP} \left( \sum_{s=1}^n \left( \frac{1}{n} \sum_{v=1}^n  \mathcal{X}(s,v) \right) \right).
\end{equation}
For the \textsc{Peptides} datasets, we also used a residual connection between layers.

\subsection{Additional Results}
\label{app: Extended Results}

\textbf{\textsc{Alchemy-12k}.}
The results regarding the \textsc{Alchemy-12k} dataset~\citep{chen2019alchemy} is summarized in \Cref{tab: alchemy}, showcasing \texttt{Subgraphormer} outperforming Subgraph GNNs methods.
\begin{table}[t]
    \centering
    \caption{Test results over the \textsc{Alchemy-12k} dataset. 
    \colorbox{LightBlue}{Subgraph-based} baselines are highlighted in light blue. The top three results are reported as \textcolor{blue}{\textbf{First}}, \textcolor{red}{\textbf{Second}}, and \textcolor{orange}{\textbf{Third}}.}
    \label{tab: alchemy}
    \scriptsize
    % \resizebox{\columnwidth}{!}{%
    \begin{tabular}{l|c}
    \toprule
    \multirow{2}*{Model} & \textsc{Alchemy-12k} \\
    & (MAE $\downarrow$) \\
    \midrule
    GIN~\cite{xu2018powerful} & $0.180$\tiny $\pm0.006$ \\
    SignNet~\cite{lim2022sign} & $\textcolor{orange}{\mathbf{0.113}}$\tiny $\pm0.002$ \\
    $\delta$-2-GNN~\cite{morris2020weisfeiler} & $0.118$\tiny $\pm0.001$ \\
    $\delta$-2-LGNN~\cite{morris2020weisfeiler} & $0.122$\tiny $\pm0.003$ \\
    SpeqNet~\cite{morris2022speqnets} & $0.115$\tiny $\pm0.001$ \\
    GNN-IR~\cite{dupty2022graph} & $0.119$\tiny $\pm0.002$ \\
    PF-GNN~\cite{dupty2021pf} & $\textcolor{red}{\mathbf{0.111}}$\tiny $\pm0.010$ \\
    PPGN$++$(6)~\cite{puny2023equivariant} & $\textcolor{blue}{\mathbf{0.109}}$\tiny $\pm0.001$ \\
    \midrule
    \rowcolor{LightBlue} Recon. GNN~\cite{cotta2021reconstruction} & $0.125$\tiny $\pm0.001$ \\
    \rowcolor{LightBlue} DS-GNN (NM)~\cite{bevilacqua2023efficient} & $0.116$\tiny $\pm0.001$ \\
    \rowcolor{LightBlue} \revision{GNN-SSWL+~\cite{zhang2023complete}} & $0.116$\tiny $\pm0.002$ \\
    \midrule
    \texttt{Subgraphormer} & $0.114$\tiny $\pm0.001$ \\
    \texttt{Subgraphormer + PE} & $\textcolor{orange}{\mathbf{0.113}}$\tiny $\pm0.002$ \\
    \bottomrule
    \end{tabular}
    %}
\end{table}

\textbf{Stochastic Subgraph Sampling.}
The comparison of our stochastic sampling approach against the proposed method by~\citet{bevilacqua2021equivariant} over the \textsc{Zinc-12k} dataset is given in \Cref{tab: zinc-sampling}.
\begin{table}[t]
\centering
\scriptsize
\caption{Comparison of the stochastic variant of \texttt{Subgraphormer} and \texttt{Subgraphormer PE} to DSS-GNN, over the \textsc{ZINC-12k} dataset. Best result for each sampling ratio is in \textbf{Bold}.}
% \caption{Comparison between \colorbox{Gray}{Transformer-based} and \colorbox{LightBlue}{Subgraph-based} architectures.}
\label{tab: zinc-sampling}
% \resizebox{\columnwidth}{!}{%
\begin{tabular}{lr|c}
\toprule
\multirow{2}*{Model} &   & \textsc{ZINC-12k}  \\
&   &(MAE $\downarrow$) \\
\midrule
\multirow{4}{*}{DSS-GNN (EGO+)~\cite{bevilacqua2021equivariant}} & 100\%  & $0.102$\tiny $\pm0.003$ \\
 & 50\%  & $0.155$\tiny $\pm 0.007$  \\
 & 20\%  & $0.166$\tiny $\pm0.005$  \\
 & 5\%  & $0.179$\tiny $\pm0.001$ \\
 \midrule
% \multirow{4}{*}{\revision{GNN-SSWL+~\cite{zhang2023complete}}} & 100\%  &  \\
%  & 50\%  &  \\
%  & 20\%  &   \\
%  & 5\%  &  \\
%  \midrule
 \multirow{4}{*}{\texttt{Subgraphormer}} & 100\%  & $0.067$\tiny $\pm0.007$  \\
 & 50\%  & $\mathbf{0.084}$\tiny $\pm0.002$   \\
 & 20\%  & $0.121$\tiny $\pm0.007$  \\
 & 5\%  & $ 0.200$\tiny $\pm0.017$ \\
 \midrule
 \multirow{4}{*}{\texttt{Subgraphormer + PE}} & 100\%  & $\mathbf{0.063} $\tiny $\pm0.0003$ \\
 & 50\%  & $\mathbf{0.084}$\tiny $\pm0.002$ \\
 & 20\%  & $\mathbf{0.120}$\tiny $\pm0.002$ \\
 & 5\%  & $\mathbf{0.175}$\tiny $\pm0.006$ \\
\bottomrule
\end{tabular}
% }
\end{table}

% \begin{table}[h]
% \centering
% \scriptsize
% \caption{Best results are in \textbf{Bold}.}
% % \caption{Comparison between \colorbox{Gray}{Transformer-based} and \colorbox{LightBlue}{Subgraph-based} architectures.}
% \label{tab: zinc-sampling}
% % \resizebox{\columnwidth}{!}{%
% \begin{tabular}{lr|c|c}
% \toprule
% \multirow{2}*{Model} &   & \textsc{ZINC-12k} & \textsc{ogbg-molhiv} \\
% &   &(MAE $\downarrow$) & (ROC-AUC $\uparrow$) \\
% \midrule
% \multirow{4}{*}{DSS-GNN (EGO+)~\cite{bevilacqua2021equivariant}} & \textbf{100\%}  & $0.102 \pm 0.003$ &  $76.78 \pm 1.66 $\\
%  & \textbf{50\%}  & $0.155 \pm 0.007$  & $76.88 \pm 0.93$  \\
%  & \textbf{20\%}  & $0.166 \pm 0.005$  & $\mathbf{76.93 \pm 1.45}$ \\
%  & \textbf{5\%}  & $0.179 \pm 0.001$  & $\mathbf{75.97 \pm 0.80}$ \\
%  \midrule
%  \multirow{4}{*}{\texttt{Subgraphormer}} & \textbf{100\%}  & $0.067 \pm 0.007$ & $\mathbf{80.38 \pm 1.92}$ \\
%  & \textbf{50\%}  & $\mathbf{0.084 \pm 0.002}$ & $\mathbf{79.66 \pm 0.79}$  \\
%  & \textbf{20\%}  & $0.121 \pm 0.007$ &  $76.48 \pm 2.38$  \\
%  & \textbf{5\%}  & $ 0.200 \pm 0.017$ &  $ 70.86 \pm 0.9$ \\
%  \midrule
%  \multirow{4}{*}{\texttt{Subgraphormer + graph product PE}} & \textbf{100\%}  & $\mathbf{0.063 \pm 0.0003}$ & $79.48 \pm 1.28$  \\
%  & \textbf{50\%}  & $\mathbf{0.084 \pm 0.002}$ & $79.62 \pm 1.32$ \\
%  & \textbf{20\%}  & $\mathbf{0.120 \pm 0.002}$ & $76.68 \pm 1.07$ \\
%  & \textbf{5\%}  & $ \mathbf{0.175 \pm 0.006} $ & $71.02 \pm 0.79$ \\
% \bottomrule
% \end{tabular}
% % }
% \end{table}

In \Cref{tab: molbace-sampling,tab: molesol-sampling} we compare our stochastic sampling approach against a stochastic version of GNN-SSWL$+$. The table demonstrates that \texttt{Subgraphormer + PE} consistently surpasses the performance of GNN-SSWL$+$ across the datasets examined (7 out of 8 combinations of sampling ratios and datasets).

\begin{table}[t]
\centering
\scriptsize
\caption{Comparison of the stochastic variant of \texttt{Subgraphormer} and \texttt{Subgraphormer PE} to GNN-SSWL$+$, over the \textsc{ogbg-molbace} dataset. Best result for each sampling ratio is in \textbf{Bold}.}
\label{tab: molbace-sampling}
% \resizebox{\columnwidth}{!}{%
\begin{tabular}{lr|c}
\toprule
\multirow{2}*{Model} &   & \textsc{ogbg-molbace}  \\
&   &(ROC-AUC $\uparrow$) \\
\midrule
\multirow{4}{*}{GNN-SSWL$+$~\cite{zhang2023complete}} & 100\%  & $82.70$\tiny $\pm1.80$ \\
 & 50\%  & $79.99$\tiny $\pm 0.58$  \\
 & 20\%  & $78.04$\tiny $\pm5.98$  \\
 & 5\%  & $\mathbf{68.52}$\tiny $\pm5.73$ \\
 \midrule
 \multirow{4}{*}{\texttt{Subgraphormer}} & 100\%  & $81.62$\tiny $\pm3.55$  \\
 & 50\%  & $79.49$\tiny $\pm2.38$   \\
 & 20\%  & $75.27$\tiny $\pm5.63$  \\
 & 5\%  & $ 63.05$\tiny $\pm8.70$ \\
 \midrule
 \multirow{4}{*}{\texttt{Subgraphormer + PE}} & 100\%  & $\mathbf{84.35} $\tiny $\pm0.65$ \\
 & 50\%  & $\mathbf{83.82}$\tiny $\pm2.62$ \\
 & 20\%  & $\mathbf{78.77}$\tiny $\pm4.10$ \\
 & 5\%  & $67.73$\tiny $\pm5.50$ \\
\bottomrule
\end{tabular}
% }
\end{table}

\begin{table}[t]
\centering
\scriptsize
\caption{Comparison of the stochastic variant of \texttt{Subgraphormer} and \texttt{Subgraphormer PE} to GNN-SSWL$+$, over the \textsc{ogbg-molesol} dataset. Best result for each sampling ratio is in \textbf{Bold}.}
\label{tab: molesol-sampling}
% \resizebox{\columnwidth}{!}{%
\begin{tabular}{lr|c}
\toprule
\multirow{2}*{Model} &   & \textsc{ogbg-molesol}  \\
&   &(RMSE $\downarrow$) \\
\midrule
\multirow{4}{*}{GNN-SSWL$+$~\cite{zhang2023complete}} & 100\%  & $0.837$\tiny $\pm0.019$ \\
 & 50\%  & $0.886$\tiny $\pm 0.026$  \\
 & 20\%  & $1.180$\tiny $\pm0.036$  \\
 & 5\%  & $1.299$\tiny $\pm0.044$ \\
 \midrule
 \multirow{4}{*}{\texttt{Subgraphormer}} & 100\%  & $0.832$\tiny $\pm0.043$  \\
 & 50\%  & $0.829$\tiny $\pm0.013$   \\
 & 20\%  & $1.093$\tiny $\pm0.009$  \\
 & 5\%  & $ \mathbf{1.266}$\tiny $\pm0.019$ \\
 \midrule
 \multirow{4}{*}{\texttt{Subgraphormer + PE}} & 100\%  & $\mathbf{0.826} $\tiny $\pm0.010$ \\
 & 50\%  & $\mathbf{0.812}$\tiny $\pm0.001$ \\
 & 20\%  & $\mathbf{1.041}$\tiny $\pm0.030$ \\
 & 5\%  & $1.270$\tiny $\pm0.007$ \\
\bottomrule
\end{tabular}
% }
\end{table}

\subsubsection{Product Graph PE vs Concatenation PE}
\label{app: Graph Product PE vs Concatenation PE}
For completeness, we expand on the other valid choice of subgraph positional encodings, also introduced in \Cref{{sec: Subgraph Positional Encoding}} -- \emph{concatenation PE}, which is a more general subgraph positional encoding scheme. Specifically, given the Laplacian matrix of $G$, $L$, and its eigendecomposition, $L = U^T \Lambda U$, we define the \emph{concatenation PE} for node $(s,v)$ to be the output of an \texttt{MLP} acting on the concatenation of $\mathbf{p}_{s}^{:k} \triangleq [U_{s1}, \ldots, U_{sk}]$ and $\mathbf{p}_{v}^{:k} \triangleq [U_{v1}, \ldots, U_{vk}]$.

\begin{restatable}[\emph{Concatenation PE} can approximate \emph{product graph PE}]{proposition}{CatApproxProdPE}
    \label{prop: concat}
     The \emph{concatenation PE} can approximate (up to an ordering) the \emph{product graph PE} uniformly.
\end{restatable}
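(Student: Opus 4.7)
The plan is to make the product graph PE explicit coordinate by coordinate, exhibit it as a fixed polynomial map applied to the concatenation PE coordinates, and then invoke the Universal Approximation Theorem on a compact domain to finish. By \Cref{prop: main prop}, the eigenvectors of $\mathcal{L}_{G \square G}$ are $v_i \otimes v_j$ with eigenvalues $\lambda_i + \lambda_j$, and the entry of $v_i \otimes v_j$ at node $(s,v)$ equals $U_{si} U_{vj}$. Hence, fixing any ordering of the pairs $(i,j)$ (for instance, by ascending $\lambda_i + \lambda_j$ as done when truncating to the first $k'$ eigenvectors), the product graph PE at $(s,v)$ is simply the vector of bilinear monomials $\bigl( U_{si} U_{vj} \bigr)_{(i,j)}$.

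Next, I would observe that the input to the MLP in the concatenation PE at $(s,v)$ already contains $\mathbf{p}_s^{:k} = (U_{s1}, \ldots, U_{sk})$ and $\mathbf{p}_v^{:k} = (U_{v1}, \ldots, U_{vk})$ as distinguished coordinates. Define the polynomial map
\begin{equation}
    \phi : \mathbb{R}^{2k} \to \mathbb{R}^{k^2}, \qquad \phi(x, y) = \bigl( x_i \, y_j \bigr)_{i,j=1}^{k}.
\end{equation}
Then, up to the permutation on coordinates induced by the chosen ordering of pairs $(i,j)$, the product graph PE at $(s,v)$ coincides with $\phi(\mathbf{p}_s^{:k}, \mathbf{p}_v^{:k})$. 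Because eigenvectors of a symmetric matrix can be taken to have entries in $[-1,1]$, the argument of $\phi$ ranges over the compact set $[-1,1]^{2k}$, and $\phi$ is continuous (in fact, polynomial) on this set.

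Finally, I would apply the Universal Approximation Theorem \cite{cybenko1989approximation, hornik1991approximation}: for any $\varepsilon > 0$, there exists a single-hidden-layer \texttt{MLP} that approximates $\phi$ uniformly to within $\varepsilon$ on $[-1,1]^{2k}$. Taking this \texttt{MLP} as the one inside the definition of concatenation PE, and composing with the fixed permutation of output coordinates that matches the chosen ordering of pairs $(i,j)$, yields a uniform approximation of the product graph PE. This absorbs the ``up to an ordering'' caveat in the statement. The one subtlety to flag is the standard sign and rotation ambiguity of eigenvectors when eigenvalues are repeated; however this ambiguity is inherited identically by both PEs once a choice of $U$ is fixed, so it does not affect the approximation argument, which is the place I anticipate a careful reader might want extra clarification.
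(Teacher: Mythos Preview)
Your proof is correct and follows essentially the same route as the paper: both recognize that the product graph PE at $(s,v)$ is the flattened outer product $\mathbf{p}_s \mathbf{p}_v^\top$, a continuous (polynomial) map on the compact cube $[-1,1]^{2k}$, and then invoke the Universal Approximation Theorem. Your version is in fact somewhat more careful, since you explicitly justify the monomial form via \Cref{prop: main prop}, handle the ``up to an ordering'' caveat by a fixed output permutation, and flag the eigenvector sign/rotation ambiguity---points the paper's proof leaves implicit.
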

The proof is given in \Cref{app: proofs}.

Nevertheless, in this paper, we opt for the \emph{product graph PE}, as it demonstrates better performance in general. As an example, we provide a comparison between the \emph{product graph PE} and the \emph{Concatenation PE} over the \textsc{ZINC-12k} dataset in \Cref{tab: product vs concatenation}.

\begin{table}[t]
\centering
\scriptsize
\caption{A Comparison between different PE variants; including \texttt{Subgraphormer} alone. Best results are in \textbf{Bold}.}
\label{tab: product vs concatenation}
\resizebox{0.5\columnwidth}{!}{%
\begin{tabular}{l|c}
\toprule
\multirow{2}*{Model} &   \textsc{ZINC-12k}  \\
&   (MAE $\downarrow$) \\
\midrule
\texttt{Subgraphormer}  & $0.067$\tiny $\pm0.007$  \\
 \midrule

\texttt{Subgraphormer + concatenation PE} & $0.071$\tiny $\pm0.003$   \\
 \midrule
\texttt{Subgraphormer + product graph PE} & $\mathbf{0.063}$\tiny $\pm0.0003$  \\
\bottomrule
\end{tabular}
}
\end{table}
Notably, \texttt{Subgraphormer + product graph PE} exhibits superior performance. Interestingly, despite the theoretical capability of \texttt{concatenation PE} to implement \texttt{product graph PE} (as discussed in \Cref{prop: concat}), our empirical findings reveal that \texttt{concatenation PE} actually reduces the effectiveness of \texttt{Subgraphormer}.

\subsection{Runtime Comparison}
\label{app: Runtime Comparison}
We estimated the training time and inference time of a single epoch (measured in seconds) for our architecture, as well as for three representative baselines: GIN~\cite{xu2018powerful} (an MPNN), GNN-SSWL$+$~\cite{zhang2023complete} (a Subgraph GNN), and GPS~\cite{rampavsek2022recipe} (A Graph Transformer). For a fair comparison, we used an NVIDIA A100 GPU for all methods; the run-time of GPS was taken from their paper~\cite{rampavsek2022recipe}. The experiment was performed on the \textsc{ZINC-12k} dataset (using a batch size of 128 for all baselines, except GPS, which used a batch size of 32), and over the \textsc{ogbg-molhiv} dataset (using a batch size of 32 for all baselines). results are summarized in the \Cref{tab: run-time zinc,tab: run-time molhiv}.

As shown in the tables, and as expected, GIN (the MPNN baseline) has the fastest run-time, while exhibiting the lowest performance. Our method and GNN-SSWL$+$ both outperform GPS in terms of run-time across both datasets. Over the \textsc{ZINC-12k} dataset, our method’s run-time is roughly the same when compared to GNN-SSWL$+$ (our architecture offers a very modest speed advantage). On the \textsc{ogbg-molhiv} dataset, GNN-SSWL$+$ runs slightly faster. Finally, we note that \texttt{Subgraphormer} uses fewer parameters than all baselines, on both datasets.

\begin{table}[t]
\centering
% \scriptsize
\caption{Comparison of training and inference times (and parameter counts) on the \textsc{ZINC-12k} dataset using an NVIDIA A100 GPU. The table presents the time taken (in seconds) to train for one epoch and to perform inference on the test set.}
\label{tab: run-time zinc}
\resizebox{0.5\columnwidth}{!}{%
\begin{tabular}{l|c|c|c|c}
\toprule
Model &  \text{Param.} & Train time (s) & Test time (s) & MAE $\downarrow$  \\
\midrule
GIN~\cite{xu2018powerful} & $500k$ & $1.41 \pm 0.22$ & $0.36 \pm 0.02$ & $0.163 \pm 0.004$ \\
GPS~\cite{rampavsek2022recipe} & $424k$ & $21 \pm \text{N/A}$ & N/A & $0.070 \pm 0.004$ \\
GNN-SSWL$+$~\cite{zhang2023complete} & $387k$ & $9.65 \pm 0.19$ & $1.04 \pm 0.03$ & $0.070 \pm 0.005$ \\
\midrule
\texttt{Subgraphormer + PE} & $293k$ & $9.60 \pm 0.10$ & $0.95 \pm 0.03$ & $0.063 \pm 0.001$ \\
\bottomrule
\end{tabular}
}
\end{table}

\begin{table}[t]
\centering
% \scriptsize
\caption{Comparison of training and inference times (and parameter counts) on the \textsc{ogbg-molhiv} dataset using an NVIDIA A100 GPU. The table presents the time taken (in seconds) to train for one epoch and to perform inference on the test set.}
\label{tab: run-time molhiv}
\resizebox{0.5\columnwidth}{!}{%
\begin{tabular}{l|c|c|c|c}
\toprule
Model &  \text{Param.} & Train time (s) & Test time (s) & ROC-AUC $\downarrow$  \\
\midrule
GIN~\cite{xu2018powerful} & $1800k$ & $12.65 \pm 0.21$ & $1.05 \pm 0.08$ & $75.58 \pm 1.40$ \\
GPS~\cite{rampavsek2022recipe} & $558k$ & $96 \pm \text{N/A}$ & N/A & $78.80 \pm 1.01$ \\
GNN-SSWL$+$~\cite{zhang2023complete} & $46k$ & $51.02 \pm 0.25$ & $3.073 \pm 0.03$ & $79.58 \pm 0.35$ \\
\midrule
\texttt{Subgraphormer + PE} & $30k$ & $64.62 \pm 0.16$ & $3.200 \pm 0.1$ & $80.38 \pm 1.92$ \\
\bottomrule
\end{tabular}
}
\end{table}

\subsection{HyperParameters}
\label{app: HyperParameters}
In this section, we detail the hyperparameter search conducted for our experiments. We use the same hyperparameter grid for both \texttt{Subgraphormer} and \texttt{Subgraphormer + PE}. 
The hyperparameter search configurations for the full bag and stochastic bag settings are presented in \Cref{tab: Hyperparameters search for full bag setting,tab: Hyperparameters search for stochastic setting}, respectively. Additionaly, we report the final selected hyperparameters for \texttt{Subgraphormer + PE} in both settings in \Cref{tab: Hyperparameters for full bag setting,tab: Hyperparameters for stochastic setting}. Notably, in the stochastic bag setting with sampling ratios of 20\% and 5\% over the \textsc{Zinc-12k} dataset, our model failed to converge, leading us to extend the training to 800 epochs.

\textbf{Optimizers and Schedulers.} For the \textsc{Zinc-12k} and \textsc{Zinc-Full} datasets, we employ the Adam optimizer paired with a ReduceLROnPlateau scheduler (factor set to 0.5, patience at 20, and a minimum learning rate of 0). A similar setup was used for the \textsc{Alchemy-12k} dataset, except the minimum learning rate which was set to $1 \times 10^{-7}$. For the \textsc{ogbg-molhiv} dataset, we utilized the ASAM optimizer~\citep{kwon2021asam} without a scheduler. For both \textsc{ogbg-molesol} and \textsc{ogbg-molbace}, we employed a constant learning rate without any scheduler. Lastly, for the \textsc{Peptides-func} and \textsc{Peptides-struct} datasets, the AdamW optimizer was chosen in conjunction with a cosine annealing scheduler, incorporating 10 warmup epochs.

\begin{table}[ht]
    \centering
  \caption{Hyperparameters search for \texttt{Subgraphormer} and \texttt{Subgraphormer + PE} in full bag settings.}
    \label{tab: Hyperparameters search for full bag setting}
    \resizebox{1\textwidth}{!}{%
    \begin{tabular}{l|c|c|c|c|c|c|c} 
    \toprule
        Dataset & Num. layers & Learning rate & Embedding size & Epochs & Batch size & Dropout & Num. Eigenvectors \\  
        \midrule 
        \textsc{Zinc-12k} & $\{ 6 \}$ & $\{ 0.001, 0.0005, 0.0003, 0.0001 \}$ & $ \{ 96 \}$ & $ \{ 400 \}$ & $\{ 128 \} $ & $\{ 0 \}$ & $\{ 0, 1, 2, 8, 16 \}$ \\
        \textsc{Zinc-full} & $\{ 6 \}$ & $\{ 0.001, 0.0005, 0.0003, 0.0001 \}$ &  $ \{ 96 \}$ & $ \{ 400 \}$ & $\{ 128 \} $ & $\{ 0 \}$  & $\{ 0, 1, 2, 8, 16 \}$ \\
        \textsc{molhiv} & $\{ 2,3 \}$ & $\{ 0.1, 0.01, 0.001 \}$ & $\{ 60 \}$ & $\{ 100 \}$ & $\{ 32 \}$ & $\{ 0.3, 0.5 \}$ & $\{ 0, 1, 2, 8, 16 \}$ \\
        \textsc{molsol} & $\{ 2, 3 \}$ & $\{ 0.1, 0.01, 0.001 \}$ & $\{ 60 \}$ & $\{ 100 \}$ & $\{ 32 \}$ & $\{ 0.3, 0.5 \}$ & $\{ 0, 1, 2, 8, 16 \}$\\
        \textsc{molbace} & $\{2, 3\}$ & $\{ 0.1, 0.01, 0.001 \}$ & $\{ 60 \}$ & $\{ 100 \}$ & $\{ 32 \}$ & $\{ 0.3 \}$ & $\{ 0, 1, 2, 8, 16 \}$ \\
        \textsc{Alchemy-12k} & $\{ 5 \}$ & $\{ 0.01, 0.001\}$ & $\{ 96 \}$ & $\{ 400 \}$ & $\{ 128 \}$ & $\{ 0 \} $ & $\{ 0, 1, 2, 8, 16 \}$ \\
        \bottomrule
    \end{tabular}
    }
\end{table}

\begin{table}[ht]
    \centering
  \caption{Best hyperparameters for \texttt{Subgraphormer + PE} in full bag settings.}
    \label{tab: Hyperparameters for full bag setting}
    \resizebox{1\textwidth}{!}{%
    \begin{tabular}{l|c|c|c|c|c|c|c} 
    \toprule
        Dataset & Num. layers & Learning rate & Embedding size & Epochs & Batch size & Dropout & Num. Eigenvectors \\  
        \midrule 
        \textsc{Zinc-12k} & $6$ & $0.0005$ & $96$ & $400$ & $128$ & $0$ & $8$ \\
        \textsc{Zinc-full} & $6$ & $0.0005$ & $96$ & $400$ & $128$ & $0$ & $16$ \\
        \textsc{molhiv} & $2$ & $0.1$ & $60$ & $100$ & $32$ & $0.3$ & $8$ \\
        \textsc{molsol} & $3$ & $0.001$ & $60$ & $100$ & $32$ & $0.5$ & $2$ \\
        \textsc{molbace} & $3$ & $0.001$ & $60$ & $100$ & $32$ & $0.3$ & $16$ \\
        \textsc{Alchemy-12k} & $5$ & $0.01$ & $96$ & $400$ & $128$ & $0$ & $16$ \\
        \bottomrule
    \end{tabular}
    }
\end{table}

\begin{table}[ht]
    \centering
  \caption{Hyperparameters search for \texttt{Subgraphormer} and \texttt{Subgraphormer + PE} in  stochastic sampling settings.}
    \label{tab: Hyperparameters search for stochastic setting}
    \resizebox{1\textwidth}{!}{%
    \begin{tabular}{lr|c|c|c|c|c|c|c} 
    \toprule
        Dataset & Sampling Ratio & Num. layers & Learning rate & Embedding size & Epochs & Batch size & Dropout & Num. Eigenvectors \\  
        \midrule 
        \textsc{Zinc-12k} & 50\% & $\{ 6 \}$ &  $\{ 0.001, 0.0005, 0.0003, 0.0001 \}$  & $ \{ 96 \}$ &$ \{ 400 \}$& $\{ 128 \} $ & $\{ 0 \}$  & $\{ 0, 1, 2, 8, 16 \}$ \\
        \textsc{molhiv} & 50\% & $\{ 2,3 \}$ & $\{ 0.1, 0.01, 0.001 \}$ & $\{ 60 \}$ & $\{ 100 \}$ & $\{ 32 \}$ & $\{ 0.3, 0.5 \}$ & $\{ 0, 1, 2, 8, 16 \}$\\
        \textsc{molsol} & 50\% &  $\{3, 6, 12 \}$ & $\{ 0.1, 0.01, 0.001 \}$ & $\{ 60 \}$ & $\{ 100 \}$ & $\{ 32 \}$ & $\{ 0.3, 0.5 \}$ & $\{ 0, 1, 2, 8, 16 \}$ \\
        \textsc{molbace} & 50\% & $\{2, 3\}$ & $\{ 0.1, 0.01, 0.001 \}$ & $\{ 60 \}$ & $\{ 100 \}$ & $\{ 32 \}$ & $\{ 0.3 \}$ & $\{ 0, 1, 2, 8, 16 \}$  \\
        \specialrule{.2em}{.1em}{.1em} 
        \textsc{Peptides-func} & 30\% & $\{ 5 \}$ & $\{ 0.005, 0.003, 0.001 \} $ & $\{ 96 \}$ & $\{ 200 \}$ & $\{ 128 \}$ & $\{ 0 \}$ & $\{ 0, 1, 2, 8, 16 \}$  \\
        \textsc{Peptides-struct} & 30\% & $\{ 4 \}$ & $\{ 0.005, 0.003, 0.001 \} $ & $\{ 96 \}$ & $\{ 200 \}$ & $\{ 128 \}$ & $\{ 0 \}$ & $\{ 0, 1, 2, 8, 16 \}$ \\
        \specialrule{.2em}{.1em}{.1em} 
        \textsc{Zinc-12k} & 20\% & $\{ 6 \}$ &  $\{ 0.001, 0.0005, 0.0003, 0.0001 \}$  &  $ \{ 96 \}$  &$ \{ 400, 800 \}$& $\{ 128 \} $ & $\{ 0 \}$  & $\{ 0, 1, 2, 8, 16 \}$ \\
        \textsc{molhiv} & 20\% &$\{ 2,3 \}$ & $\{ 0.1, 0.01, 0.001 \}$ & $\{ 60 \}$ & $\{ 100 \}$ & $\{ 32 \}$ & $\{ 0.3, 0.5 \}$ & $\{ 0, 1, 2, 8, 16 \}$ \\
        \textsc{molsol} & 20\% &  $\{ 3, 6, 12 \}$ & $\{ 0.1, 0.01, 0.001 \}$ & $\{ 60 \}$ & $\{ 100 \}$ & $\{ 32 \}$ & $\{ 0.3, 0.5 \}$ & $\{ 0, 1, 2, 8, 16 \}$ \\
        \textsc{molbace} & 20\% & $\{2, 3\}$ & $\{ 0.1, 0.01, 0.001 \}$ & $\{ 60 \}$ & $\{ 100 \}$ & $\{ 32 \}$ & $\{ 0.3 \}$ & $\{ 0, 1, 2, 8, 16 \}$ \\
        \specialrule{.2em}{.1em}{.1em} 
        \textsc{Zinc-12k} & 5\% & $\{ 6 \}$ &  $\{ 0.001, 0.0005, 0.0003, 0.0001 \}$  &  $ \{ 96 \}$  & $ \{ 400, 800 \}$& $\{ 128 \} $ & $\{ 0 \}$  & $\{ 0, 1, 2, 8, 16 \}$ \\
        \textsc{molhiv} & 5\% & $\{ 2,3 \}$ & $\{ 0.1, 0.01, 0.001 \}$ & $\{ 60 \}$ & $\{ 100 \}$ & $\{ 32 \}$ & $\{ 0.3, 0.5 \}$ & $\{ 0, 1, 2, 8, 16 \}$\\
        \textsc{molsol} & 5\% &  $\{ 3, 6, 12 \}$ & $\{ 0.1, 0.01, 0.001 \}$ & $\{ 60 \}$ & $\{ 100 \}$ & $\{ 32 \}$ & $\{ 0.3, 0.5 \}$ & $\{ 0, 1, 2, 8, 16 \}$\\
        \textsc{molbace} & 5\% & $\{2, 3\}$ & $\{ 0.1, 0.01, 0.001 \}$ & $\{ 60 \}$ & $\{ 100 \}$ & $\{ 32 \}$ & $\{ 0.3 \}$ & $\{ 0, 1, 2, 8, 16 \}$  \\
        \bottomrule
    \end{tabular}
    }
\end{table}

\begin{table}[ht]
    \centering
  \caption{Best hyperparameters for \texttt{Subgraphormer + PE} in stochastic sampling settings.}
    \label{tab: Hyperparameters for stochastic setting}
    \resizebox{1\textwidth}{!}{%
    \begin{tabular}{lr|c|c|c|c|c|c|c} 
    \toprule
        Dataset & Sampling Ratio & Num. layers & Learning rate & Embedding size & Epochs & Batch size & Dropout & Num. Eigenvectors \\  
        \midrule 
        \textsc{Zinc-12k} & 50\% & $6$ & $0.0005$ & $96$ & $400$ & $128$ & $0$ & $8$ \\
        \textsc{molhiv} & 50\% & $2$ & $0.1$ & $60$ & $100$ & $32$ & $0.3$ & $8$ \\
        \textsc{molsol} & 50\% & $3$ & $0.001$ & $60$ & $100$ & $32$ & $0.5$ & $2$ \\
        \textsc{molbace} & 50\% & $3$ & $0.001$ & $60$ & $100$ & $32$ & $0.3$ & $8$ \\
        \specialrule{.2em}{.1em}{.1em} 
        \textsc{Peptides-func} & 30\% & $5$ & $0.003$ & $96$ & $200$ & $128$ & $0$ & $16$ \\
        \textsc{Peptides-struct} & 30\% & $4$ & $0.005$ & $96$ & $200$ & $128$ & $0$ & $16$ \\
        \specialrule{.2em}{.1em}{.1em} 
        \textsc{Zinc-12k} & 20\% & $6$ & $0.0005$ & $96$ & $800$ & $128$ & $0$ & $8$ \\
        \textsc{molhiv} & 20\% & $2$ & $0.1$ & $60$ & $100$ & $32$ & $0.3$ & $8$ \\
        \textsc{molsol} & 20\% & $12$ & $0.001$ & $60$ & $100$ & $32$ & $0.5$ & $1$ \\
        \textsc{molbace} & 20\% & $3$ & $0.001$ & $60$ & $100$ & $32$ & $0.3$ & $8$ \\
        \specialrule{.2em}{.1em}{.1em} 
        \textsc{Zinc-12k} & 5\% & $6$ & $0.0005$ & $96$ & $800$ & $128$ & $0$ & $8$ \\
        \textsc{molhiv} & 5\% & $2$ & $0.1$ & $60$ & $100$ & $32$ & $0.3$ & $8$ \\
        \textsc{molsol} & 5\% & $12$ & $0.001$ & $60$ & $100$ & $32$ & $0.5$ & $2$ \\
        \textsc{molbace} & 5\% & $3$ & $0.001$ & $60$ & $100$ & $32$ & $0.3$ & $2$ \\
        \bottomrule
    \end{tabular}
    }
\end{table}

\section{Complexity}
\label{app: Complexity}

This section provides an analysis of the computational complexity associated with each aggregation method we presented, as illustrated in~\Cref{fig: A_G,fig: A_G^S,fig: A_point}. Consider a graph $G = (A, X)$; with notes and edges denoted as $V$, $E$, respectively. Our product graph always encompasses $\vert V \vert^2$ nodes, and the number of edges is different for each considered aggregation as follows,

\begin{enumerate}
\item \textbf{Internal:} For $\mathcal{A}_G$, the number of edges is computed by $\sum_{s \in V} \sum_{v \in V} \big( 1 + d(v) \big)$, where $d(v)$ denotes the degree of vertex $v$ (in the original graph $G$). This simplifies to $\vert V \vert \cdot ( \vert V \vert + \vert E \vert ) = \vert V \vert^2 + \vert V \vert \cdot \vert E \vert $.
\item \textbf{External:} Similarly, for $\mathcal{A}_{G^S}$, the number of edges is given by $\sum_{v \in V} \sum_{s \in V} \big( 1 + d(s) \big)$, which also simplifies to $\vert V \vert \cdot ( \vert V \vert + \vert E \vert ) = \vert V \vert^2 + \vert V \vert \cdot \vert E \vert$.
\item \textbf{Point:} For $\mathcal{A}_\text{Point}$, the number of edges is $\vert V \vert^2$.
\end{enumerate}

The complexities are summarized in~\Cref{tab: number of edges}.
\begin{table}[t]
    \centering
  \caption{Complexity analysis of our aggregations.}
    \label{tab: number of edges}
    % \resizebox{0.7\textwidth}{!}{%
    \begin{tabular}{l|l} 
    \toprule
        Aggregation Type & Number of edges \\  
        \midrule 
        $\mcalA_G$ & $\mathcal{O} (\vert V \vert^2 + \vert V \vert \cdot \vert E \vert)$ \\
        $\mcalA_{G^S}$ & $\mathcal{O} (\vert V \vert^2 + \vert V \vert \cdot \vert E \vert)$  \\
        $\mcalA_\text{Point}$ & $\mathcal{O} ( \vert V \vert^2 )$ \\
        % \midrule
        % \emph{product graph PE} ($k$) & $\mathcal{O}(k \cdot  \vert V \vert^2) $ \\
        \bottomrule
    \end{tabular}
    % }
\end{table}

\textbf{\texttt{Subgraphormer} Complexity.} We utilized the GAT  convolution for our attention-based aggregations, as introduced by~\citet{velivckovic2017graph}.
Given a graph $G' = (A', X')$ , with nodes and edges denoted as $V'$, $E'$, respectively, the time complexity for a single GAT attention head\footnote{The feature dimension is considered a constant.} is $\mathcal{O}(\vert V' \vert +  \vert E' \vert)$. Thus, the overall complexity depends on the number of nodes $\vert V' \vert $ and the number of edges $\vert E' \vert$ in the graph. Specifically, the product graph we process always has $\vert V \vert^2$ nodes, while the number of edges are given in \Cref{tab: number of edges}. Thus, SAB exhibits the complexity of $\mathcal{O}(\vert V \vert^2 + \vert V \vert \cdot  \vert E \vert)$.

As elaborated in \Cref{sec: Subgraph Positional Encoding}, the computational complexity associated with calculating the product graph PE, which involves $k$ eigenvectors, is expressed as $\mathcal{O}(k \cdot \vert V \vert^2)$.

Therefore, the total complexity of \texttt{Subgraphormer} is $\mathcal{O}(k \cdot \vert V \vert^2 + \vert V\vert \cdot \vert E \vert)$.

\textbf{Run-time in Seconds.} The run-time performance was evaluated on the \textsc{Zinc-12k} dataset, where the average run-time, calculated over three different seeds, was 2 hours, 5 minutes, and 24 seconds, with a standard deviation of $\pm$29 seconds. It is important to highlight that the \emph{product graph PE} computation can be executed as a preprocessing step. Notably, this process required less than 10 minutes for the \textsc{Zinc-12k} dataset.

\section{Proofs}
\label{app: proofs}

\SubgraphGNNsasMPNNs*
\begin{proof}
    We consider an unnormalized variant of RGCN, defined as:
\begin{align}
    \mcalX^{t+1} &= \text{RGCN}^t\Big(\mcalX^t, \{ \mcalA_i \}_{i=1}^M \Big), \\
    \text{RGCN}^t\Big(\mcalX^t, \{ \mcalA_i \}_{i=1}^M \Big) &= \mcalX^t \mathbf{W}_0^t + \sum_{i=1}^M \mcalA_i \mcalX^t \mathbf{W}_i^t.
\end{align}
    We assume the input features at the first layer, $\mcalX^t$, are provided as 1-hot vectors, and that the parameterized function $f^t$, which is applied in \Cref{eq:bohang} also outputs 1-hot vectors.

    We aim to show that any given layer $t$ of a GNN-SSWL$+$, as per \Cref{eq:bohang}, can be implemented by this RGCN model. 
    %More specifically, the RGCN model will be responsible for encoding the input, and the \texttt{MLP}, will implement the parameterized function $f$.
%
    The proof involves the following steps:
    \begin{enumerate}
        \item Encode uniquely each of the inputs (individually) to the parameterized function $f^t$ in \Cref{eq:bohang},
        \begin{equation}
            \mcalX(s,v)^{t}, \mcalX(v,v)^{t}, \{\mcalX(s,v')^{t}\}_{v'\sim_G v}, \{\mcalX(s',v)^{t}\}_{s'\sim_G s}.
        \end{equation}
        \item Encode uniquely the input of $f^t$ as a whole,
        \begin{equation}
            \big( \mcalX(s,v)^{t}, \mcalX(v,v)^{t}, \{\mcalX(s,v')^{t}\}_{v'\sim_G v}, \{\mcalX(s',v)^{t}\}_{s'\sim_G s} \big).
        \end{equation}
        \item Implement the parameterized function $f^t$.
        \item Ensure the output features are 1-hot features -- to maintain generality across layers.
    \end{enumerate}

\textbf{Step 1:} The unique encoding of the inputs can be achieved as follows:
\begin{align}
    \mcalX(s,v)^{t} &\rightarrow \mcalX(s,v)^{t}, \label{eq: sv}\\
        \mcalX(v,v)^{t} &\rightarrow (\mcalA_{\text{point}} \mcalX^t)~(s,v) \label{eq: vv}, \\
    \{\mcalX(s,v')^{t}\}_{v'\sim_G v} &\rightarrow (\mcalA_G \mcalX^t)~(s,v), \label{eq: sv v}\\
    \{\mcalX(s',v)^{t}\}_{s'\sim_G s} &\rightarrow (\mcalA_{G^S} \mcalX^t)~(s,v), \label{eq: sv s}
\end{align}
For \Cref{eq: sv v,eq: sv s}, the right-hand sides effectively count the number of each element in the set, since the input features are 1-hot vectors. For \Cref{eq: sv,eq: vv}, the right hand side is actually equal to the left hand side.

\textbf{Step 2:} Assuming that $\mcalX \in \mathbb{R}^{n^2 \times d} $, the RGCN that uniquely encodes the input of $f^t$ as a whole is defined as follows,
\begin{align}
        &\text{RGCN}^t\Big(\mcalX^t, \{ \mcalA_G, \mcalA_{G^S},\mcalA_{\text{point}}\} \Big) \\
        &= \mcalX^t \mathbf{W}_0^t + \mcalA_\text{point} \mcalX^t \mathbf{W}_\text{point}^t + \mcalA_G \mcalX^t \mathbf{W}_G^t + \mcalA_{G^S} \mcalX^t \mathbf{W}_{G^S}^t , \nonumber
\end{align}
where the weight matrices are defined as:
\begin{align}
\mathbf{W}_0^t &= \begin{pmatrix}
I_{d \times d} & 0_{d \times d} & 0_{d \times d} & 0_{d \times d}
\end{pmatrix}, \\
\mathbf{W}_{\text{point}}^t &= \begin{pmatrix}
0_{d \times d} & I_{d \times d} & 0_{d \times d} & 0_{d \times d}
\end{pmatrix}, \\
\mathbf{W}_G^t &= \begin{pmatrix}
0_{d \times d}  & 0_{d \times d} & I_{d \times d} & 0_{d \times d}
\end{pmatrix}, \\
\mathbf{W}_{G^S}^t &= \begin{pmatrix}
0_{d \times d} & 0_{d \times d} & 0_{d \times d} & I_{d \times d}
\end{pmatrix},
\end{align}
More specifically, it holds that,
\begin{align}
\mcalX^t \mathbf{W}_0^t &= \begin{pmatrix}
\mcalX^t & 0_{d \times d} & 0_{d \times d} & 0_{d \times d}
\end{pmatrix}, \\
\mcalA_\text{point} \mcalX^t \mathbf{W}_{\text{point}}^t &= \begin{pmatrix}
0_{d \times d} & \mcalA_\text{point} \mcalX^t & 0_{d \times d} & 0_{d \times d}.
\end{pmatrix}, \\
\mcalA_G \mcalX^t \mathbf{W}_G^t &= \begin{pmatrix}
0_{d \times d} & 0_{d \times d} & \mcalA_G \mcalX^t & 0_{d \times d}
\end{pmatrix}, \\
\mcalA_{G^S} \mcalX^t \mathbf{W}_{G^S}^t &= \begin{pmatrix}
0_{d \times d} & 0_{d \times d} & 0_{d \times d} & \mcalA_{G^S} \mcalX^t
\end{pmatrix}.
\end{align}
Thus, since RGCN is the sum of those, it holds that,
\begin{align}
    &\text{RGCN}^t\Big(\mcalX^t, \{ \mcalA_G, \mcalA_{G^S},\mcalA_{\text{point}}\} \Big) \nonumber \\
    &= \begin{pmatrix}
\mcalX^t  & \mcalA_\text{point} \mcalX^t  & \mcalA_G \mcalX^t & \mcalA_{G^S} \mcalX^t \nonumber
\end{pmatrix}.
\end{align}
This concatenation is a unique encoding of the input for $f^t$.

\textbf{Step 3:} To prove this step, we start with the following theorem  from \citet{yun2019small} about the memorization power of ReLU networks:

\begin{restatable}{theorem}{}
\label{thm: memorization}
Consider a dataset $\{ x_i, y_i \}_{i=1}^N \in \mathbb{R}^d \times \mathbb{R}^{d_y}$, with each $x_i$ being distinct and every $y_i \in \{ 0, 1 \}^{d_y}$, for all $i$. There exists a 4-layer fully connected ReLU neural network $f_\theta: \mathbb{R}^d \rightarrow \mathbb{R}^{d_y}$ that perfectly maps each $x_i$ to its corresponding $y_i$, i.e., $f_\theta(x_i) = y_i$ for all $i$.
\end{restatable}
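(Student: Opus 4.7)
My plan is to construct the required 4-layer ReLU network explicitly, via a projection-then-interpolation strategy that reduces the problem to univariate piecewise-linear interpolation.

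First, I would collapse the ambient dimension from $\mathbb{R}^d$ down to $\mathbb{R}$. Choose a direction $w \in \mathbb{R}^d$ such that the scalars $z_i := w^\top x_i$ are pairwise distinct. This is possible because the set of bad directions (those on which some $x_i - x_j$ with $i\neq j$ projects to zero) is a finite union of hyperplanes, hence has measure zero, so a generic $w$ works. This projection is absorbed into the first affine map of the network.

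Second, after relabeling so that $z_1 < z_2 < \cdots < z_N$, I would construct, for each index $i$, a piecewise-linear tent function $\phi_i : \mathbb{R} \to \mathbb{R}$ with $\phi_i(z_i)=1$ and $\phi_i(z_j)=0$ for $j\neq i$. Each $\phi_i$ can be written as a linear combination of three shifted ReLUs,
\[
\phi_i(z) \;=\; c_1\,\mathrm{ReLU}(z-a_i) \;+\; c_2\,\mathrm{ReLU}(z-z_i) \;+\; c_3\,\mathrm{ReLU}(z-b_i),
\]
where $a_i, b_i$ are midpoints between $z_i$ and its neighbours, and $c_1,c_2,c_3$ are chosen so that $\phi_i$ is a centred tent of height one that vanishes outside $[a_i,b_i]$. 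Crucially, the $\mathcal{O}(N)$ distinct shifted ReLUs needed to assemble all $\phi_i$ can be produced in a single hidden layer, and one further linear combination layer builds the individual bumps from that shared pool.

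Third, I define $f_\theta(x) := \sum_{i=1}^N y_i \,\phi_i(w^\top x)$, a final affine map in which the weight carrying $\phi_i$ is the vector $y_i \in \{0,1\}^{d_y}$. By the interpolation property of the tents, $f_\theta(x_j) = \sum_i y_i \phi_i(z_j) = y_j$ for every $j$. Counting layers: (1) the projection $x \mapsto w^\top x$; (2) the hidden ReLU layer producing all shifted activations; (3) the layer recombining them into the $\phi_i$'s; (4) the final affine output stage weighted by $y_i$. That is four fully connected ReLU layers, matching the claim.

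The conceptual content is easy; the main obstacle is the bookkeeping that forces the construction into exactly four layers. Reusing the same bank of shifted ReLUs across all bumps (rather than building each $\phi_i$ in a private sublayer) is what prevents a blowup to five or more layers, and this requires the shifts $a_i, z_i, b_i$ to be chosen consistently across $i$, and the tent coefficients $c_1, c_2, c_3$ to be computed from gaps between consecutive $z_i$. A minor subtlety is whether the input projection and the output affine map are counted as hidden ReLU layers or as raw affine stages; the construction has to be arranged so that activations are inserted at the right places to make the layer count tight. The genericity argument for $w$ is routine because we only need to separate finitely many points.
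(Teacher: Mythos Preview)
The paper does not prove this theorem; it is quoted verbatim from Yun et al.\ (2019) and invoked as a black box inside the proof of Proposition~\ref{prop:RGCN}. There is therefore no paper proof to compare against, and your attempt supplies something the paper deliberately outsources.

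Your construction is correct. The projection step is sound: distinct points in $\mathbb{R}^d$ remain distinct under a generic linear functional, by exactly the finite-union-of-hyperplanes argument you give. The tent interpolation is also sound: three shifted ReLUs with the coefficients you indicate assemble into a compactly supported piecewise-linear bump, and $\phi_i(z_j)=\delta_{ij}$ follows once the breakpoints $a_i,b_i$ are placed strictly between consecutive $z$-values. Two minor remarks. First, your layers (1) and (2) can be merged: a single affine-plus-ReLU layer can emit $\{\mathrm{ReLU}(w^\top x - s)\}_s$ for every shift $s\in\{a_i,z_i,b_i : i\}$ simultaneously, so the entire network fits in three layers (two hidden, one output); the four-layer statement is then satisfied with a slack identity layer, and the subtlety you flag about where the activations sit evaporates. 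Second, the boundary tents $\phi_1,\phi_N$ need $a_1$ and $b_N$ chosen outside $[z_1,z_N]$, a triviality but worth a sentence.
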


Employing this theorem, we define the $x_i$s as all possible (distinct) rows derived from the output of Step 2, represented as:
\begin{align}
    \begin{pmatrix}
        \mcalX^t  & \mcalA_\text{point} \mcalX^t  & \mcalA_G \mcalX^t & \mcalA_{G^S} \mcalX^t \nonumber
    \end{pmatrix},
\end{align}
with each corresponding $y_i$ being the output of the parameterized function $f^t$ over the original input that is represented by this $x_i$. Given the finite nature of our graph set, the input set is also finite, i.e., $N$ is finite.

Hence, in light of \Cref{thm: memorization}, a 4-layer fully connected ReLU neural network capable of implementing $f^t$ exists. This network can be equivalently realized using four layers of RGCN. Specifically, by setting $\{\mathbf{W}^{t}_i \}_{{i=1}^M}$ to zero and utilizing only $\mathbf{W}^{t}_0$ for $t \in \{ 0, 1, 2, 3\}$, we effectively mimic the operation of this 4-layer fully connected ReLU network.

\textbf{Step 4:} Finally, employing this same logic, we can again use \Cref{thm: memorization} and map those outputs of step 3 back to 1-hot vectors.
\end{proof}

%\clearpage

\AdjacenciesAsProducts*
\begin{proof}
    Without loss of generality, consider the matrix $(I \otimes A)$ indexed at $\big((s, v), (s', v')\big)$, we have,
    \begin{equation}
        (I \otimes A) \big((s, v), (s', v')\big) \triangleq I(s, s') \cdot A(v, v').
    \end{equation}
    In this context, edges exist if and only if $s=s'$ and $v, v'$ are neighbors in the original graph $G$. This is in direct correspondence with the definition of $\mathcal{A}_G$, as outlined in~\Cref{eq: A_G}. The proof for $A \otimes I = \mathcal{A}_{G^S}$ follows a similar line of reasoning.
\end{proof}

%\clearpage

\SubgraphormerPE*
\begin{proof}
    By definition,
    \begin{equation}
    \label{eq: L_GG}
        \mathcal{L}_{G \square G} = \mathcal{D}_{G \square G} - \mcalA_{G \square G},
    \end{equation}
    where $\mathcal{D}_{G \square G}$ is defined as
    \begin{equation}
        \mathcal{D}_{G \square G} \triangleq \diag(\mcalA_{G \square G}\vec{1}_{n^2}),
    \end{equation}
    and recalling~\Cref{eq: A_GG},
    \begin{equation}
       \mathcal{D}_{G \square G} = \diag\Big( (A \otimes I + I \otimes A) \vec{1}_{n^2} \Big).
    \end{equation}
    
    Using tensor product rules, we have,
    \begin{align}
        \mathcal{D}_{G \square G} &\triangleq  \diag\Big(A \otimes I + I \otimes A)\vec{1}_{n^2}\Big) \nonumber \\
        &= \diag\Big(A \otimes I + I \otimes A)(\vec{1}_{n}\otimes \vec{1}_{n})\Big) \nonumber \\
        &= \diag\Big( A\vec{1}_{n} \otimes I\vec{1}_{n} + I\vec{1}_{n} \otimes A\vec{1}_{n} \Big) \nonumber \\
        &= D \otimes I + I \otimes D \label{eq: D_GG},
    \end{align}
    where $D = A\vec{1}_n$.
    
    Substituting~\Cref{eq: D_GG} and~\Cref{eq: A_GG} in~\Cref{eq: L_GG} we have,
        \begin{equation}
        \mathcal{L}_{G \square G} = D \otimes I + I \otimes D - A \otimes I + I \otimes A.
    \end{equation}
    Therefore, 
    \begin{align}
        \mathcal{L}_{G \square G} &= D-A \otimes I + I \otimes D-A \nonumber \\
        &= L \otimes I + I \otimes L.
    \end{align}
    Thus, the eigenspace of $\mathcal{L}_{G \square G}$ is given by $\{(v_i \otimes v_j, \lambda_i + \lambda_j) \}_{i,j = 1}^{n^2}$.
\end{proof}

%\clearpage

\ComplexityPE*
\begin{proof}
The proof stems directly from the arguments established in \Cref{prop: main prop}, which states that the computation of the eigendecomposition of $\mathcal{L}_{G \square G}$ primarily involves diagonalizing the Laplacian matrix of the original graph $G$, which incurs a computational cost of $\mathcal{O}(n^3)$. Additionally, to obtain the complete set of $n^2$ eigenvectors for the graph $G \square G$, an extra $\mathcal{O}(n^4)$ operations are necessary, resulting in a total complexity of $\mathcal{O}(n^4)$.

It is important to note that when the objective is to compute only $k$ eigenvectors, where $k \leq n$, the computational complexity is reduced to $\mathcal{O}(k \cdot n^2)$, since in this case we only need to obtain $k$ eigenvectors. This is congruent with the complexity involved in calculating the same number of eigenvectors for the original graph $G$, which is also $\mathcal{O}(k \cdot n^2)$, for example by applying the Lanczos Algorithm~\citep{lanczos1950iteration} or the Implicitly Restarted Arnoldi Methods~\citep{lehoucq1998arpack,lee2009k}, as done by widely used libraries such as Scipy~\citep{2020SciPy-NMeth}. Therefore, our approach offers a computational advantage in scenarios where a subset of eigenvectors ($k \leq n$) suffices; which consist of most cases in practice.
\end{proof}

%\clearpage

\AdjacencyOfGK*
\begin{proof}
    The proof proceeds by induction.
    
    \textbf{Base Case:}
        For $k=2$, we must show that $\mathcal{A}_{G^{\square^2}} \triangleq \mathcal{C}^2(A)$.
        
        From \Cref{cor: adjacency of cartesian product}, we have:
        \begin{equation}
        \label{eq: base A_G2}
            \mathcal{A}_{G^{\square^2}} \triangleq \mathcal{A}_{G \square G} = A \otimes I_n + I_n \otimes A.
        \end{equation}

        Using \Cref{eq: Recursive Cartesian} and \Cref{eq: Cartesian Base Case}, we get:
        \begin{equation}
        \label{eq: base C2}
            \mathcal{C}^2(A) = \mcalC^1(A) \otimes I_n + I_n \otimes A = A \otimes I_n + I_n \otimes A.    
        \end{equation}
        Thus, from \Cref{eq: base A_G2} and \Cref{eq: base C2} we obtain,
        \begin{equation}
            \mathcal{A}_{G^{\square^2}} = \mathcal{C}^2(A), 
        \end{equation}
        this verifies the base case.
        
    \textbf{Inductive Assumption:}
    Assume the proposition holds for some integer $k$, i.e.,
    \begin{equation}
    \label{eq: Inductive Assumption}
    \mathcal{A}_{G^{\square^k}} \triangleq \mathcal{C}^k(A),
    \end{equation}
    then we must demonstrate its validity for $k+1$:
    \begin{equation}
        \mathcal{A}_{G^{\square^{k+1}}} \triangleq \mathcal{C}^{k+1}(A). 
    \end{equation}
    \textbf{Inductive Step:} 
    Let $G_1 = G^{\square^{k}}$ and $G_2 = G$. Given that $G$ has $n$ nodes, $G_1$ has $n^k$ nodes, and $G_2$ has $n$ nodes. Recalling \Cref{eq: A G1 G2}, we deduce:
    \begin{equation}
    \label{eq: Inductive Step Application}
        \mathcal{A}_{G^{\square^{k+1}}} = \mathcal{A}_{G^{\square^{k}} \square G} = \mathcal{A}_{G_1 \square G_2} =
        \mathcal{A}_{G^{\square^{k}}} \otimes I_n + I_{n^k} \otimes A,
    \end{equation}
    where $\mathcal{A}_{G^{\square^{k}}}$ is the adjacency of $G_1$ and $A$ is the adjacency of $G_2$.

    Applying the inductive assumption from \Cref{eq: Inductive Assumption} to \Cref{eq: Inductive Step Application}, we obtain:
    \begin{equation}
    \label{eq: Final Inductive Step}
        \mathcal{A}_{G_1 \square G_2} = \mathcal{C}^{k}(A) \otimes I_n + I_{n^k} \otimes A.
    \end{equation}
    The right-hand side of \Cref{eq: Final Inductive Step} corresponds exactly to the definition of $\mathcal{C}^{k+1}$ as specified in \Cref{eq: Recursive Cartesian}.

    Hence, we conclude that:
    \begin{equation}
        \mathcal{A}_{G^{\square^{k+1}}} = \mathcal{A}_{G^{\square^{k}} \square G} = \mathcal{A}_{G_1 \square G_2} = \mathcal{C}^{k+1}(A).
    \end{equation}
    This concludes the proof.
\end{proof}

%\clearpage

\AdjacencyOfGKClosedForm*
\begin{proof}
    We will prove by induction.
    
    \textbf{Base Case:} The base case, for $\mcalK=2$, is obtained straight-forwardly from the definition of $\mcalA^k_{\mcalK}$, recalling \Cref{eq: A^k_K}. Formally, by the definition of $\mcalC^k$ -- \Cref{def: Cartesian operator}, we have,
    \begin{equation}
        \mcalC^2(A) = A \otimes I_n + I_n \otimes A.
    \end{equation}
    We note that,
    \begin{align}
        \mcalA^0_2 &= A \otimes I_n, \\
        \mcalA^1_2 &= I_n \otimes A,
    \end{align}
thus, it holds that,
\begin{align}
    \mcalC^2(A) &= A \otimes I_n + I_n \otimes A \nonumber \\
    &= \mcalA^0_2 + \mcalA^1_2 \nonumber \\
    &= \sum_{k=0}^1 \mcalA^k_2,
\end{align}
which completes the base case.

    \textbf{Inductive Assumption:} Assume the proposition holds for some integer $\mcalK$, i.e.,
    \begin{equation}
    \label{eq: Inductive Assumption 2}
    \mcalC^{\mcalK}(A) =\sum_{k = 0}^{\mcalK-1} \mcalA^k_{\mcalK},
    \end{equation}
    then we must demonstrate its validity for $\mcalK+1$:
    \begin{equation}
    \mcalC^{\mcalK+1}(A) =\sum_{k = 0}^{\mcalK} \mcalA^k_{\mcalK+1}.
    \end{equation}

    \textbf{Inductive Step:} By the definition of the Cartesian operator -- \Cref{def: Cartesian operator}, it holds that,
    \begin{equation}
    \label{eq: C^K+1 proof}
        \mcalC^{\mcalK+1}(A) = \mathcal{C}^{\mcalK}(A) \otimes I_n + I_{n^\mcalK} \otimes A.
    \end{equation}
    Substituting \Cref{eq: Inductive Assumption 2} to \Cref{eq: C^K+1 proof} we obtain,
    \begin{equation}
        \mcalC^{\mcalK+1}(A) = \Big( \sum_{k = 0}^{\mcalK-1} \mcalA^k_{\mcalK} \Big) \otimes I_n + I_{n^\mcalK} \otimes A.
    \end{equation}
    Since the tensor product operator, $\otimes$, is linear, and recalling the definition of \Cref{eq: A^k_K} it holds that,
    \begin{align}
        \mathcal{C}^{\mathcal{K}+1}(A) &= \Big( \sum_{k = 0}^{\mathcal{K}-1} \overbrace{ \mathcal{A}^k_{\mathcal{K}}  \otimes I_n}^{ \triangleq \mathcal{A}^k_{\mathcal{K}+1}} \Big) + \overbrace{ \triangleq I_{n^\mathcal{K}} \otimes A }^{ \triangleq \mathcal{A}^\mathcal{K}_{\mathcal{K}+1}} \nonumber \\
        &= \sum_{k = 0}^{\mathcal{K}-1} \mathcal{A}^k_{\mathcal{K}+1} + \mathcal{A}^\mathcal{K}_{\mathcal{K}+1} \nonumber \\
        &= \sum_{k = 0}^{\mathcal{K}} \mathcal{A}^k_{\mathcal{K}+1}.
    \end{align}
    This completes the proof.
\end{proof}

%\clearpage

\AdjacencyOfGKIsValid*
\begin{proof}
Since for any $k$  the matrix given by $\mcalA^{k}_{\mcalK}$ is binary, it is sufficient to show that given $k, k'$, such that $k \neq k'$, it holds that,
    \begin{equation}
        \langle \mcalA^{k}_{\mcalK}, \mcalA^{k'}_{\mcalK} \rangle_F = 0,
    \end{equation}
where by $\langle \cdot, \cdot \rangle_F $ we denote the Frobenius inner product.

Recalling \Cref{def: A_k^K}, we have,
\begin{align}
    \langle \mcalA^{k}_{\mcalK}, \mcalA^{k'}_{\mcalK} \rangle_F &= \left\langle I \otimes I \otimes \cdots \otimes A \otimes \cdots \otimes I, I \otimes \cdots \otimes A \otimes \cdots \otimes I \otimes I \right\rangle_F \nonumber \\
    &= \text{Tr} \bigg( (I \otimes I \otimes \cdots \otimes A \otimes \cdots \otimes I ) \cdot (I \otimes \cdots \otimes A \otimes \cdots \otimes  I \otimes I)^T \bigg) \nonumber \\
    &=\text{Tr} \bigg( (I \otimes I \otimes \cdots \otimes A \otimes \cdots \otimes I ) \cdot (I^T \otimes \cdots \otimes A^T \otimes \cdots \otimes  I^T \otimes I^T) \bigg) \nonumber 
\end{align}
Using the the fact that $k \neq k'$, we obtain that,
\begin{align}
    \langle \mcalA^{k}_{\mcalK}, \mcalA^{k'}_{\mcalK} \rangle_F 
    &= \text{Tr} \bigg( II^T \otimes \cdots \otimes II^T \otimes \cdots \otimes II^T \otimes A I^T \otimes II^T \otimes \cdots \otimes II^T \otimes I A^T \otimes II^T \otimes \cdots \otimes II^T \bigg) \nonumber \\
    &=\text{Tr} \big( II^T \big) \cdot \text{Tr} \big( II^T \big) \cdot \ldots \cdot  \text{Tr} \big( A I^T \big) \cdot \text{Tr} \big( II^T \big) \cdot \ldots \cdot \text{Tr} \big( II^T \big) \cdot \text{Tr} \big( I A^T \big) \cdot \text{Tr} \big( II^T \big) \cdot \ldots \cdot \text{Tr} \big( II^T \big). \nonumber 
\end{align}
Since we assume no self loops in the original graph $G$, we know that the diagonal of $A$ and $A^T$ is zero, therefore, the Trace of both $ A I^T$ and $I A^T$ is zero. 
Thus, for any $k \neq k'$, it holds that,
\begin{equation}
    \langle \mcalA^{k}_{\mcalK}, \mcalA^{k'}_{\mcalK} \rangle_F = 0.
\end{equation}
\end{proof}

%\clearpage

\KTupleSubgraphormerPE*
\begin{proof}
 We are looking for the eigenvectors of $L_{G^{\square^k}}$, defined as
 \begin{equation}
 \label{eq: L_G box k}
     L_{G^{\square^k}} = D_{G^{\square^k}} - \mcalA_{G^{\square^k}},
 \end{equation}
 where $D_{G^{\square^k}}$ can be written as
\begin{align}
\label{eq: D_G^k}
    D_{G^{\square^k}} &\triangleq \diag{(\mcalA_{G^{\square^k}}\vec{1}_{n^k})} \nonumber \\
    &= \diag{ \Bigg( \sum_{k' = 0}^{\mcalK-1}  \mcalA^{k'}_{\mcalK} \vec{1}_{n^k} \Bigg) } \nonumber \\
    &= \sum_{k' = 0}^{\mcalK-1} \diag{\Bigg( \mcalA^{k'}_{\mcalK} \vec{1}_{n^k} \Bigg) },
\end{align}
where the last equality is obtained from the linearity of the $\diag$ operator.

Recalling~\Cref{def: A_k^K}, for a given $k'$, it holds that,
\begin{align*}
    \mcalA^{k'}_{\mcalK} \vec{1}_{n^k} &= (I_n \otimes I_n \otimes \ldots \otimes I_n \otimes A \otimes I_n \otimes \ldots \otimes I_n) \cdot ( \vec{1}_n  \otimes \vec{1}_n \otimes \ldots \otimes \vec{1}_n) \\
    &= I_n \vec{1}_n \otimes I_n \vec{1}_n \otimes \ldots I_n \vec{1}_n \otimes A \vec{1}_n \otimes I_n \vec{1}_n \otimes \ldots I_n \vec{1}_n \\
    &= \vec{1}_n \otimes \vec{1}_n \otimes \ldots \vec{1}_n \otimes A \vec{1}_n \otimes \vec{1}_n \otimes \ldots \otimes \vec{1}_n.
\end{align*}
Since the degree matrix $D$ of the original graph $G$ satisfies $D = \diag{A\vec{1}_n} $ we obtain,
\begin{equation}
\label{eq: diag D}
    \diag{ \bigg( \mcalA^{k'}_{\mcalK} \vec{1}_{n^k} \bigg)} = I_n \otimes I_n \otimes \ldots \otimes I_n \otimes D \otimes I_n \otimes I_n \otimes \ldots \otimes I_n.
\end{equation}
For simplicity, we will extend \Cref{def: A_k^K} to any calligraphic matrix; i.e.,
we define,
\begin{equation}
\label{eq: D_K^k}
    \mathcal{D}^{k'}_\mcalK \triangleq I_n \otimes I_n \otimes \ldots \otimes I_n \otimes D \otimes I_n \otimes I_n \otimes \ldots \otimes I_n,
\end{equation}
such that the matrix $D$ occupies the $k'$-th slot.

Substituting \Cref{eq: D_K^k} and \Cref{eq: diag D} to \Cref{eq: D_G^k}, we obtain,

\begin{equation}
\label{eq: D_G^k simplified}
    D_{G^{\square^k}} = \sum_{k' = 0}^{\mcalK-1} \mathcal{D}^{k'}_\mcalK.
\end{equation}

Plugging \Cref{eq: D_G^k simplified} and \Cref{eq: A_G^box k} to \Cref{eq: L_G box k}, 
\begin{equation}
    L_{G^{\square^k}} = \sum_{k' = 0}^{\mcalK-1} \bigg( \mathcal{D}^{k'}_\mcalK - \mathcal{A}^{k'}_{\mathcal{K}} \bigg).
\end{equation}

For a given $k'$, recalling \Cref{def: A_k^K} and \Cref{eq: D_K^k} it holds that,
\begin{align}
    \mathcal{D}^{k'}_\mcalK - \mathcal{A}^{k'}_{\mathcal{K}} &\triangleq  I_n \otimes I_n \otimes \ldots \otimes I_n \otimes D \otimes I_n \otimes I_n \otimes \ldots \otimes I_n - I_n \otimes I_n \otimes \ldots \otimes I_n \otimes A \otimes I_n \otimes I_n \otimes \ldots \otimes I_n \nonumber \\
    &= I_n \otimes I_n \otimes \ldots \otimes I_n \otimes L \otimes I_n \otimes I_n \otimes \ldots \otimes I_n \nonumber \\
    &\triangleq \mathcal{L}^{k'}_{\mathcal{K}}.
\end{align}
Thus, 
\begin{equation}
    L_{G^{\square^k}}  = \sum_{k' = 0}^{\mcalK-1} \mathcal{L}^{k'}_{\mathcal{K}}, 
\end{equation}
which states that the eigenvector-eigenvalue pairs are given by $(v_{i_1} \otimes v_{i_2} \otimes \ldots \otimes v_{i_k}, \lambda_{i_1} + \lambda_{i_2} + \ldots + \lambda_{i_k})$, for any $i_j \in [n]$, where $j \in [k]$; given that $\{(v_i, \lambda_i) \}_{i=1}^{n}$ represents the eigenvectors and eigenvalues of the Laplacian matrix of the original graph $G$.
\end{proof} 
%\clearpage

\KTupleSubgraphormerPEEfficiency*
\begin{proof}
    We begin by diagonalizing the Laplacian matrix of the graph $G$. This process requires $\mathcal{O}(n^3)$ time complexity.

    Following this, we utilize the results in \Cref{prop: gen main prop} to determine the eigenvectors and eigenvalues of the Cartesian power of the graph, denoted as $G^{\square^k}$. The computation of these eigenvectors and their corresponding eigenvalues requires $\mathcal{O}(n^{2k})$ operations.

    Therefore, the overall computational complexity for this procedure is $\mathcal{O}(n^{2k} + n^3)$. However, since $k \geq 2$ the dominating term is $\mathcal{O}(n^{2k})$. Thus, we can conclude that the total complexity is $\mathcal{O}(n^{2k})$.
\end{proof}

\CatApproxProdPE*
\begin{proof}
The full embedding of a node $ v $ in the graph is defined as
\begin{align}
    \mathbf{p}_{v} \triangleq [U_{v,0}, \ldots, U_{v,n-1}].
\end{align}
The function $ F^{\text{prod}} : \mathbb{R}^{2n} \rightarrow \mathbb{R}^{n^2}$ is defined as follows:
\begin{align}
    F^{\text{prod}} (\mathbf{p}_{s} \parallel \mathbf{p}_{v})_i = \texttt{flatten}(\mathbf{p}_{s} \cdot \mathbf{p}_{v}^T).
\end{align}

This function is defined over the compact set $ [-1,1]^{2n} $, and maps to the compact set $ [-1,1]^{n^2} $. Thus is due to the fact that the entries of $ \mathbf{p}_{v'} $ for any node $ v' $ are components of normalized eigenvectors.

Therefore, the function $ F^{\text{prod}} $ is continuous and defined over a compact set, implying that it can be approximated by a \texttt{MLP} via the Universal Approximation Theorem~\cite{hornik1991approximation,cybenko1989approximation} acting on the input $ (\mathbf{p}_{s} \parallel \mathbf{p}_{v}) $. This completes the proof.

\end{proof}

%\clearpage

\section{\texttt{Subgraphormer} Figure}
\label{app: Subgraphormer Figure}
We present a detailed figure illustrating the architecture of our \texttt{Subgraphormer} model -- \Cref{fig: architecture_full}. We begin with the construction of the Product Graph. Subsequently, we apply product graph PE and node-marking, as detailed in Section \ref{sec: Subgraph Positional Encoding}. This step is followed by stacking of $K$ Subgraph Attention Blocks (see Section \ref{sec: Subgraph Attention Block}). Finally, the process concludes with the integration of a pooling layer that returns a graph representation.

\begin{figure*}[t]
    \centering
    \includegraphics[width=\textwidth]{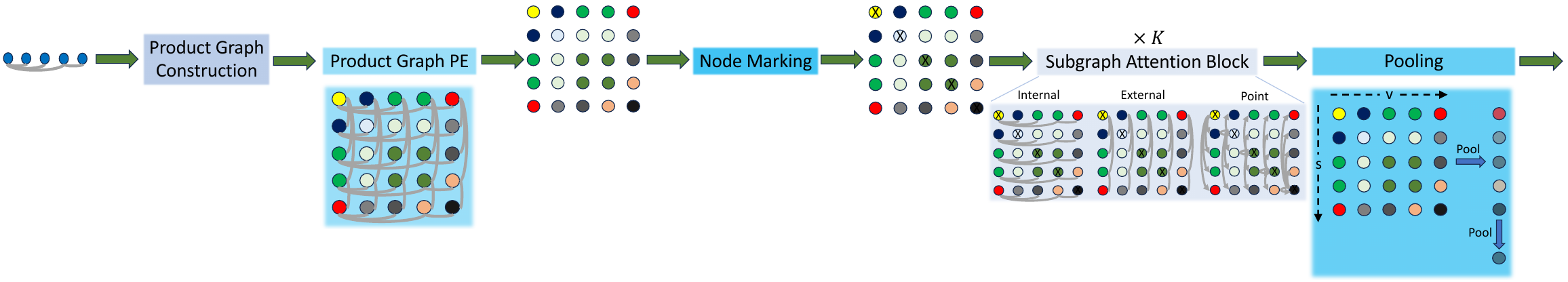}
    \caption{A deep overview of \texttt{Subgraphormer}. Given an input graph, the process begins with the construction of the product graph. This is followed by the computation of product graph PE, illustrated through varying colors on the nodes. Node-Marking is then implemented, depicted by black exes 
    (\begin{tikzpicture} 
    \draw[line width=0.3mm, rotate=45] (0,0.1) -- (0,-0.1);
    \draw[line width=0.3mm, rotate=45] (0.1,0) -- (-0.1,0);
    \end{tikzpicture})
    on diagonal nodes. The process continues with the application of $K$ Subgraph Attention Blocks (SABs), characterized by three distinct connectivities: Internal, External, and Point. The final stage involves the pooling layer, which initially aggregates data across the node dimension to form subgraph representations, and subsequently across the subgraph dimension.}
    \label{fig: architecture_full}
\end{figure*}

\end{document}